\documentclass[11pt]{article} 

\usepackage{graphicx} 
\usepackage{natbib}  
\usepackage{caption} 
\usepackage{algorithm}
\usepackage{algorithmic}

\usepackage{booktabs}

		\usepackage{authblk}
		\usepackage{amsfonts}
		\usepackage{nicefrac}
		\usepackage{microtype}      
		\usepackage{xcolor}         
		\usepackage{amsmath}
		\usepackage{amssymb}
		\usepackage{mathtools}
		\usepackage{amsthm}
		\usepackage{bm}
		\usepackage{enumitem}
		\usepackage{subcaption}
		\usepackage{multirow}
		\usepackage[T1]{fontenc}
		\usepackage[utf8]{inputenc}

		\graphicspath{{./images/}}
		
		\theoremstyle{plain}
		\newtheorem{theorem}{Theorem}[section]
		
		\newtheorem{lemma}[theorem]{Lemma}
		
		\theoremstyle{definition}
		
		\newtheorem{assumption}[theorem]{Assumption}

\begin{document}
			
			\title{\bf BREAD: Baseline-Referenced Explanations for Anomaly Diagnosis}
			\author{Jiaqi Qiu$^{a}$\thanks{Jiaqi Qiu is with Department of Business Analytics, Amsterdam Business School, University of Amsterdam, Amsterdam, The Netherlands (e-mail: j.qiu@uva.nl).}, Rob Goedhart$^a$, Jannis Kurtz$^a$ and Inez M. Zwetsloot$^a$ \\
				$^a$ Department of Business Analytics, Amsterdam Business School,\\ University of Amsterdam, Amsterdam, The Netherlands
			}

			\date{}
			\maketitle
			
			\begin{abstract}
				Artificial Intelligence (AI)-based prospective anomaly detection methods are increasingly deployed in high-dimensional and nonlinear settings. Among these approaches, AI-based statistical process monitoring (SPM) is widely used, providing a structured framework for prospective monitoring. 
				Once an anomaly is detected, a diagnosis method is needed to identify the features driving the flagged observation away from normal behaviour. Traditional SPM diagnosis methods are typically designed for specific detection models and cannot be directly applied to AI-based methods. Model-agnostic explainable AI (XAI) offers a general framework for feature relevance explanation. However, existing methods suffer from scalability limitations or assign relevance to noise features, reducing diagnosis accuracy. 
				We propose a scalable, baseline-referenced diagnosis method that uses both the anomalous observation and normal baseline information. 
				We provide mathematical guarantees that under a mean-shift anomaly setting, the proposed method achieves higher faithfulness in detecting the features causing the anomaly compared to LIME. Simulation studies and a real-world case study validate the effectiveness of the proposed method and show that it generates more faithful and accurate diagnosis results for AI-based prospective anomaly detection methods.
			\end{abstract}

			\section{Introduction}
			Prospective anomaly detection in sequential data is essential for identifying abnormal process behaviour before it leads to system failures. Statistical Process Monitoring (SPM) provides a framework for this problem using control charts \citep{zwetsloot2023monitoring}. It has been widely used in industry \citep{colosimo2024statistical}, services \citep{tsung2008statistical}, and healthcare \citep{woodall2006use}. More recently, AI-based SPM methods have been developed for complex industrial systems \citep{lee2019process, cacciarelli2022novel, maged2024variational, qiu2025lstm}. However, in practice, detecting an anomaly is only the first step. Practitioners also need diagnosis methods that explain how much each feature contributes to the detected signal, so that appropriate actions can be taken.
			
			Explainable AI (XAI) provides tools for post-hoc explanation of model behaviour \cite{adadi2018peeking, arrieta2020explainable, dwivedi2023explainable}. In prospective anomaly detection, especially during online monitoring, the diagnosis task is to quantify how much each feature contributed to a detected signal. This requires methods that are computationally efficient and local. Moreover, explanations need to be model-agnostic, as different black-box models may be used for monitoring. These requirements make LIME \cite{ribeiro2016should} an appealing choice.
			
			However, existing LIME techniques are not optimised for signal diagnosis. Prior work has noted general limitations of LIME that are essential to signal diagnosis, including instability and limited fidelity \cite{zafar2019dlime, zhang2019should, knab2025lime}. More fundamentally, for the signal diagnosis setting, these methods explain a model output in the local neighbourhood of the queried instance. For anomalies lying far from the usual behaviour, local perturbations may remain largely within abnormal regions of the feature space. Consequently, these explanations are less informative for signal diagnosis, as they only describe the model behaviour locally around the anomaly and explain abnormal variation. In contrast, the goal of signal explanation is to explain how the variables of an identified signal drive the process away from normal operation and into an anomalous state. Thus, signal diagnosis requires explaining a local instance from a global reference perspective, presenting the normal behaviour.
			
			\begin{figure*}[h]
				\centering
				\includegraphics[width=\linewidth]{Framework.pdf}
				\caption{Real case study from escalator monitoring using an AI-based prospective anomaly detection method. 
					(a) Minute-wise escalator energy usage data from multiple escalators. 
					(b) Observed unexpected shutdowns.
					(c) VAE-LSTM-based $T^2$ control chart \citep{maged2024variational}, where anomalies align with a period of observed unexpected escalator shutdowns.
					(d) Ordered LIME diagnosis.
					(e) Ordered diagnosis by BREAD.
					(f) Comparison of results, showing that the diagnosis by BREAD matches the ground truth.} 
				\label{fig:framework}
			\end{figure*}
			
			To highlight the limitations of LIME, we introduce a real monitoring scenario. We consider an escalator monitoring system \citep{zwetsloot2023remaining}, as shown in Figure~\ref{fig:framework}(a). Monitoring the energy data in real time allows engineers to assess the operational status of the escalator system. A possible anomaly, for instance, an unexpected shutdown period depicted in Figure~\ref{fig:framework}(b), is expected to be detected by the monitoring model. Figure~\ref{fig:framework}(c) shows that the applied monitoring method detects anomalies during the shutdown period. However, only detection is not enough for practitioners, as they also need to identify the escalators that caused the whole system to become anomalous. The ordered diagnosis results are illustrated in Figure~\ref{fig:framework}(d--e). As depicted in Figure~\ref{fig:framework}(f), LIME fails to assign high relevance to the observed shutdown escalators. Such a misdiagnosis may direct practitioners to take inappropriate actions. Thus, this paper addresses the challenge of identifying the variables that actually drive the system to abnormal conditions under real-time computational constraints.
			
			To address this, we propose a scalable model-agnostic explanation method, Baseline-Referenced Explanations for Anomaly Diagnosis (BREAD), for AI-based prospective anomaly detection that incorporates a reference point representing the in-control baseline information. In essence, BREAD modifies the LIME algorithm by introducing a novel sampling scheme and a sampling-aligned weighting kernel. The main idea is to fit a regression model on the weighted sampled data around both the anomalous observation and a reference normal point. By introducing the baseline information into the surrogate construction, the resulting explanations become sparser, concentrating relevance on anomaly related variables while assigning near-zero relevance to irrelevant features. In prospective anomaly detection, anomalous events are often caused by a limited subset of variables. This concentration can improve the faithfulness of the explanations because such explanations are better aligned with the variables that actually drive the anomalous signal. Meanwhile, the computational cost of BREAD is measured by the number of synthetic samples and the cost of the black-box model, and thus, BREAD remains computationally efficient. 
			
			For our method, we provide a theoretical analysis showing that, under certain assumptions that are both realistic and compatible with common monitoring settings, the proposed approach achieves improved faithfulness for signal diagnosis relative to LIME. The proposed method is evaluated both in simulation studies and a real case study. The simulations show the faithfulness, robustness of BREAD, and stability against sampling noise. The case study revisits the motivating example in Figure~\ref{fig:framework} from a deployment perspective, providing an empirical evaluation of diagnosis alignment with observed shutdown events and computational efficiency.
			
			Thus, we summarise our main contributions as follows:
			\begin{itemize}[]
				\item We propose BREAD, a scalable, model-agnostic explanation method that incorporates baseline normal information through a novel sampling scheme and a modified weighting kernel.
				\item We provide theoretical proofs and analysis showing that, given an identified anomaly, BREAD yields asymptotically sparse explanations, which in turn improve alignment with the ground truth under the stated assumptions.
				\item We validate the theoretical results through simulation studies and a real-world case study, demonstrating the effectiveness of the proposed method for black-box-based prospective anomaly detection.
			\end{itemize}
			
			The remainder of the paper is organized as follows. Section 2 reviews related work. Section 3 formalizes the problem, presents BREAD, and establishes its theoretical properties. Section 4 evaluates BREAD  through simulation studies and the real-world case study. Section 5 concludes the paper.
			
			\section{Related Work}
			A major family of methods for prospective anomaly detection in sequential process data is statistical process monitoring (SPM), which uses control charts to detect departures from normal process behaviour \cite{zwetsloot2023monitoring}. Early work focused on decomposing alarms from traditional control charts, especially Hotelling's $T^2$ chart under Gaussian assumptions, to identify the variables most responsible for an out-of-control (OOC) signal \cite{mason1995decomposition, li2008causation, kim2016adaptive}. Further diagnosis methods have been developed beyond the Gaussian setting, for example, multivariate binomial processes \cite{hou2023simple}. These methods are effective in finding a subset of variables driving a signal, but are tailored to specific chart structures, monitoring statistics, and distributional assumptions. By contrast, AI-based SPM methods typically apply complex embeddings to capture underlying process behaviour and extract representations from a latent space \citep{yu2019deep, yu2019robust, zhang2019gaussian, lee2019process, cacciarelli2022novel, maged2024variational}. Such methods have been shown to capture latent behaviour for process monitoring better \cite{cacciarelli2023hidden}. However, the latent-space representations are often difficult to translate into actionable information in the original space. 
			
			This motivates the use of explainable AI (XAI) methods that quantify feature relevance for black-box models. A large body of work has proposed feature-relevance methods such as saliency maps \cite{simonyan2013deep}, Layer-wise Relevance Propagation (LRP) \cite{bach2015pixel}, Grad-CAM \cite{selvaraju2016grad}, SHAP-based approaches \cite{lundberg2017unified}, and Integrated Gradients (IG) \cite{sundararajan2017axiomatic}. In SPM, prior work has applied feature-relevance XAI methods, including LRP \cite{agarwal2021explainability}, IG \cite{sipple2020interpretable, cacciarelli2022novel, bhakte2022explainable}, SHAP \cite{kim2021explainable,hwang2021sfd,choi2022explainable, baek2023failure, jang2023explainable, szelkazek2024semantic}. However, structured performance evaluation of these methods in SPM remains limited. In addition, these methods are either computationally expensive or model-specific. For model-specific methods, assumptions about gradients, architecture, or internal model structure limit their applicability \cite{adadi2018peeking}. In SPM settings, where the monitoring model may vary, model-agnostic explanation methods are therefore especially attractive. 
			
			Thus, among these feature-relevance methods, LIME \cite{ribeiro2016should} is particularly appealing because it is local, model-agnostic, and computationally efficient \cite{mishra2017local, rabold2018explaining, zafar2019dlime, otto2025coherent, jia2025limefldl}, and has been applied to alarm explanation in SPM by \citet{bhakte2023alarm}. However, prior work has also highlighted important limitations of LIME, including limited fidelity, instability due to stochastic neighbourhood sampling, and difficulties in domain-specific adaptation \cite{zafar2019dlime, zhang2019should, zhou2021s, tan2023glime,knab2025lime}. These issues are particularly critical in online SPM, where explanations must be both reliable and computationally efficient. Moreover, the diagnosis objective is not just to explain the model output at a single point, but to identify the variables that drive the process away from the in-control region.

			\section{Proposed Method}
			
			\subsection{Problem Description} \label{sec:ProblemDescr}
			Let $f: \mathbb{R}^{d} \rightarrow \mathbb{R}$ denote an AI-based anomaly detection model that maps an observation $\bm{t} \in \mathbb{R}^{d}$ to an anomaly score $s \in \mathbb{R}$, i.e., $s = f(\bm{t})$.
			A decision function $\theta_s:\mathbb{R}\rightarrow\{0,1\}$ maps the anomaly score $s$ to a binary label, where $0$ denotes normal behaviour and $1$ denotes anomalous behaviour. The corresponding normal region is defined as
			\(
			\mathcal{A}=\{\bm{t}\in\mathbb{R}^{d}:\theta_s(f(\bm{t}))=0\}.
			\)
			For example, in the SPM context, $s$ is often assumed to follow a certain distribution and a control limit is then set as a guideline of the distribution.
			
			During online monitoring, new data are denoted by $\bm{t}_j \in \mathbb{R}^{d}, j > N$. An observation at time $j^*$ is identified as an anomaly if $\bm{t}_{j^*} \notin \mathcal{A}$. When an anomaly is detected, the diagnosis method is expected to provide an explanation indicating how much each feature contributes to the signal.
			
			\subsection{Framework} \label{sec:framew}
			Recall that our proposed method BREAD approximates the behaviour of $f$ with respect to $\bm{t}_{j^*}$ and the reference normal point via a linear surrogate model whose coefficients $\bm{\beta}$ quantify the feature contribution from the reference normal point to the anomalous observation. 
			
			In practice, we estimate $\bm{\beta}$ by fitting a weighted ridge regression surrogate to synthetic samples drawn around both $\bm{t}_{j^*}$ and $\bm{t}_{\mathrm{ref}}$, where $\bm{t}_{\mathrm{ref}} \in \mathcal{A}$ denotes the reference normal point that represents the normal operating conditions. It is necessary to verify if $\bm{t}_{\mathrm{ref}} \in \mathcal{A}$ when choosing the mean of the normal training data as $\bm{t}_{\mathrm{ref}}$, as it is not guaranteed for nonlinear or non-convex normal regions.
			After estimating the coefficients $\hat{\bm{\beta}}$, we report the top $K$ features according to $|\hat{{\beta}}_i|$ as the explanation for the detected signal.

			Synthetic samples are generated by adding random perturbations from a distribution, e.g., $\mathcal{N}_d (0, \Sigma)$, to both $\bm{t}_{j^*}$ and $\bm{t}_{\mathrm{ref}}$. 
			Through this generation procedure, we obtain the sampled set $\mathcal{Z} = \{ \bm{z}_i \}^n_{i=1}$ with $n$ sampled points and the corresponding matrix $Z \in \mathbb{R}^{n \times d}$, whose $i$-th row is $\bm{z}_i^\intercal$.
			For notational convenience, assume that $n$ is even, with the first $\frac{n}{2}$ points sampled around $\bm{t}_{j^*}$ and the remaining $\frac{n}{2}$ points sampled around $\bm{t}_{\mathrm{ref}}$.
			We then obtain the corresponding outputs $y_i=f(\bm{z}_i)$, and define $\bm{y} = (y_1, \ldots, y_n)^{\intercal} \in \mathbb{R}^n$. 
			
			After sampling, each sampled point $\bm{z}_{i}$ is weighted by a Gaussian-type kernel,
			\begin{equation}\label{eq:weight}
				\pi_i = \pi_{\bm{t}_{j^*},\bm{t}_{\mathrm{ref}}} ( \bm{z}_i ) = \exp \left( - \frac{d^2_{\bm{t}_{j^*},\bm{t}_{\mathrm{ref}}}(\bm{z}_i)}{\sigma^2} \right),
			\end{equation}
			where $\sigma$ is the kernel width controlling the weights, and $d_{\bm{t}_{j^*},\bm{t}_{\mathrm{ref}}}$ denotes the distance of the sampled data with respect to the anomalous observation $\bm{t}_{j^*}$ and the reference point $\bm{t}_{\mathrm{ref}}$.
			Specifically, the distance is defined as
			\begin{equation}\label{eq:dist}
				d_{ \bm{t}_{j^*}, \bm{t}_{\mathrm{ref}} } (\bm{z}_i) = \sqrt{{\| \bm{z}_i - \bm{t}_{j^*} \|}_2 \cdot {\| \bm{z}_i - \bm{t}_{\mathrm{ref}} \|}_2}.
			\end{equation}
			This distance assigns large weights to samples that are close to either the detected anomaly or the reference normal point. As a boundary case, when $\bm{t}_{j^*}=\bm{t}_{\mathrm{ref}} $, \eqref{eq:weight} and \eqref{eq:dist} imply that the kernel used in BREAD becomes a one-point weighting, which is analogous to the weighting used in vanilla LIME. Figure~\ref{fig:weightheatmap} compares the weight distribution of both kernels.
			
			\begin{figure}[!h]
				\centering
				\begin{subfigure}{0.48\columnwidth}
					\centering
					\includegraphics[width=\columnwidth]{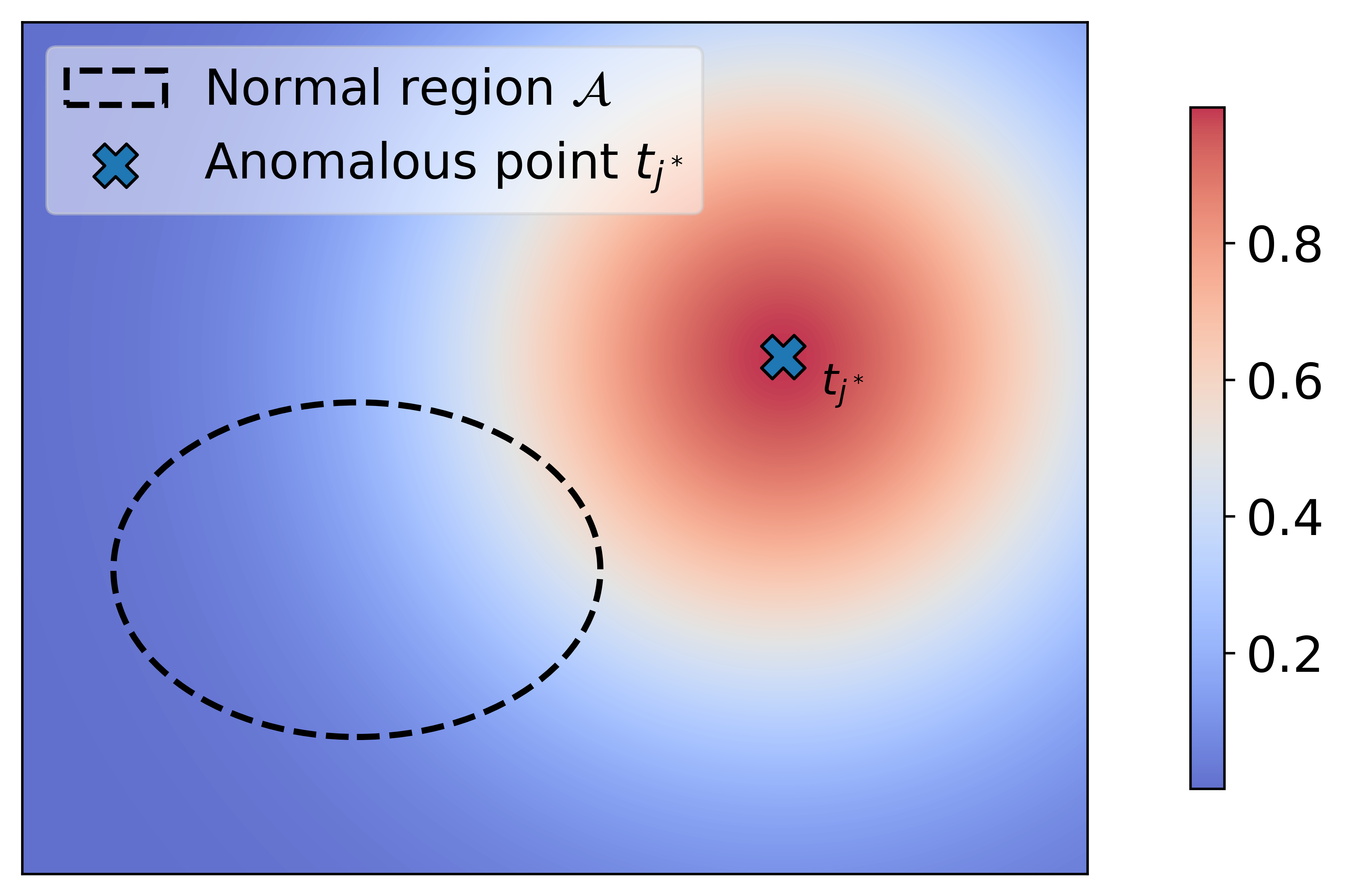}
					\caption{}
					\label{fig:withoutref}
				\end{subfigure}
				\hfil
				\begin{subfigure}{0.48\columnwidth}
					\centering
					\includegraphics[width=\columnwidth]{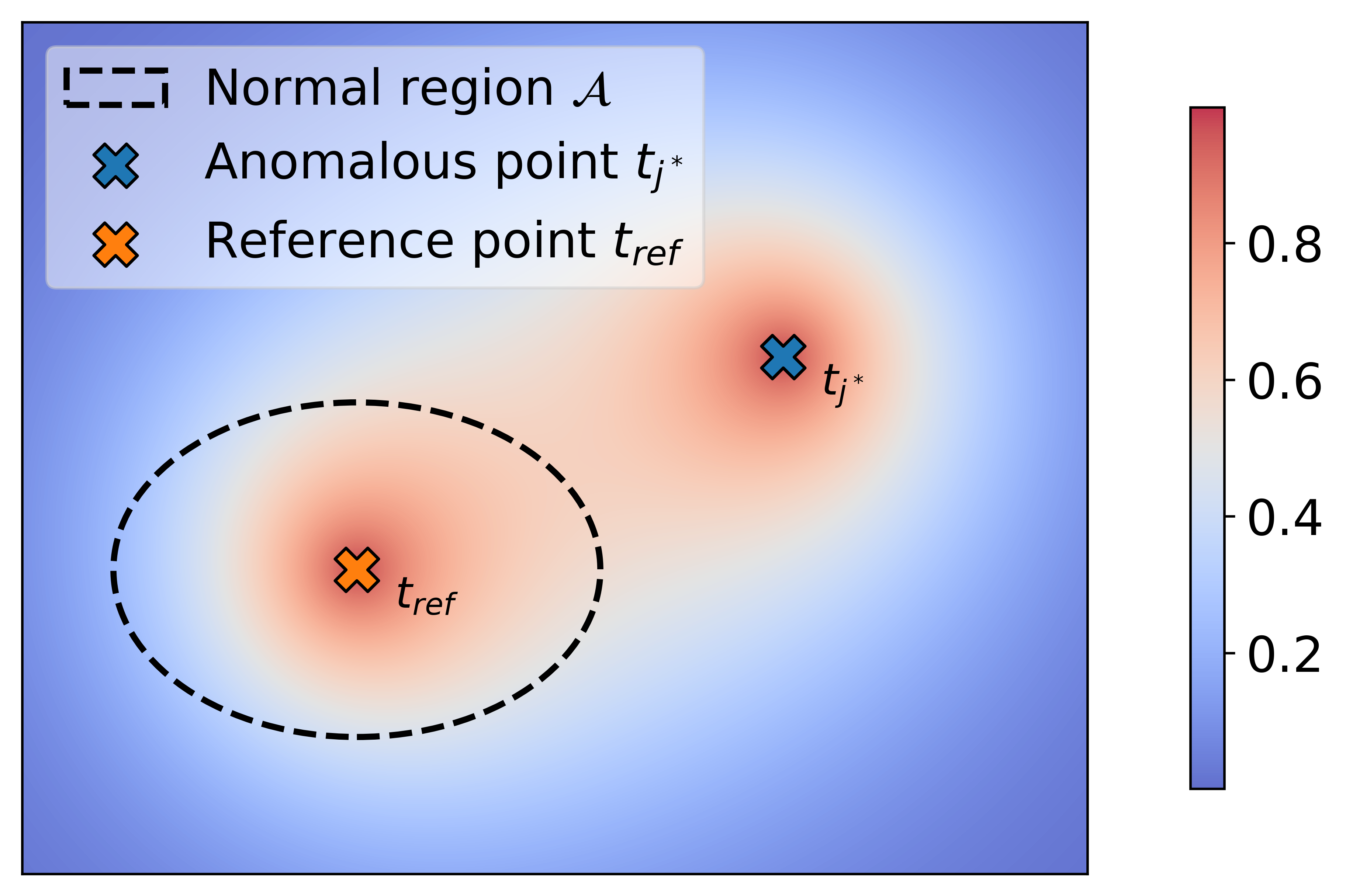}
					\caption{}
					\label{fig:withref}
				\end{subfigure}
				\caption{
					Heat maps of weights under the same kernel width $\sigma$.
					(a) Weighting using the anomalous point only. Weights are higher when closer to $\bm{t}_{j^*}$.
					(b) Weighting using both the reference point and the anomalous point. Weights are higher when either near $\bm{t}_{\mathrm{ref}}$ or $\bm{t}_{j^*}$.
				}
				\label{fig:weightheatmap}
			\end{figure}
			
			We estimate $\hat{\bm{\beta}}$ by the following weighted loss function, 
			\begin{equation*}
				\begin{aligned}
					L( f, \bm{\beta},\beta_0,  \Pi ) & =  \sum_{i=1}^n  \pi_{i}( \bm{z}_i ) \left({\bm{\beta}}^{\intercal} \bm{z}_i + {\beta_0} - f ( \bm{z}_i ) \right) ^{2}\\
					& = ( Z {\bm{\beta}} +{\beta_0 \bm{1}} - \bm{y})^{\intercal} \Pi ( Z {\bm{\beta}} + {\beta_0 \bm{1}}  - \bm{y}),
				\end{aligned}			
			\end{equation*}
			where $ \Pi = \text{diag} ( \pi_{1},\ldots \pi_n)$ denotes the diagonal matrix of sample weights for each $\bm{z}_i$, $\beta_{0}$ is the intercept term, $\bm{1}$ is a vector of ones with length $n$, and $\bm{y} \in \mathbb{R}^{n}$ stacks the monitoring model outputs $y_i =  f( \bm{z}_i )$.
			Using the weighted sampled data, we fit a regression model by minimising the regularised loss,
			\begin{equation}\label{eq:optprob}
				\min_{ \bm{ \beta} \in \mathbb{R}^{d} , \, \beta_{0} \in \mathbb{R} }  \, \frac{1}{2} ( Z {\bm{\beta}} +{\beta_0 \bm{1}} - \bm{y})^{\intercal} \Pi ( Z {\bm{\beta}} + {\beta_0 \bm{1}} - \bm{y})
				+ \frac{\lambda}{2}  {\bm{\beta}}^{ \intercal} \bm{I}_{d} {\bm{\beta}},
			\end{equation}
			where $\frac{1}{2}$ is included to simplify the derivatives, $\lambda$ is a positive penalty weight that shrinks the coefficients toward zero, and $\bm{I}_{d} \in \mathbb{R}^{ d \times d }$ is an identity matrix. The weighted ridge surrogate linear model has the analytical solution (see Supplementary Appendix~B).
			\begin{equation}\label{eq:estimator}
				\hat{\bm{\beta}} = (A + \lambda \bm{I}_{d} )^{-1} \bm{b},
			\end{equation}
			where
			$A =  \left(Z^{ \intercal} \Pi  Z - \frac{Z^{ \intercal} \Pi \bm{1} \left(Z^{ \intercal} \Pi \bm{1}\right)^{\intercal}}{\sum_{i=1}^{n} \pi_i}\right),$ and 
			$\bm{b} = Z^{\intercal} \Pi \bm{y} -\frac{1}{\sum_{i=1}^{n} \pi_i}\left(Z^{ \intercal} \Pi \bm{1}\right)\left(\bm{1}^{\intercal} \Pi \bm{y}\right)$.
			From \eqref{eq:optprob}, since $\lambda>0$ and $\pi_i>0$, the weighted ridge objective is strictly convex. Therefore, the solution exists and is unique. This property ensures that the proposed surrogate explanation is well defined.
			The diagnosis for the identified anomaly $\bm{t}_{j^*}$ is then given by the indices of the $K$ largest components of $|\hat{\bm\beta}|$.

			\subsection{Analytical Properties}\label{sec:anaprop}
			This section discusses the analytical properties of BREAD, focusing on (i) asymptotic sparsity when an anomaly occurs in the feature space, and (ii) stability to perturbations.
			
			We next analyse whether the surrogate coefficients concentrate on the truly shifted features as the anomaly becomes farther from the normal reference point.
			Specifically, let the normal reference point be $\bm{t}_{\mathrm{ref}} \in \mathbb{R}^{d}$, and suppose an anomaly $\bm{t}_{j^*}$ is detected. Denote the shift magnitude $\delta = \|\bm{t}_{j^*} - \bm{t}_{\mathrm{ref}}\|_2 > 0$ and the direction $\bm{u} = (\bm{t}_{j^*} - \bm{t}_{\mathrm{ref}})/\delta$, yielding
			\begin{equation}\label{eq:meanshf}
				\bm{t}_{j^*} = \bm{t}_{\mathrm{ref}} + \delta \bm{u}.
			\end{equation}
			Let $\mathcal{K} = \{\ell : u_\ell \neq 0\}$ denote the set of shifted features and $\mathcal{R}=\{1,\dots,d\}\setminus\mathcal{K}$, where $|\mathcal{K}| < d$.
			
			As described above, our method generates $\mathcal{Z}^{(\delta)}=\{\bm{z}_i^{(\delta)}\}_{i=1}^n$ and index sets $I_0^{(\delta)}$ and $I_1^{(\delta)}$ with $|I_0^{(\delta)}|=|I_1^{(\delta)}|=n/2$ containing samples around $\bm{t}_{\mathrm{ref}}$ and $\bm{t}_{j^*}$, respectively. 
			The following two assumptions formalise the sampling design and the output structure. 
			\begin{assumption}[Geometric sampled data assumption]\label{ass:samdata}
				There exist constants $c,\varepsilon,\rho>0$ (independent of $\delta$) and core index sets $J^{(\delta)}_0\subset I^{(\delta)}_0$ and $J^{(\delta)}_1\subset I^{(\delta)}_1$ with $|J^{(\delta)}_0|=|J^{(\delta)}_1|\geq 1$, such that 
				$ \| \bm{z}_i^{(\delta)}-\bm{t}_{\mathrm{ref}} \|_2 \le \frac{c}{\delta^2} $, for $i\in J^{(\delta)}_0,$ and $ \| \bm{z}_i^{(\delta)}-\bm{t}_{j^*} \|_2 \le \frac{c}{\delta^2} $, for $i\in J^{(\delta)}_1.$
				Furthermore, for the non-core points, 
				$ \varepsilon \le \|\bm{z}_i^{(\delta)}-\bm{t}_{\mathrm{ref}} \|_2 \le \rho$, for $i\in I^{(\delta)}_0\setminus J^{(\delta)}_0,$ and $\varepsilon \le \|\bm{z}_i^{(\delta)}-\bm{t}_{j^*} \|_2 \le \rho$, for $i\in I^{(\delta)}_1\setminus J^{(\delta)}_1.$
			\end{assumption}
			
			Assumption~\ref{ass:samdata} is consistent with a local perturbation scheme and guarantees at least one sampled point in each group has non-trivial weights. We also assume that $n$ is fixed.
			
			Next, we impose a structural assumption on the corresponding outputs. Write these outputs as $y_i^{(\delta)} = f(\bm{z}_i^{(\delta)})$, and define $\bar{y}_s^{(\delta)}$ as the group mean over $I_s^{(\delta)}$ for $s\in\{0,1\}$.
			Because the signal is caused by the shift, we assume that the black-box monitoring output has a persistent between group contrast $\Delta(\delta) := \bar{y}_1^{(\delta)} - \bar{y}_0^{(\delta)}$, while within-group fluctuations are bounded. To ensure that the dominant output change is the between-group $\Delta(\delta)$, we additionally assume that the black-box monitoring output is nearly constant on the core points near each centre.
			\begin{assumption}[Output structure]\label{ass:out}
				Assume there exist constants $B_y<\infty$, $c_\Delta, c_y >0$ independent of $\delta$ such that for each $s \in\{0,1\}$, each $i\in I_s^{(\delta)}$, $\bigl| {y}_i^{(\delta)}-\bar y_s^{(\delta)}\bigr| \le B_y,$ and $\Delta(\delta) \geq c_\Delta$.   
				Moreover, for $s \in\{0,1\}$ and $i \in J_s^{(\delta)}$,
				$|{y}_i^{(\delta)}-\bar y_s^{(\delta)} | \leq \frac{c_y}{\delta}.$
			\end{assumption}
			
			Note that Assumption~\ref{ass:out} is a monitoring-diagnosis design condition, which is similar to local Lipschitz continuity, and formalises the local output structure after a meaningful monitoring signal has been detected. Specifically, it requires the monitoring model to produce a persistent output contrast between the synthetic samples around the detected anomalous point and samples around the reference normal point.

			Assumptions~\ref{ass:samdata}--\ref{ass:out} are both practically satisfied in the monitoring context.
			Assumption~\ref{ass:samdata} is typically satisfied by selecting an appropriate sample scheme.
			For example, it can be achieved by including the two centre points $\bm{t}_{\mathrm{ref}}$ and $\bm{t}_{j^*}$ in the synthetic sample set. Then their kernel weights are equal to one. The remaining non-core samples can be generated from a truncated normal distribution around their centres.  
			Assumption~\ref{ass:out} is satisfied when a meaningful monitoring signal has been detected, as the anomalous and reference outputs show a persistent anomaly score difference. 
			Together, the two assumptions ensure that (i) each set contains at least one ``core'' point extremely close to its centre, and (ii) the anomaly score $f(\bm{t}_{j^*})$ exceeds the threshold due to the shift in $\mathcal{K}$, rather than the bias.

			Under the geometric design and output structure, we establish the following asymptotic sparsity guarantee for BREAD.
			\begin{theorem}[Asymptotic sparsity]\label{th:faith}
				Suppose the shift setup in \eqref{eq:meanshf} holds, the synthetic data matrix $Z^{(\delta)}$ satisfies Assumption~\ref{ass:samdata}, and the corresponding outputs, ${y}_i^{(\delta)} = f(\bm{z}_i^{(\delta)})$, satisfy Assumption~\ref{ass:out}.
				Let $\hat{\bm{\beta}}$ denote the estimator in \eqref{eq:estimator} and define $\epsilon  = \dfrac{\| \hat{\bm{\beta}}_{\mathcal{R}} \|_{2} }{\| \hat{\bm{ \beta }}_{ \mathcal{K} } \|_{2}}$.
				Then,
				\[{\epsilon} \rightarrow 0, \qquad \text{as} \, \delta \rightarrow \infty.\]
			\end{theorem}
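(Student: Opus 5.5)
The approach is to analyse the closed-form estimator \eqref{eq:estimator} as $\delta\to\infty$ and show that the weights concentrate on the core points. Once that happens, the weighted regression effectively reduces to a two-point problem along the direction $\bm{u}$, so its solution is proportional to $\bm{u}$, which is supported on $\mathcal{K}$.

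\textbf{Step 1: kernel weights.} For $i\in J_0^{(\delta)}$ the distance satisfies
\[
d^2(\bm{z}_i)\le \frac{c}{\delta^2}\Bigl(\delta+\frac{c}{\delta^2}\Bigr)=O(\delta^{-1}),
\]
so $\pi_i=1-O(\delta^{-1})$; the same holds on $J_1^{(\delta)}$. For a non-core point $i\in I_0\setminus J_0$ we have
\[
d^2(\bm{z}_i)\ge \varepsilon(\delta-\rho),
\]
so $\pi_i\le \exp(-\varepsilon(\delta-\rho)/\sigma^2)$, which decays exponentially in $\delta$; symmetrically on $I_1\setminus J_1$. Let $m=|J_0|=|J_1|$. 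Then the total weight is $W=\sum\pi_i=2m+O(\delta^{-1})$.

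\textbf{Step 2: expansion of $A$ and $\bm{b}$.} The centred weighted covariance $A$ equals $\sum_i\pi_i(\bm{z}_i-\bar{\bm{z}}_\pi)(\bm{z}_i-\bar{\bm{z}}_\pi)^\intercal$. Write core points as $\bm{t}_{\mathrm{ref}}+\bm{e}_i$ or $\bm{t}_{j^*}+\bm{e}_i$ with $\|\bm{e}_i\|\le c/\delta^2$. The two clusters then contribute
\[
A=\frac{m}{2}\,\delta^2\,\bm{u}\bm{u}^\intercal+E_A,
\]
where $E_A$ collects the cross terms $O(\delta\cdot\delta^{-2})$, the pure noise terms $O(\delta^{-4})$, the terms from the weight deviation $O(\delta^{-1})\cdot\delta^2=O(\delta)$ along $\bm{u}\bm{u}^\intercal$, and exponentially small non-core terms that are bounded even though those points sit at distance $O(\delta)$. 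Similarly $\bm{b}=\sum_i\pi_i(\bm{z}_i-\bar{\bm z}_\pi)(y_i-\bar y_\pi)$. The cluster contrast gives the leading term $\frac{m}{2}\delta\,\Delta_J\,\bm{u}$, where $\Delta_J$ is the core-mean output contrast, which is $\ge c_\Delta-2c_y/\delta$ by Assumption~\ref{ass:out}. The remainder is controlled as follows. Within-core output deviations are at most $c_y/\delta$ and multiply displacements of size $c/\delta^2$. The general bound $B_y$ appears only with exponentially small weights or with the $O(\delta^{-1})$ weight deviations. It is important to track the $\mathcal{R}$-components separately: every term proportional to $\bm{u}$ vanishes there, so $\bm{b}_{\mathcal{R}}=O(\delta^{-2})$ (or smaller), while $\|\bm{b}_{\mathcal{K}}\|\gtrsim \delta c_\Delta m/2$.

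\textbf{Step 3: solving the ridge system.} Write $A+\lambda\bm{I}=\lambda\bm{I}+\alpha\bm{u}\bm{u}^\intercal+E$ with $\alpha\asymp\delta^2$, where the part of $E$ off the $\bm{u}$ direction is $O(\delta^{-1})$. Apply Sherman–Morrison to the rank-one part together with a Neumann-series bound for $E$; the ridge term $\lambda\bm{I}$ keeps the inverse bounded by $1/\lambda$. This yields
\[
\hat{\bm\beta}=\frac{\langle\bm{u},\bm{b}\rangle}{\lambda+\alpha}\,\bm{u}+\frac{1}{\lambda}P_{\bm{u}^\perp}\bm{b}+o(\cdot).
\]
The first term has norm of order $\delta\cdot\delta^{-2}=\delta^{-1}$. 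For the $\mathcal{R}$ block, $P_{\bm u^\perp}\bm b$ restricted to $\mathcal{R}$ is $O(\delta^{-2})$, and the coupling through $E$ adds $O(\delta^{-1})\cdot O(\delta^{-1})$. Hence $\|\hat{\bm\beta}_{\mathcal{R}}\|=O(\delta^{-2})$ while $\|\hat{\bm\beta}_{\mathcal{K}}\|\gtrsim c_\Delta/\delta$, and therefore $\epsilon=O(\delta^{-1})\to0$.

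\textbf{Main obstacle.} The delicate part is Step 3: the error must be controlled in the $\mathcal{R}$ coordinates, not merely in norm. The leading $\mathcal{K}$-coefficient is only of order $\delta^{-1}$, because the large eigenvalue $\delta^2$ shrinks it. Perturbations of $A$ of size $O(\delta)$ in the $\bm{u}$ direction are harmless, but perturbations mixing $\bm{u}$ with $\mathcal{R}$ are only $O(\delta^{-1})$. The plan is to apply a block (Schur-complement) inversion in the basis $\{\bm u\}\oplus\bm u^\perp$ and verify that each off-diagonal block is $O(\delta^{-1})$. If this scaling turns out to be too tight, the fallback is to exploit the exponential decay of the non-core weights and the $\delta^{-2}$ core radius more aggressively.
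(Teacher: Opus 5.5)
Your argument is correct and reaches the paper's rate $\epsilon=O(\delta^{-1})$, but it decomposes along a different basis.

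\textbf{The paper's route.} The paper does the Schur-complement reduction in the coordinate split $\mathcal{K}\oplus\mathcal{R}$. This gives $\epsilon\le\lambda^{-1}\bigl(\|\bm{b}_{\mathcal{R}}\|_2/\|\hat{\bm{\beta}}_{\mathcal{K}}\|_2+\|M_{\mathcal{KR}}\|_2\bigr)$. It then proves three estimates: $\|\bm{b}_{\mathcal{R}}\|_2=O((\Delta+1)\delta^{-2})$, $\|M_{\mathcal{KR}}\|_2=O(\delta^{-1})$, and $\|\hat{\bm{\beta}}_{\mathcal{K}}\|_2=\Omega(\Delta/\delta)$. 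The last comes from $\|\bm{b}_{\mathcal{K}}\|_2\ge|\bm{u}_{\mathcal{K}}^\intercal\bm{b}_{\mathcal{K}}|=\Omega(\delta\Delta)$ together with the crude bound $\|S\|_2\le\|M_{\mathcal{KK}}\|_2=O(\delta^2)$.

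\textbf{Your route.} You split along $\operatorname{span}(\bm{u})\oplus\bm{u}^\perp$. The block inversion you hold in reserve as a fallback is in fact all you need. Set $\hat{\bm{\beta}}=\beta_u\bm{u}+\bm{\beta}_\perp$, $\alpha=\bm{u}^\intercal A\bm{u}$, $\bm{g}=P_{\bm{u}^\perp}A\bm{u}$ and $H=P_{\bm{u}^\perp}AP_{\bm{u}^\perp}\succeq 0$. The $\bm{u}^\perp$ equation gives $(\lambda\bm{I}+H)\bm{\beta}_\perp=\bm{b}_\perp-\beta_u\bm{g}$. Since $\bm{u}_{\mathcal{R}}=0$, you have $\|\hat{\bm{\beta}}_{\mathcal{R}}\|_2\le\|\bm{\beta}_\perp\|_2$ and $\|\hat{\bm{\beta}}_{\mathcal{K}}\|_2\ge|\bm{u}^\intercal\hat{\bm{\beta}}|=|\beta_u|$. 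Hence $\epsilon\le\lambda^{-1}\bigl(\|\bm{b}_\perp\|_2/|\beta_u|+\|\bm{g}\|_2\bigr)$. This is the exact analogue of the paper's inequality, with $(\bm{b}_{\mathcal{R}},M_{\mathcal{KR}})$ replaced by $(\bm{b}_\perp,\bm{g})$. So the scaling $\|\bm{g}\|_2=O(\delta^{-1})$ is not too tight, and no coordinate-wise Neumann tracking is needed.

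\textbf{What each buys.} Your basis puts the single large eigendirection in a $1\times1$ block, so the lower bound on $|\beta_u|$ is a scalar Schur-complement computation. It also gives more than the theorem: $\hat{\bm{\beta}}/\|\hat{\bm{\beta}}\|_2=\bm{u}+O(\delta^{-1})$. That is, you recover the shift direction, including the relative sizes of the coefficients inside $\mathcal{K}$, not just its support. The paper's split needs no spectral picture of $A$ and works directly with the blocks named in the statement. The price is that it only controls the $\mathcal{K}/\mathcal{R}$ separation.

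\textbf{A bookkeeping point to repair.} Assumption~\ref{ass:out} bounds $\Delta(\delta)$ only from below, so $\Delta(\delta)$ may grow with $\delta$. The contrast contribution to $\bm{b}_\perp$ is $\Delta(\delta)\sum_{i\in I_1^{(\delta)}}\pi_i\,P_{\bm{u}^\perp}(\bm{z}_i^{(\delta)}-\bm{\mu})$, which is of order $\Delta(\delta)\,\delta^{-2}$ and need not vanish. So $\|\bm{b}_{\mathcal{R}}\|_2$ and $\|\hat{\bm{\beta}}_{\mathcal{R}}\|_2$ are $O((\Delta(\delta)+1)\delta^{-2})$, not $O(\delta^{-2})$. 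Paired with the lower bound $c_\Delta/\delta$, this only gives $\epsilon=O(\Delta(\delta)/\delta)$. Use instead the matching bound $|\beta_u|\gtrsim\Delta(\delta)/\delta$. It follows from $\bm{u}^\intercal\bm{b}\gtrsim\delta\Delta(\delta)$, a correction term $\bm{g}^\intercal(\lambda\bm{I}+H)^{-1}\bm{b}_\perp=O((\Delta(\delta)+1)\delta^{-3})$, and a denominator of order $\delta^2$. The paper carries $\Delta(\delta)$ through every estimate for exactly this reason. With that change your bound gives $\epsilon=O(\delta^{-1})$.
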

			
			Theorem~\ref{th:faith} establishes that, as the anomaly moves further from the normal reference, 
			the sampling geometry and kernel weighting jointly decouple the shifted and unshifted feature blocks in the surrogate, driving relevance to concentrate on $\mathcal{K}$ and vanish on $\mathcal{R}$. The proof is provided in Appendix~C.
			
			In contrast, LIME does not enjoy this guarantee that the ratio of coefficients need not converge to zero, even as $\delta \rightarrow \infty$.
			To illustrate this comparison, consider a simple two-dimensional case with \(\bm{t}_{\mathrm{ref}}=(0,0)\) and \(\bm{t}_{j^*}=(\delta,0)\).
			In this case, the anomaly is generated only by the first feature, so \(\mathcal{K}=\{1\}\) and \(\mathcal{R}=\{2\}\).
			Suppose the black-box model is locally quadratic in the neighbourhood of $\bm{t}_{j^*}$, $f(z_1,z_2) = z_1^2 + \gamma z_1 z_2$ with $\gamma \neq 0$.
			Then $\nabla f (\delta,0) = (2\delta, \gamma\delta)^{\intercal}$. Thus, although the shift occurs only in the first feature, the local sensitivity of $f$ around $\bm{t}_{j^*}$ has a nonzero component in the second feature. A LIME explanation fitted locally around $\bm{t}_{j^*}$ may assign a nonzero coefficient to the non-shifted feature, and the coefficients ratio is approximately $\epsilon^L \approx |\gamma|/2$.
			By contrast, Theorem~\ref{th:faith} states that the proposed reference-based surrogate yields
			\(||\hat{\bm{\beta}}_{\mathcal{R}}||_2 / ||\hat{\bm{\beta}}_{\mathcal{K}}||_2 \rightarrow 0 \) as \(\delta \rightarrow \infty \).
			This shows that LIME's coefficient ratio is bounded away from zero by a constant $\epsilon^L \approx |\gamma|/2$, regardless of how large $\delta$ becomes, which is a fundamental limitation that BREAD overcomes.
			
			Finally, we study the stability of the estimator via its sensitivity to perturbations in $(A,\bm{b})$.
			\begin{lemma}[Sensitivity]\label{lm:sens}
				Consider the linear system $(A+ \lambda \bm{I}_d) \bm{\beta} = \bm{b}$, where $A$ and $\bm{b}$ are defined as in \eqref{eq:estimator}. If $\| (A+ \lambda \bm{I}_d)^{-1} \|_p \| \Delta (A+ \lambda \bm{I}_d) \|_p < 1$, then
				\begin{equation}
					\begin{aligned}
						\frac{\| \Delta \bm{\beta} \|_p}{\| \bm{\beta} \|_p}
						\le & \frac{\kappa_p((A+ \lambda \bm{I}_d))}{1 - \kappa_p((A+ \lambda \bm{I}_d))\, \dfrac{\| \Delta (A+ \lambda \bm{I}_d) \|_p}{\| (A+ \lambda \bm{I}_d) \|_p}}\\
						& \left( \frac{\| \Delta (A+ \lambda \bm{I}_d) \|_p}{\| (A+ \lambda \bm{I}_d) \|_p}+\frac{\| \Delta \bm{b} \|_p}{\| \bm{b} \|_p}\right),
					\end{aligned}
				\end{equation}
				where $\kappa_p((A+ \lambda \bm{I}_d)) = \| (A+ \lambda \bm{I}_d) \|_p \| (A+ \lambda \bm{I}_d)^{-1} \|_p$ is the ($p$-) condition number of the matrix $A+ \lambda \bm{I}_d$, while $\Delta\bm{\beta}, \Delta (A+ \lambda \bm{I}_d) $, and $\Delta \bm{b}$ denote the perturbation in $\bm{\beta}$, $A+ \lambda \bm{I}_d$, and $\bm{b}$, respectively.
			\end{lemma}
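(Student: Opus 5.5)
This is the classical perturbation bound for a nonsingular linear system, applied with $M:=A+\lambda\bm{I}_d$. Since $\lambda>0$ and $A$ is positive semidefinite (it is a weighted centred Gram matrix, $A=Z^\intercal\Pi^{1/2}(\bm{I}_n-P)\Pi^{1/2}Z$ with $P$ the orthogonal projector onto $\Pi^{1/2}\bm 1$), $M$ is symmetric positive definite and hence invertible. The plan is to compare the unperturbed system $M\bm\beta=\bm b$ with the perturbed system $(M+\Delta M)(\bm\beta+\Delta\bm\beta)=\bm b+\Delta\bm b$. Here $\|\cdot\|_p$ is the vector $p$-norm together with its induced operator norm, so submultiplicativity and $\|\bm{I}_d\|_p=1$ are available.

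\textbf{Step 1: invertibility of the perturbed matrix.} Write $M+\Delta M=M(\bm I_d+M^{-1}\Delta M)$. The hypothesis $\|M^{-1}\|_p\|\Delta M\|_p<1$ gives $\|M^{-1}\Delta M\|_p<1$. By the Neumann series, $\bm I_d+M^{-1}\Delta M$ is therefore invertible, with
\[
\|(\bm I_d+M^{-1}\Delta M)^{-1}\|_p\le \frac{1}{1-\|M^{-1}\|_p\|\Delta M\|_p}.
\]
Consequently the perturbed system has a unique solution and $\Delta\bm\beta$ is well defined.

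\textbf{Step 2: an equation for $\Delta\bm\beta$.} Subtract the unperturbed system from the perturbed one to get $(M+\Delta M)\Delta\bm\beta=\Delta\bm b-\Delta M\,\bm\beta$. Multiplying by $M^{-1}$ gives $(\bm I_d+M^{-1}\Delta M)\Delta\bm\beta=M^{-1}(\Delta\bm b-\Delta M\bm\beta)$. Inverting with the bound from Step 1 yields
\[
\|\Delta\bm\beta\|_p\le\frac{\|M^{-1}\|_p\big(\|\Delta\bm b\|_p+\|\Delta M\|_p\|\bm\beta\|_p\big)}{1-\|M^{-1}\|_p\|\Delta M\|_p}.
\]

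\textbf{Step 3: normalisation.} Divide by $\|\bm\beta\|_p$, which is assumed nonzero (equivalently $\bm b\neq\bm 0$) for the relative bound to make sense. Use $\|\bm b\|_p=\|M\bm\beta\|_p\le\|M\|_p\|\bm\beta\|_p$, i.e. $1/\|\bm\beta\|_p\le\|M\|_p/\|\bm b\|_p$, so that $\|\Delta\bm b\|_p/\|\bm\beta\|_p\le\|M\|_p\|\Delta\bm b\|_p/\|\bm b\|_p$. Then write $\|M^{-1}\|_p=\kappa_p(M)/\|M\|_p$ both in the numerator and in the denominator, where $\|M^{-1}\|_p\|\Delta M\|_p=\kappa_p(M)\|\Delta M\|_p/\|M\|_p$. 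This reproduces exactly the stated inequality.

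There is no real obstacle here. The only points needing care are the Neumann-series invertibility in Step 1 and the implicit assumption $\bm\beta\neq\bm 0$ in Step 3.
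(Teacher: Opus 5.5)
Your argument is correct, and it is exactly the classical Neumann-series perturbation bound the paper relies on: the paper writes out no proof and only remarks that the lemma ``follows from standard perturbation theory for linear systems.'' Your explicit requirement that $\bm{\beta}\neq\bm{0}$ (equivalently $\bm{b}\neq\bm{0}$) for the relative bound to make sense is a hypothesis the paper leaves implicit.
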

			
			Lemma~\ref{lm:sens} shows that the relative perturbation in the estimated coefficient vector is controlled by the condition number of $A+ \lambda \bm{I}_d$ and by the perturbations in $A$ and $\bm{b}$, which follows from standard perturbation theory for linear systems.
			Therefore, when the weighted ridge system is well conditioned, the generated explanation is stable with respect to small perturbations in the weighted sampled data, or model outputs.
			
			\section{Experiments}
			In this section, we evaluate BREAD using a synthetic simulation and a case study.
			\subsection{Simulation Study}
			The simulation is designed to mimic high-dimensional sequential data with temporal dependence, cross-sectional correlation, and seasonal variation. Although the monitoring model used in this study follows an SPM control chart framework, the evaluation focuses on the more general task of diagnosing detected anomalies. Specifically, we evaluate the proposed approach along three dimensions: (i) faithfulness measures alignment of diagnosis with the ground truth; (ii) robustness shows sensitivity of the explanation to observation perturbations; (iii) stability shows the consistency across resampled datasets.
			
			\subsubsection{Data Generation and Monitoring Model Setup}
			We first generate a baseline data set representing normal behaviour, drawn from a multivariate autoregressive model of order one (AR(1)) with block-structured cross-sectional correlation and an additive seasonal component. Specifically, we simulate $d=500$ features over $n=2000$ time points. The covariance structure $\Sigma_0$ is constructed to exhibit block-wise dependence (i.e., correlation blocks) to reflect clustered feature relationships. Variables are ordered by correlation blocks, yielding a clear block-diagonal pattern with strong within-block dependence ($>0.8$) and comparatively weaker between-block correlations (lighter off-diagonal regions, $<0.5$). Two kinds of seasonality with periods of $24$ and $168$ are induced by first-order Fourier components. The data generation procedure is detailed in Appendix~{D.2} of the Supplementary.
			
			To generate anomalous test data, we introduce feature shifts, defined as \eqref{eq:meanshf}, from $t=1600$. 
			These shifts define three anomaly scenarios, adapted from \citet{cacciarelli2022novel}, ranging from $-5$ to $5$ stepped by $1$, yielding 33 experimental conditions in total. The zero-shift setting is used as the reference case.
			
			\begin{itemize}
				\item \textbf{Scenario 1} \textit{Anomaly within one block}\\
				The means of all 53 features in the first correlation block are shifted by the given value.
				
				\item \textbf{Scenario 2} \textit{Anomaly across blocks}\\
				From each block, three features are randomly selected and their means are shifted, for a total of 45 features.
				
				\item \textbf{Scenario 3} \textit{Combination of Scenarios 1\&2}\\
				The means of all 53 features in the first block are shifted, and three features from each remaining block are also shifted, for a total of 95 features.
			\end{itemize}
			
			For prospective anomaly detection, we use an LSTM-based $T^2$ control chart model. The anomaly score $s$ is defined as the $T^2$ statistic computed on the LSTM residuals, while the threshold $\theta_s$ is set based on an $F$ distribution \citep{sullivan1996comparison} when given the Type I error rate $\alpha$. Furthermore, the empirical anomaly detection performance of this method is shown in Appendix~{D.3}.
			
			\subsubsection{Benchmark Methods}
			For faithfulness and robustness, we compare BREAD against vanilla LIME and a naive baseline that reduces KernelSHAP \citep{lundberg2017unified} to leave-one-out coalitions.
			Specifically, the vanilla LIME is the original LIME using the Gaussian kernel and the ridge regression as the surrogate model. Sample size, kernel bandwidth $\sigma$ and ridge penalty parameter $\lambda$ are fixed across all experiments for both LIME and BREAD. For the naive method, the reference point is selected in the same way as in BREAD. Detailed hyperparameters are discussed in Appendix~{D.4}.
			In addition, when comparing faithfulness, we introduce the expected faithfulness of detecting the features causing the error when randomly selecting $K$ features from $d$. For comparing sampling stability, we only consider sampling-based methods, i.e., BREAD and LIME.

			\subsubsection{Faithfulness}
			Faithfulness is assessed by comparing the top-$K$ features returned by each explanation method to the known ground-truth affected features (i.e., those in the shifted block) for the first detected anomalous observation per run. Specifically, we repeat the full simulation and detection pipeline across 1000 random seeds, extract the first outlier identified under each seed, and quantify agreement between the top-$K$ explanations and the ground truth by the cosine similarity $S_F$. 
			Given the set $\mathcal{K}$ of $K$ features driving the process to abnormal, and a set $D_r$ of the selected top-$K$ features, the faithfulness at run $r$ is defined as $S^{(r)}_F (\mathcal{K}, D_r) = |\mathcal{K} \cap D_r| / \sqrt{(|\mathcal{K}|\cdot|D_r|)}$. 
			
			The expected cosine similarity of a random $K$-feature selection is $K/d$ (Random Guess in Figure~\ref{fig:compar_result_lime}), following a hypergeometric distribution, which gives $\mathbb{E}(S_F) = 0.106, 0.09, 0.190$ for Scenarios 1--3, respectively.
			
			\begin{figure}[!h]
				
				\centering
				\includegraphics[width=0.55\columnwidth]{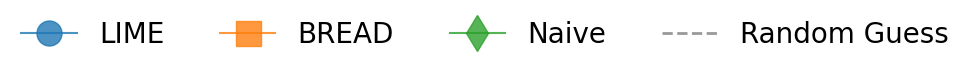}
				
				\begin{subfigure}{0.325\columnwidth}
					\centering
					\includegraphics[width=\columnwidth]{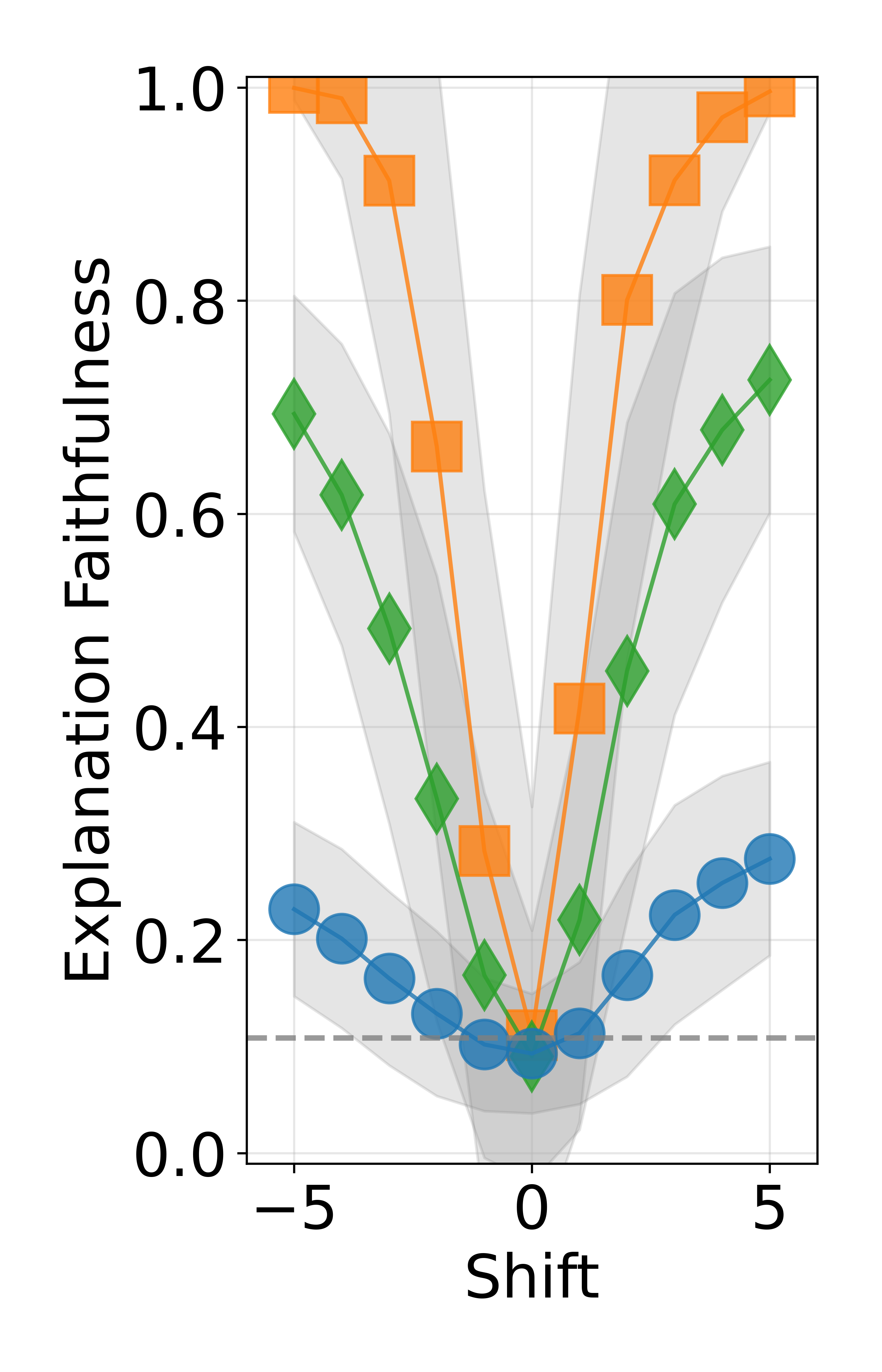}
					\caption{Scenario 1}
				\end{subfigure}
				\begin{subfigure}{0.325\columnwidth}
					\centering
					\includegraphics[width=\columnwidth]{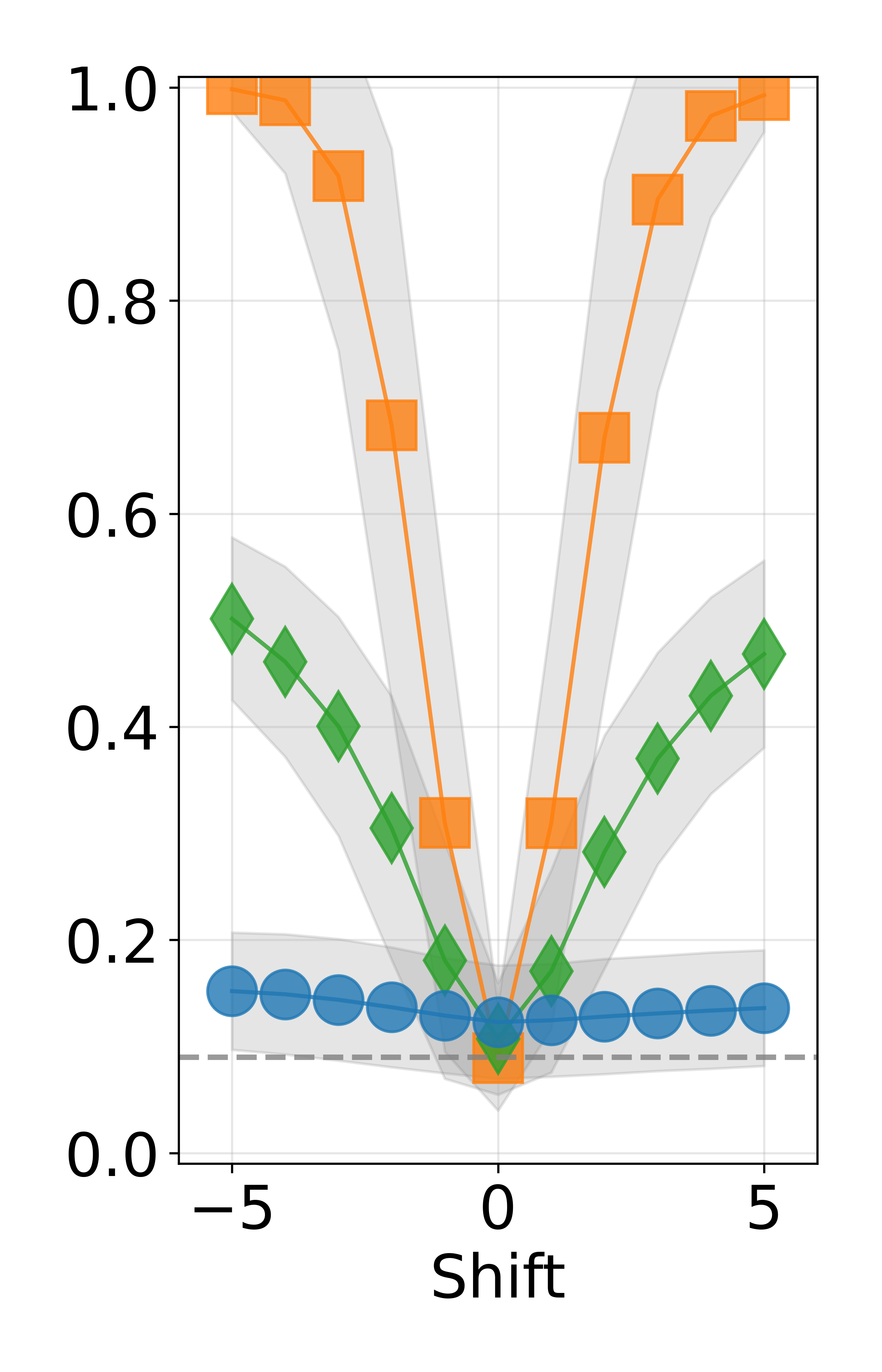}
					\caption{Scenario 2}
				\end{subfigure}
				\begin{subfigure}{0.325\columnwidth}
					\centering
					\includegraphics[width=\columnwidth]{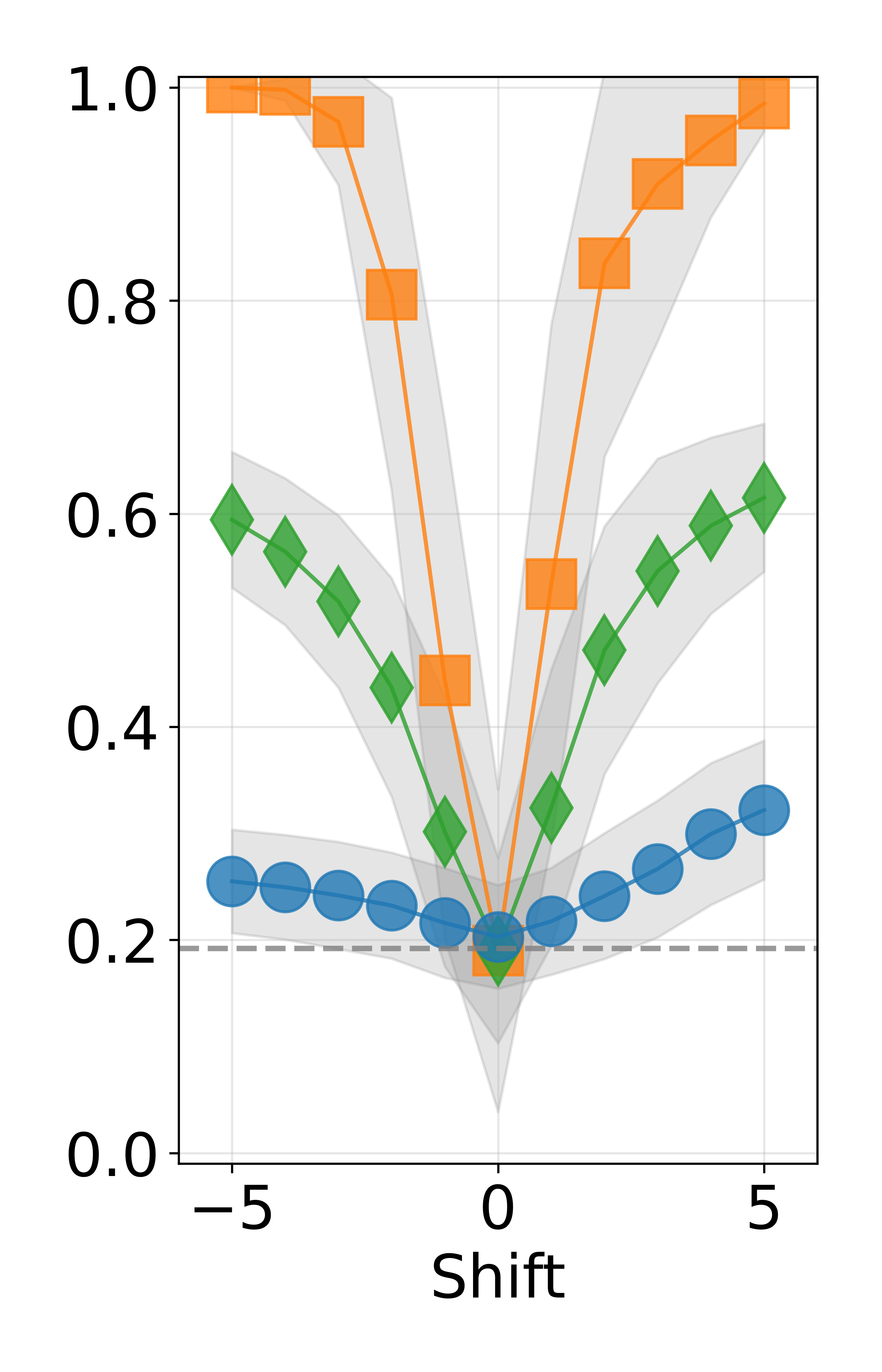}
					\caption{Scenario 3}
				\end{subfigure}		
				\caption{\textbf{Explanation faithfulness under varying mean shifts.} Panels (a)--(c) depict explanation faithfulness against the mean shift with shaded $\pm 1$ standard deviation region for Scenarios 1--3, respectively. Line styles distinguish the three methods and the random pick (LIME: circle; BREAD: square; Naive: diamond; random guess: dashed).}
				\label{fig:compar_result_lime}
			\end{figure}
			
			Figure~\ref{fig:compar_result_lime} depicts faithfulness of the three methods across three scenarios with different shifts in mean. For all methods, when there is no shift in mean, the top-$K$ feature importance is very close to the results of random pick. 
			Meanwhile, for all methods, as the magnitude of shift increases, the faithfulness score also increases. Specifically, BREAD achieves the highest faithfulness score, while LIME remains relatively low and flat. Moreover, the performance gap between these two methods increases rapidly as the shift becomes large, which is consistent with Theorem~\ref{th:faith}. In addition, the naive method improves with the size of shift as well, its faithfulness score generally falls between LIME and BREAD.
			
			In terms of computational cost, LIME and BREAD are comparable in runtime at $d=500$, whereas the naive method is four times slower. Results are detailed in Appendix~{D.5}.
			
			\subsubsection{Robustness}\label{sec:sim_robust}
			The robustness test is designed to assess how perturbation in observation affects the diagnosis results. 
			It is evaluated as the consistency of the explanation method for the same anomaly type, i.e., with the same mean shift under the same scenario. 
			For each anomaly type, we perform 1000 runs. Since all anomalous observations are generated from the same underlying shift, any variation in diagnosis reflects sensitivity to noise rather than a change in fault source.
			The robustness score for run $r$ is the average pairwise cosine similarity of the top-$K$ feature sets across all $N_r$ identified signals,
			\(
			S^{(r)}_R = \frac{1}{\binom{N_r}{2}} \sum_{i=1}^{N_r} \sum_{j=i+1}^{N_r} \frac{| D_{r,i} \cap D_{r,j}| }{\sqrt{(|D_{r,i}|\cdot|D_{r,j}|)}},
			\)
			where $D_{r,i}$ denotes the features selected in the top-$K$ for the signal $i$ in run $r$.
			The overall robustness score is the average pairwise similarity $S^{(r)}_R$ across all 1000 runs.
			Note that this metric only measures robustness, but not faithfulness.
			\begin{figure}[!h]
				\centering
				\includegraphics[width=0.35\columnwidth]{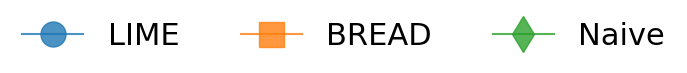}
				
				\begin{subfigure}{0.325\columnwidth}
					\centering
					\includegraphics[width=\columnwidth]{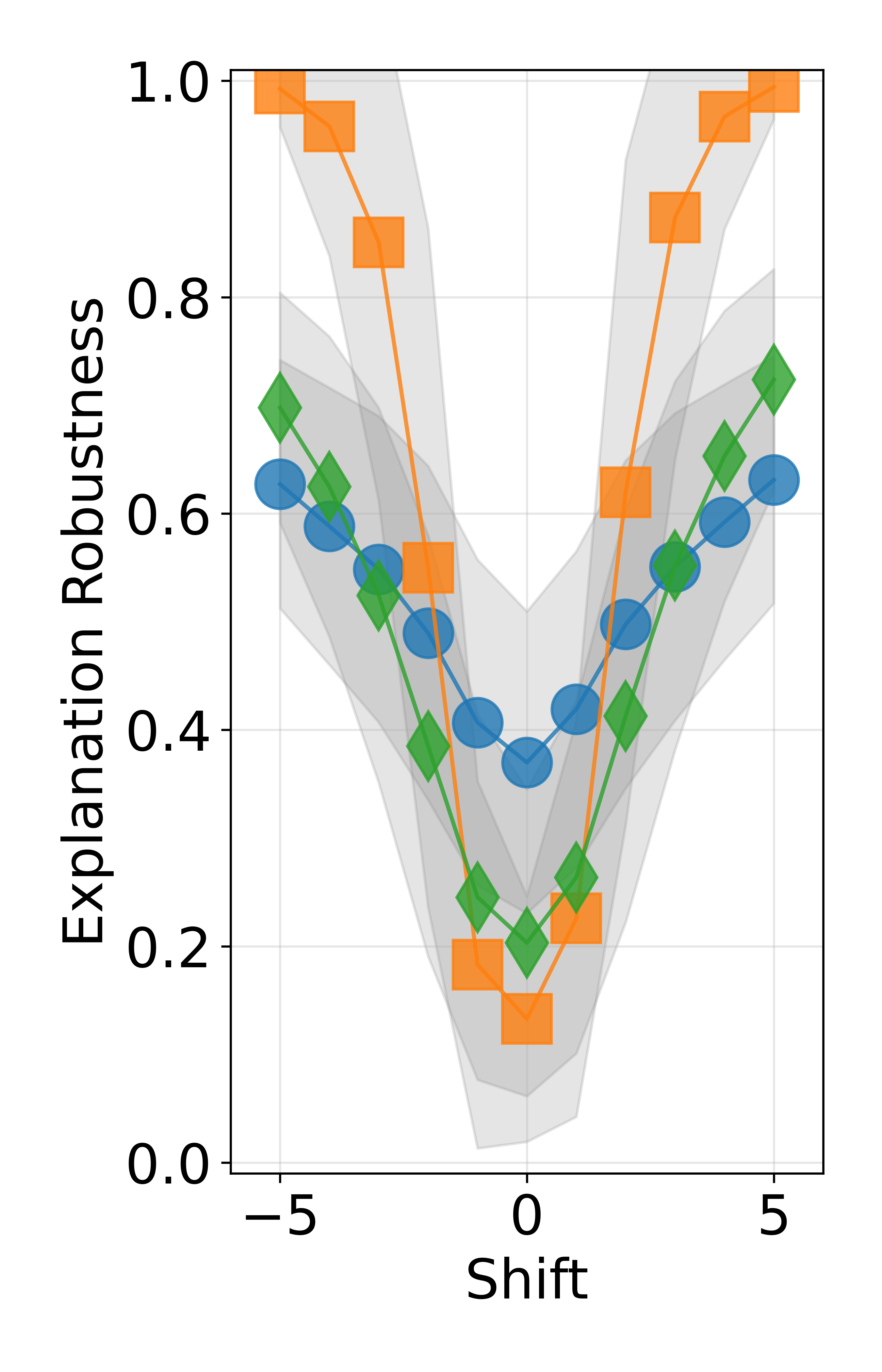}
					\caption{Scenario 1}
				\end{subfigure}
				\begin{subfigure}{0.325\columnwidth}
					\centering
					\includegraphics[width=\columnwidth]{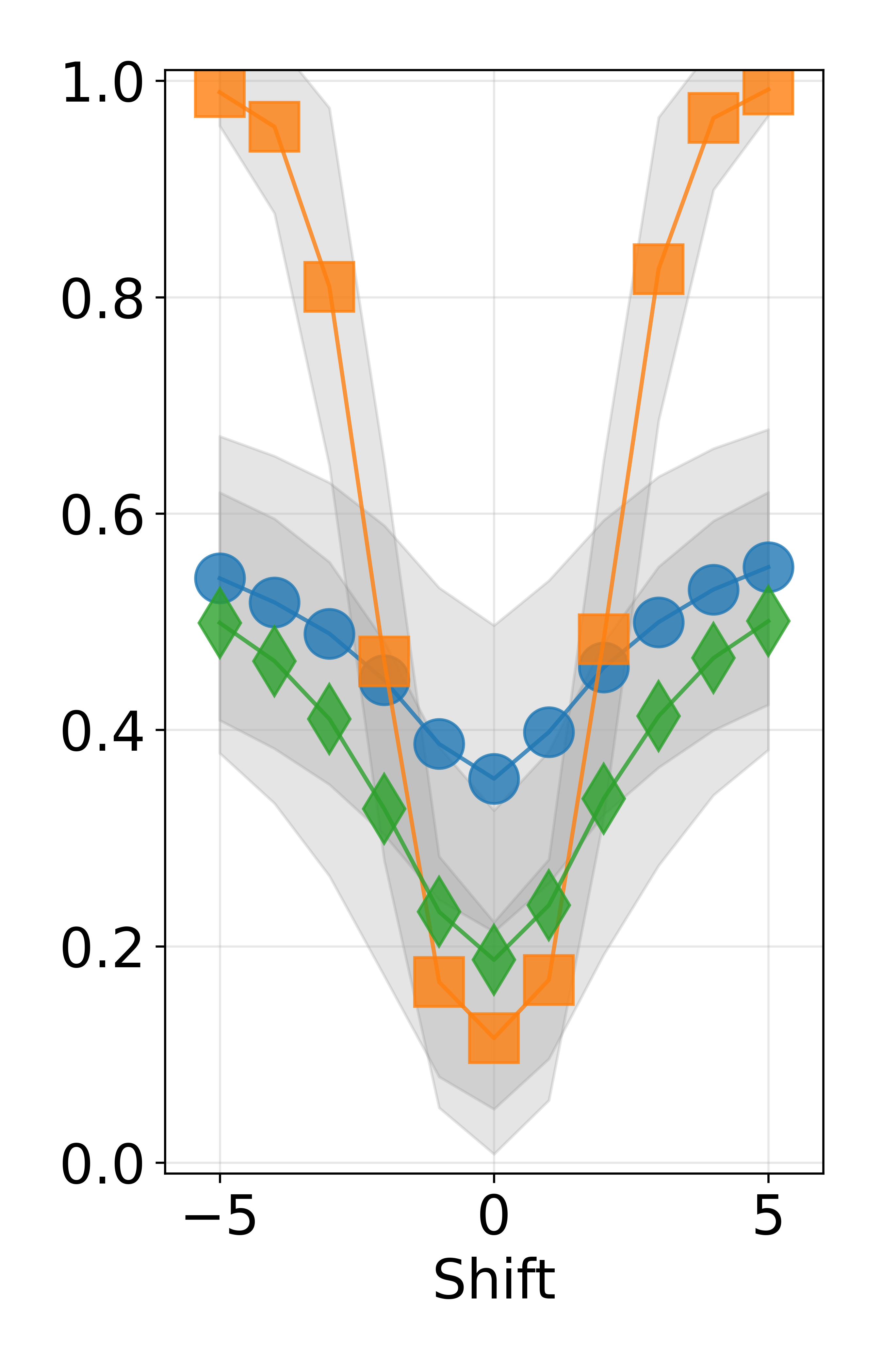}
					\caption{Scenario 2}
				\end{subfigure}
				\begin{subfigure}{0.325\columnwidth}
					\centering
					\includegraphics[width=\columnwidth]{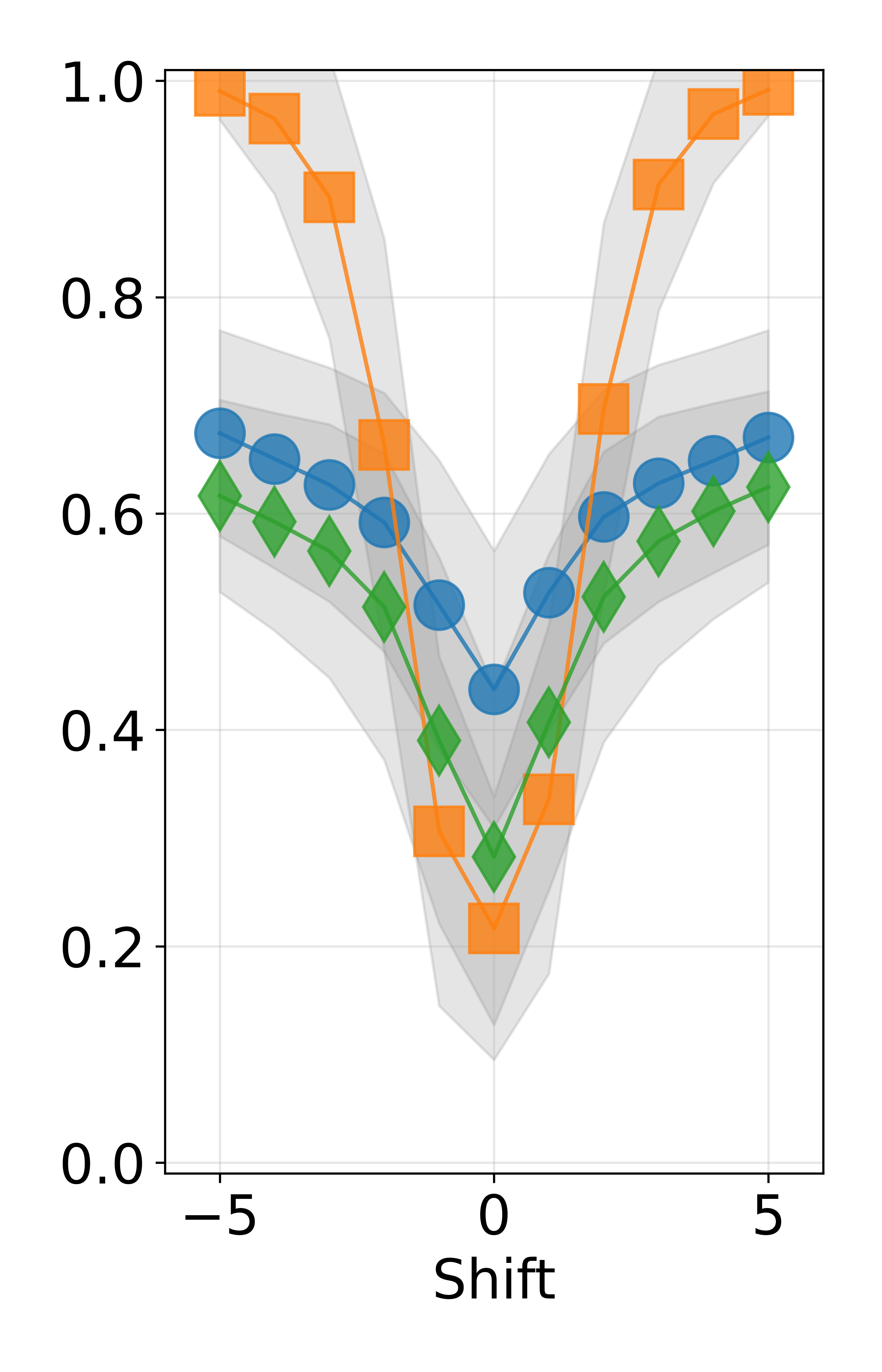}
					\caption{Scenario 3}
				\end{subfigure}
				
				\caption{\textbf{Explanation robustness under varying mean shifts.} 
					Panels (a)–(c) correspond to Scenarios 1–3, respectively. In each panel, explanation robustness is plotted with variability bands against the magnitude of the mean shift for LIME (circle), BREAD (square), and Naive (diamond).}
				\label{fig:robust_result_lime}
			\end{figure}
			
			The robustness results for the three methods are depicted in Figure~\ref{fig:robust_result_lime}.
			Robustness improves with the mean shift size for all scenarios, where consistency is lowest near the zero mean shift. This means all methods are likely to report random noise. The key result is the rapid increase in robustness as the mean shift grows, indicating consistent diagnoses once the anomaly signal is larger than the noise level.
			In addition, all methods show higher robustness under Scenario 3. 
			Across scenarios, BREAD achieves the highest robustness for moderate-to-large mean shifts. LIME shows higher robustness despite low faithfulness near zero shifts, suggesting repeated potentially incorrect feature selection when the anomaly signal is weak.
			\subsubsection{Stability}
			Stability is evaluated by repeatedly explaining the same outlier instance across 1000 random seeds at each sample size, and measuring the variability of the resulting feature attributions.
			Specifically, the set cosine similarity $S_{St}$ between the top-$K$ features and the ground truth across repeated runs is used as the metric.
			The outlier is fixed from one run under Scenario 1 with mean shift of $2$.
			This criterion is particularly relevant for sampling-based approaches such as LIME and its variants, which are sensitive to perturbation and sampling randomness.	
			\begin{figure}[!h]
				\begin{subfigure}{0.45\columnwidth}
					\centering
					\includegraphics[width=\columnwidth]{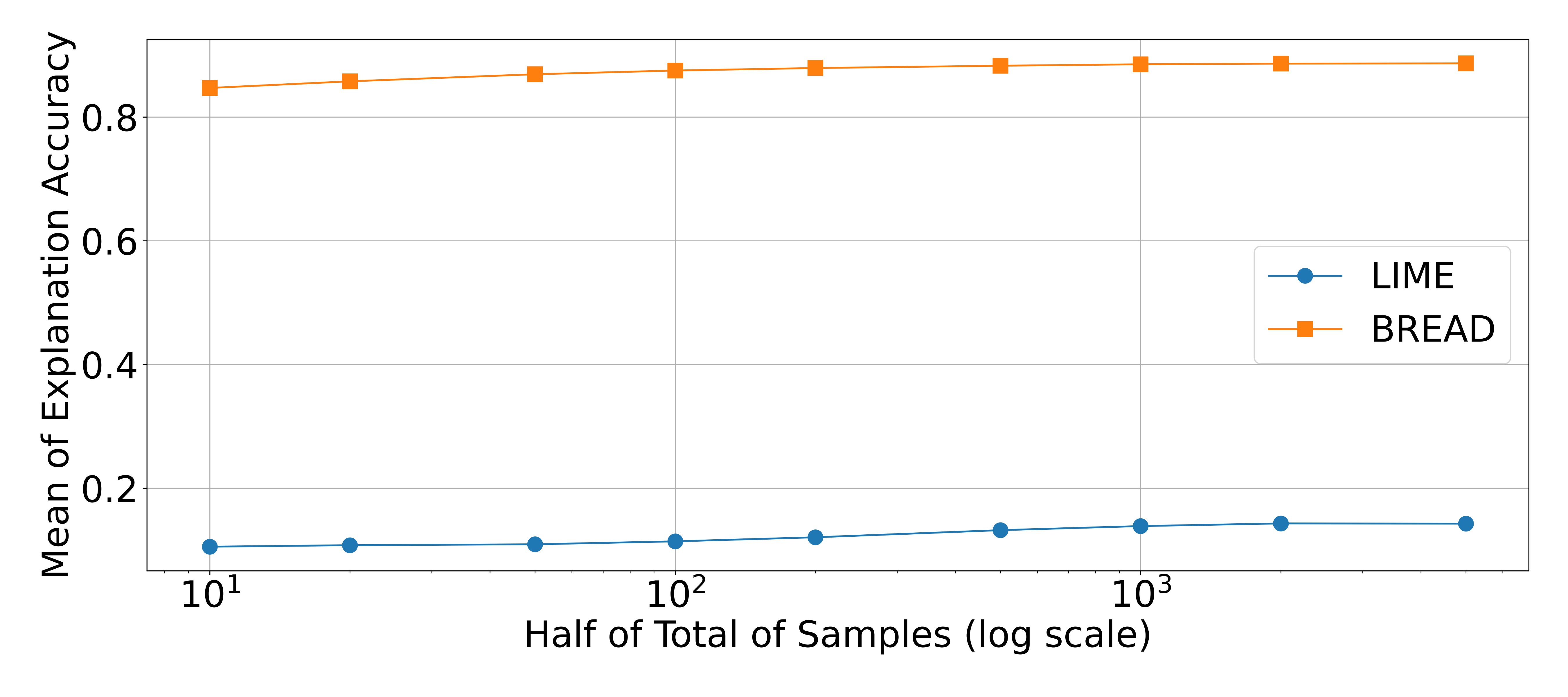}
					\caption{Mean}
					\label{fig:stability_result_lime}
				\end{subfigure}
				\hfill
				\begin{subfigure}{0.45\columnwidth}
					\centering
					\includegraphics[width=\columnwidth]{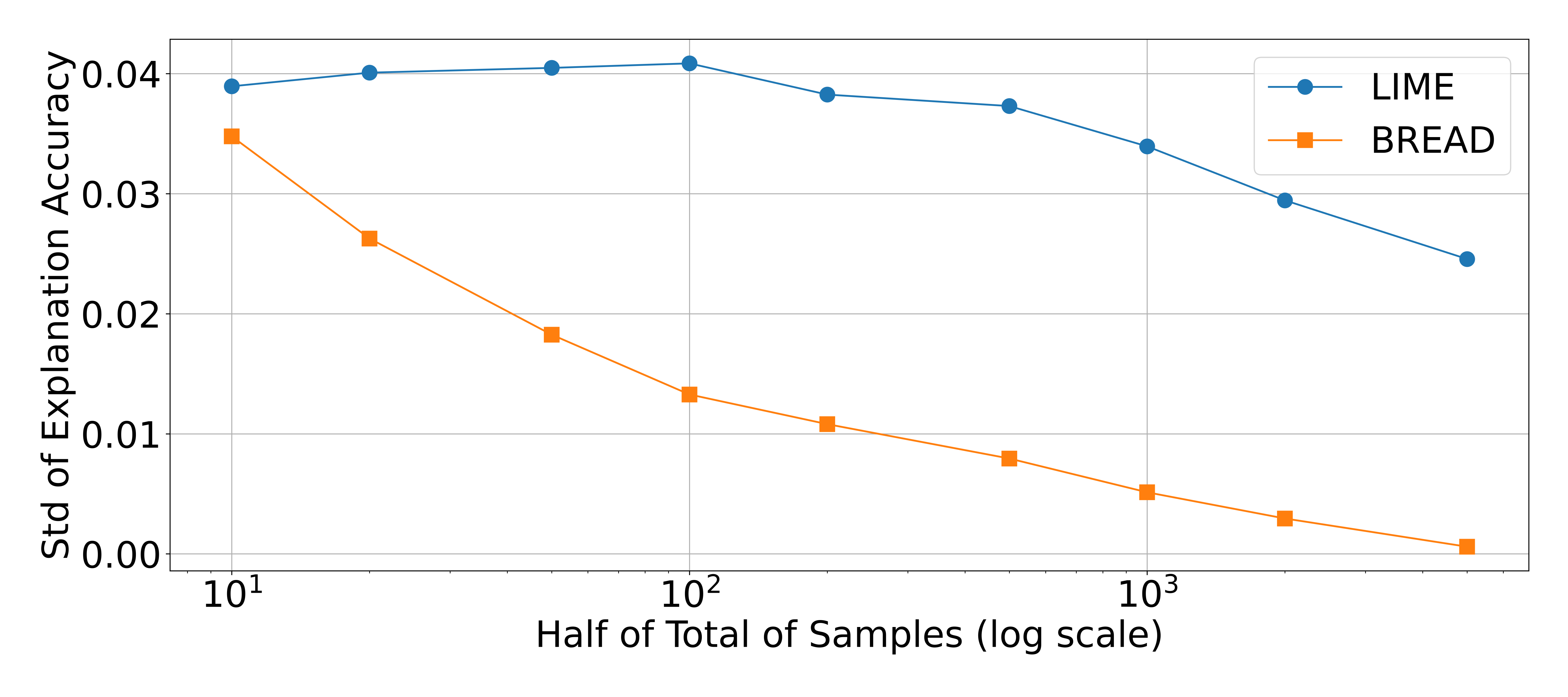}
					\caption{Standard deviation}
					\label{fig:stability_result_bread}
				\end{subfigure}
				\caption{\textbf{Stability diagnosis results.} 
					The horizontal axis shows the number of samples on a logarithmic scale.
					(a) Mean faithfulness score, repeated 1000 runs with different random seeds at each sample size. (b) Standard deviation of faithfulness score across the same repetitions.
					Curves compare LIME and BREAD, highlighting how resampling size affects both the average faithfulness score and the stability.}
				\label{fig:stability_result}
			\end{figure}
			
			Figure~\ref{fig:stability_result} shows how the sample size affects average faithfulness score and stability across repetitions. BREAD achieves a consistently high faithfulness score even at a relatively small sample size, and its standard deviation decreases rapidly as the number of samples increases. In contrast, LIME shows lower mean faithfulness score, which improves gradually with the size of sampled data, while the deviation stays elevated and declines notably only at larger sample sizes.
			
			\subsection{Case study}
			In this section, we revisit the case mentioned in the introduction and implement the proposed method in this real data set to show its applicability and performance in practice.
			\subsubsection{Data Description and Monitoring}
			We select 16 escalators and monitor their daily operating status. As a concrete fault type with known ground truth, we use the unexpected shutdowns as the fault type, which happened to Escalators 08, 13, and 14 (see Appendix~{D.8}, Figure~3 of the Supplementary).
			
			The VAE-LSTM-based $T^2$ control charts proposed by \citet{maged2024variational} are applied as the monitoring method. 
			The monitoring results are illustrated in a $T^2$ control chart in Figure~\ref{fig:framework}(c).
			Note that, since the model may raise a false alarm, we set the empirical false alarm probability (FAP) relatively low.
			
			\subsubsection{Signal Diagnosis}
			We apply both LIME and BREAD to all OOC signals detected during the shutdown period. For each signal, both methods generate a feature importance score for each of the 16 escalators, using the same number of synthetic samples ($n = 2000$). The in-control reference point used by the proposed method is estimated from the training data as the component-wise mean of the in-control observations.
			
			\begin{figure}[!h]
				\centering
				\begin{subfigure}{0.45\linewidth}
					\centering
					\includegraphics[width=\columnwidth]{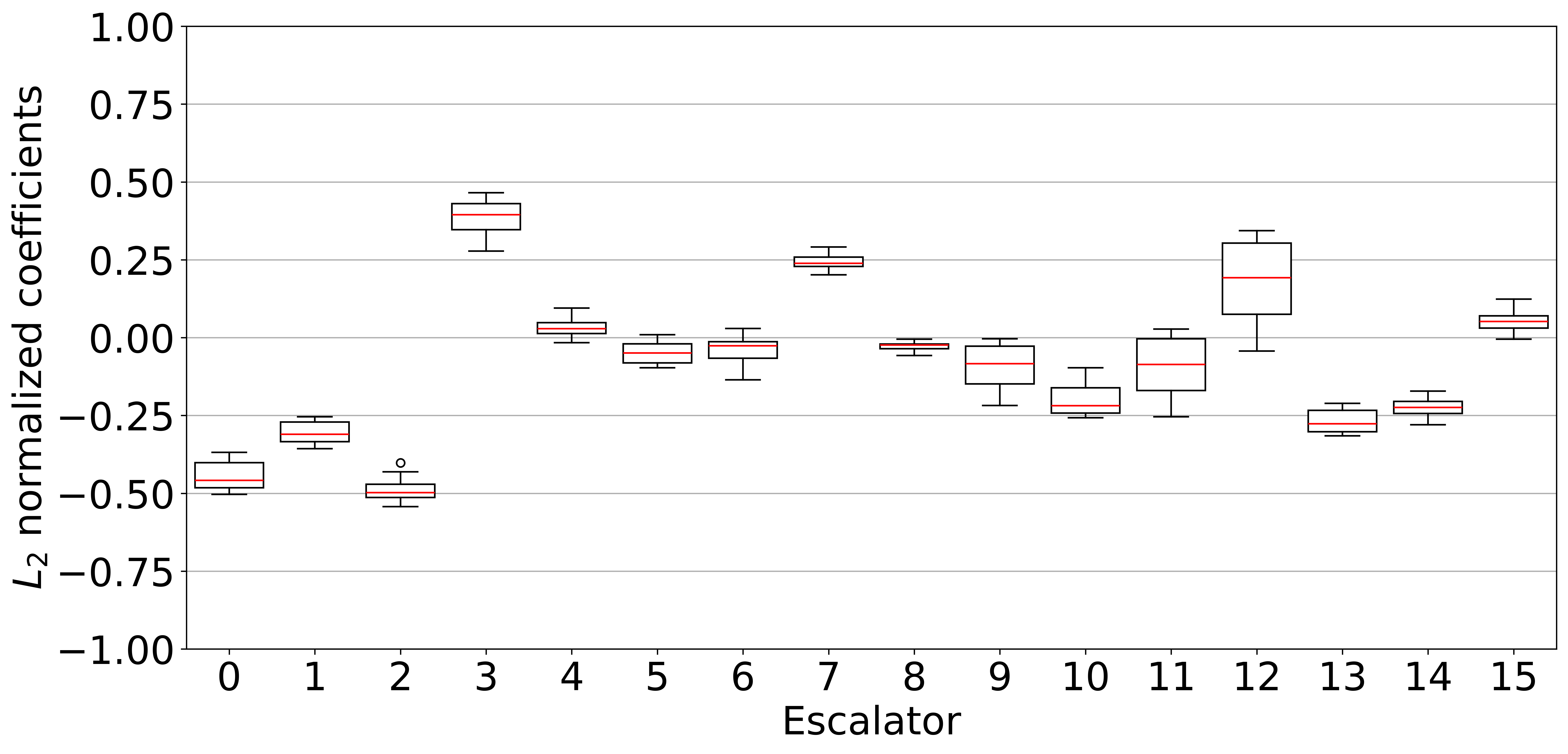}
					\caption{}
					\label{fig:caseLIME_box}
				\end{subfigure}
				\hfill
				\begin{subfigure}{0.45\linewidth}
					\centering
					\includegraphics[width=\columnwidth]{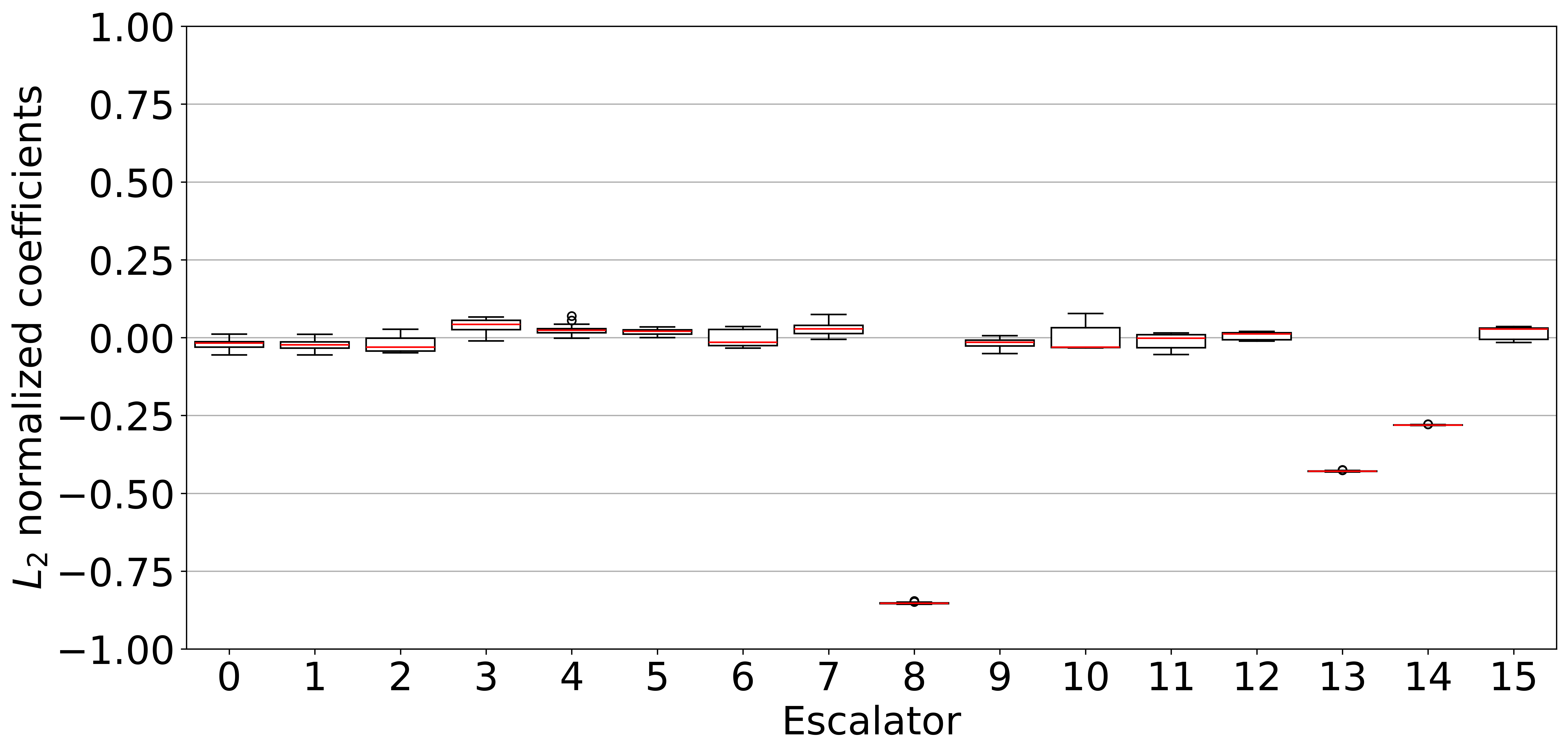}
					\caption{}
					\label{fig:caseBREAD_box}
				\end{subfigure}
				\caption{
					\textbf{Box plots of $l_2$-regularized feature importance.} Each box plot summarizes the median, range, and outliers for feature importance. 					
					(a)~LIME distributes importance broadly across non-shut-down escalators (e.g., 00--02, 03, 07, and 12), failing to concentrate relevance on the ground-truth Escalators 08, 13, 14.
					(b)~BREAD assigns the largest importance consistently to Escalators~08, 13, and~14, with near-zero, low-variance relevance for all remaining escalators, correctly recovering the fault source.
				}	 		 	
				\label{fig:caseresults_box}	
			\end{figure}
			
			The diagnosis results from all the signals are illustrated in box plots, see Figure~\ref{fig:caseresults_box}. The results show a clear qualitative difference between the two methods.

			LIME assigns large importance values to Escalators~03 and~07 (median positive importance $\approx 0.25$--$0.40$) and large negative importance to Escalators~00, 01, and~02 (median $\approx -0.25$--$-0.40$), while most remaining escalators receive near-zero scores. Critically, the three ground-truth shut down escalators (08, 13, 14) are not consistently identified among the top-ranked features, indicating that LIME's diagnosis is misaligned with the actual fault source.
			
			The BREAD method produces markedly different and more faithful results. Escalator~08 receives the largest negative importance (median $\approx -0.80$), and Escalators~13 and~14 receive consistently large negative importance (median $\approx -0.25$--$-0.50$). The remaining 13 escalators are assigned near-zero importance with low variance, reflecting the asymptotic sparsity property established in Section~\ref{sec:anaprop}. This concentration of relevance on the true fault variables is consistent across signals, demonstrating both faithfulness and stability.
			
			The proposed method correctly identifies the three shutdown escalators across all detected signals, whereas LIME consistently mis-attributes importance to normal escalators. This aligns with the theoretical sparsity result, that is, BREAD concentrates relevance on variables that drive the process away from normal operation.
			Further, the low variance of importance scores for non-faulty escalators (Figure~\ref{fig:caseresults_box}(b)) indicates that BREAD is robust to input perturbations. This is consistent with the simulation findings in Section~\ref{sec:sim_robust}.
			
			Note that, diagnosis quality is inherently coupled to the monitoring model quality. In this case study, the VAE-LSTM model is trained on a sufficiently long in-control period and validated to achieve a low FAP before deployment. In general, the proposed diagnosis method should be applied only after the monitoring model has been properly validated.
			
			\section{Conclusion and Limitation}
			In this paper, we propose the BREAD method for post-hoc diagnosis in AI-based prospective anomaly detection. By incorporating in-control reference information into local surrogate construction, BREAD generates explanations that are more aligned with the variables driving out-of-control signals while preserving the model-agnosticism needed for applicability and computational efficiency needed for online deployment.
			Theoretical analysis and empirical results on simulations and a real monitoring case study demonstrate that this global baseline-aware explanation is effective for practical diagnosis in black-box SPM control chart frameworks.
			
			Several important directions for future work remain to be explored.
			First, the current method provides point estimates of feature relevance but does not quantify explanation uncertainty. Future work can estimate the uncertainty of signal explanation for sampled local surrogates. 
			Second, the proposed model uses a single in-control reference. It would be valuable to investigate principled strategies for selection and extend to multi-reference variants when the baseline operation is multi-modal. 
			Third, although the method guarantees sparse relevance patterns, the subgroup of features remains unselected. Future work could therefore investigate feature-selection mechanisms on top of the proposed method to improve actionability in practice. 
			
			\bibliography{references}
			\bibliographystyle{plainnat}
	
	\appendix
	\onecolumn
	
	\begin{center}
		{\LARGE\bfseries BREAD: Baseline-Referenced Explanations for Anomaly Diagnosis \\ Supplementary Material\par}
		\vspace{0.5em}
		{\large Jiaqi Qiu, Rob Goedhart, Jannis Kurtz and Inez M. Zwetsloot\par}
	\end{center}
	
	\vspace{0.5em}
	
	\section{Pseudocode}\label{sec:pseudocode}
	The overall flow of our algorithm is outlined in Algorithm~\ref{alg:bread}.
	\begin{algorithm}[!h]
		\caption{BREAD}
		\label{alg:bread}
		\begin{algorithmic}
			\STATE {\bfseries Input:} AI-based black-box monitoring model $f$, identified OOC signal $\bm{t}_{j^*}$, pre-defined baseline reference point $\bm{t}_{\mathrm{ref}}$, number of samples $N$, kernel $\pi_{i}$.
			
			\STATE {\bfseries Output:} A set of the coefficients from a linear model $g$ for the anomaly score $f(\bm{t}_{j^*})$.
			\STATE
			
			\STATE $\mathcal{Z} \leftarrow [\,]$;
			\STATE $\mathbf{y} \leftarrow [\,]$;
			\STATE $\mathbf{\Pi} \leftarrow [\,]$; 
			\FOR{$i = 1$ {\bfseries to} $N/2$}
			\STATE $\bm{z}_i \leftarrow \textsc{SampleAround}\bm{t}_{j^*}$;
			\STATE $y_i \leftarrow f(\bm{z}_i)$;
			\STATE $\pi_{i} \leftarrow \pi_{i}(\bm{z}_i)$;
			\STATE $\mathcal{Z} \leftarrow \mathcal{Z} \cup \{\bm{z}_i\}$;
			\STATE $\mathbf{y} \leftarrow \mathbf{y} \cup \{y_i\}$;
			\STATE $\mathbf{\Pi} \leftarrow \mathbf{\Pi} \cup \{ \pi_{i} \}$;
			\ENDFOR
			
			\FOR{$i = N/2 +1$ {\bfseries to} $N$}
			\STATE $\bm{z}_i \leftarrow \textsc{SampleAround} \bm{t}_{\mathrm{ref}}$;
			\STATE $y_i \leftarrow f(\bm{z}_i)$;
			\STATE $\pi_{i} \leftarrow \pi_{i}(\bm{z}_i)$;
			\STATE $\mathcal{Z} \leftarrow \mathcal{Z} \cup \{\bm{z}_i\}$;
			\STATE $\mathbf{y} \leftarrow \mathbf{y} \cup \{y_i\}$;
			\STATE $\mathbf{\Pi} \leftarrow \mathbf{\Pi} \cup \{ \pi_{i} \}$;
			\ENDFOR			
			\STATE $g^* \leftarrow \displaystyle \arg\min \, \sum_{z \in \mathcal{Z}} \frac{1}{2} \pi_{i} ( g(z) - \bm{y})^{2} + \Omega(g)$,
			\STATE where $\Omega(g)$ is the ridge term.
			\STATE  {\bfseries Return}  $\textsc{ExtractFeatureImportance}(g^*)$.
		\end{algorithmic}
	\end{algorithm}

	\section{Analytical Results of the Proposed Model} \label{sec:anares}
	Denote the objective function $\frac{1}{2} ( Z {\bm{\beta}} +{\beta_0} \bm{1} - \bm{y})^{\intercal} \Pi ( Z {\bm{\beta}} + {\beta_0} \bm{1} - \bm{y}) + \frac{\lambda}{2}  {\bm{\beta}}^{ \intercal} \bm{I}_{d} {\bm{\beta}}$ as $J$.
	
	Differentiate the objective function $\mathcal{J} (\bm{ \beta }^{ \prime })$ w.r.t $\bm{ \beta }^{\prime}$, 
	we obtain
	\[ 
	\nabla_{\hat{\bm{\beta}}^{\prime}} \mathcal{J}(\hat{\bm{\beta}}^{\prime})  =  Z^{\prime \intercal} \Pi ( Z^{\prime} \hat{\bm{\beta}}^{\prime} - \bm{y}) +  \lambda \bm{I}_{d+1}  \hat{\bm{\beta}}^{\prime}.
	\]
	Set \(\nabla_{\hat{\bm{\beta}}^{\prime}} \mathcal{J}(\hat{\bm{\beta}}^{\prime}) = 0\), we get
	\begin{equation}\label{eq:FOC}
		(Z^{\prime \intercal} \Pi Z^{\prime} + \lambda \bm{I}_{d+1} )  \hat{\bm{\beta}}^{\prime} = Z^{\prime \intercal} \Pi \bm{y}.
	\end{equation}
	i.e.,
	\begin{equation}
		\left(\begin{matrix}
			Z^{ \intercal} \Pi Z  + \lambda \bm{I}_{d} & Z^{ \intercal} \Pi \bm{1} \\
			\bm{1}^{ \intercal} \Pi Z & \bm{1}^{ \intercal} \Pi \bm{1}
		\end{matrix} \right)
		\left(\begin{matrix}
			\hat{\bm{\beta}}\\
			\hat{\beta}_{0} 
		\end{matrix} \right)
		=
		\left(\begin{matrix}
			Z^{\intercal} 		\Pi  \bm{y}\\
			\bm{1}^{\intercal}	\Pi \bm{y} 
		\end{matrix} \right)
	\end{equation}
	
	Eliminating the intercept term $\hat{\beta}_{0}$ from this linear system, 
	\begin{equation}\label{eq:fullsys}
		(Z^{ \intercal} \Pi  Z + \lambda \bm{I}_{d}  - \frac{Z^{ \intercal} \Pi \bm{1} {Z^{ \intercal} \Pi \bm{1}}^{\intercal}}{\sum_{i}^{n} \pi_i})\hat{\bm{\beta}} = Z^{\intercal} \Pi \bm{y} -\frac{1}{\sum_{i}^{n} \pi_i}{Z^{ \intercal} \Pi \bm{1}}\bm{1}^{\intercal} \Pi \bm{y}.
	\end{equation}
	
	Thus, we get 
	\begin{equation}
		\hat{\bm{\beta}} = (A + \lambda \bm{I}_{d} )^{-1} \bm{b},
	\end{equation}
	where
	$A =  \left(Z^{ \intercal} \Pi  Z - \frac{Z^{ \intercal} \Pi \bm{1} \left(Z^{ \intercal} \Pi \bm{1}\right)^{\intercal}}{\sum_{i=1}^{n} \pi_i}\right),$ and 
	$\bm{b} = Z^{\intercal} \Pi \bm{y} -\frac{1}{\sum_{i=1}^{n} \pi_i}\left(Z^{ \intercal} \Pi \bm{1}\right)\left(\bm{1}^{\intercal} \Pi \bm{y}\right)$.

	\section{Faithfulness Proof} \label{sec:proof}

	\subsection{Restatement of Assumptions and Theorem}
	For convenience, we restate the assumptions and theorem.
	
	\begin{assumption}[Sampled data geometric assumption]
		There exist constants $c,\varepsilon,\rho>0$ (independent of $\delta$) and core index sets $J^{(\delta)}_0\subset I^{(\delta)}_0$ and $J^{(\delta)}_1\subset I^{(\delta)}_1$ with $|J^{(\delta)}_0|=|J^{(\delta)}_1|\ge 1$, such that 
		$ \| \bm{z}_i^{(\delta)}-\bm{t}_{\mathrm{ref}} \|_2 \le \frac{c}{\delta^2} $, for $i\in J^{(\delta)}_0,$ and $ \| \bm{z}_i^{(\delta)}-\bm{t}_{j^*} \|_2 \le \frac{c}{\delta^2} $, for $i\in J^{(\delta)}_1.$
		Furthermore, for the non-core points, 
		$ \varepsilon \le \|\bm{z}_i^{(\delta)}-\bm{t}_{\mathrm{ref}} \|_2 \le \rho$, for $i\in I^{(\delta)}_0\setminus J^{(\delta)}_0,$ and $\varepsilon \le \|\bm{z}_i^{(\delta)}-\bm{t}_{j^*} \|_2 \le \rho$, for $i\in I^{(\delta)}_1\setminus J^{(\delta)}_1.$
	\end{assumption}

	\begin{assumption}[Output structure]
		Assume there exist constants $B_y<\infty$ and $c_\Delta>0$ independent of $\delta$ such that for each $s \in\{0,1\}$, $\bigl| y_i^{(\delta)}-\bar y_s^{(\delta)} \bigr| \le B_y,$ and $\Delta(\delta) \geq c_\Delta$.   
		Moreover, for $s \in\{0,1\}$ and $i \in J_s^{(\delta)}$,
		$|y_i^{(\delta)}-\bar y_s^{(\delta)}| = O(\delta^{-1}).$
	\end{assumption}
	\begin{theorem}[Theorem~{3.3}, restated]
		Suppose the shift setup in (5) holds, the synthetic data matrix $Z^{(\delta)}$ satisfies Assumption~\ref{ass:samdata}, and the corresponding outputs, ${y}_i^{(\delta)} = f(\bm{z}_i^{(\delta)})$, satisfy Assumption~\ref{ass:out}.
		Let $\hat{\bm{\beta}}$ denote the estimator in (4) and define $\epsilon  = \dfrac{\| \hat{\bm{\beta}}_{\mathcal{R}} \|_{2} }{\| \hat{\bm{ \beta }}_{ \mathcal{K} } \|_{2}}$.
		Then,
		\[{\epsilon} \rightarrow 0, \qquad \text{as} \, \delta \rightarrow \infty.\]
	\end{theorem}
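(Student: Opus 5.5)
The plan is to reduce the weighted ridge problem \eqref{eq:estimator} to an explicit two-cluster regression, up to errors that vanish as $\delta\to\infty$. Then read off that $\hat{\bm\beta}$ aligns with $\bm u$, which is supported on $\mathcal K$.

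\textbf{Step 1: the kernel weights in \eqref{eq:weight} concentrate on the core points.}
\begin{itemize}
\item For $i\in J_0^{(\delta)}$ we have $\|\bm z_i-\bm t_{\mathrm{ref}}\|_2\le c/\delta^2$ and $\|\bm z_i-\bm t_{j^*}\|_2\le \delta+c/\delta^2$. Hence $d^2\le c/\delta+O(\delta^{-4})$, so $\pi_i\to1$. The same holds for $J_1^{(\delta)}$.
\item For a non-core point in $I_0^{(\delta)}\setminus J_0^{(\delta)}$, the distance to its own centre is at least $\varepsilon$ and the distance to the other centre is at least $\delta-\rho$. Hence $\pi_i\le \exp(-\varepsilon(\delta-\rho)/\sigma^2)$, which is exponentially small.
\end{itemize}

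\textbf{Step 2: expansion of $A$ and $\bm b$.} Write $A=\sum_i\pi_i(\bm z_i-\bar{\bm z}_\pi)(\bm z_i-\bar{\bm z}_\pi)^\intercal$ and $\bm b=\sum_i\pi_i(\bm z_i-\bar{\bm z}_\pi)(y_i-\bar y_\pi)$, where the bars denote $\pi$-weighted means. The centred form lets us shift $y$ by a constant, so only $y_i-\bar y_0$ matters. These deviations are bounded by $|\Delta|+B_y$, and I keep $\Delta$ explicit in case it grows. Put $m=|J_0^{(\delta)}|=|J_1^{(\delta)}|$.
\begin{itemize}
\item \emph{Non-core contributions.} Every non-core point satisfies $\|\bm z_i-\bar{\bm z}_\pi\|=O(\delta)$. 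Its contribution to $A$ and $\bm b$ is therefore $O(\delta^2(1+|\Delta|)e^{-\varepsilon\delta/\sigma^2})$, which is negligible against every polynomial rate below.
\item \emph{Core contributions.} The core points sit at $\bm t_{\mathrm{ref}}$ and $\bm t_{\mathrm{ref}}+\delta\bm u$, up to displacements $\bm e_i$ with $\|\bm e_i\|\le c/\delta^2$. With equal masses $m(1+o(1))$ this gives
\[
A=\tfrac{m}{2}\delta^2\,\bm u\bm u^\intercal+E_A,\qquad \bm b=\tfrac{m}{2}\delta\,\Delta\,\bm u+\bm e_b .
\]
\end{itemize}
The error terms have the following structure.
\begin{itemize}
\item Cross terms $\delta\bm u\,\bm e_i^\intercal$ give $\|E_A\|=O(1/\delta)$.
\item Any block involving only directions orthogonal to $\bm u$ is $O(\delta^{-4})$. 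This covers the $\mathcal R$ coordinates and $\bm u^\perp\cap\mathbb R^{\mathcal K}$.
\item The component of $\bm b$ orthogonal to $\bm u$ is $O((|\Delta|+1)\delta^{-2})$, since there the $\bm z$-deviations are $O(\delta^{-2})$ and the $y$-deviations are bounded by $|\Delta|+B_y$.
\item The core assumption $|y_i-\bar y_s|\le c_y/\delta$ from Assumption~\ref{ass:out} makes the $\bm u$-component of $\bm b$ equal to $\tfrac m2\delta\Delta(1+o(1))$.
\end{itemize}

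\textbf{Step 3: solve the linear system.} Use the orthogonal decomposition $\mathbb R^d=\mathrm{span}(\bm u)\oplus\bm u^\perp$; note that $\mathbb R^{\mathcal R}\subset\bm u^\perp$. On $\bm u^\perp$, the ridge term $\lambda\bm I_d$ keeps $A+\lambda\bm I_d$ invertible with inverse norm at most $1/\lambda$. In the $\bm u$ direction the diagonal entry is $\tfrac m2\delta^2+\lambda$. A Schur-complement argument, with coupling block $O(1/\delta)$, then gives:
\begin{itemize}
\item the $\bm u$-coefficient of $\hat{\bm\beta}$ equals $\Delta/\delta\,(1+o(1))$, so $\|\hat{\bm\beta}_{\mathcal K}\|_2\ge c_\Delta/(2\delta)$ for large $\delta$;
\item the $\bm u^\perp$ part satisfies $\|P_{\bm u^\perp}\hat{\bm\beta}\|\le \lambda^{-1}\big(\|P_{\bm u^\perp}\bm b\|+O(1/\delta)\,|\hat\beta_{\bm u}|\big)=O((|\Delta|+1)\delta^{-2})$.
\end{itemize}
Since $\hat{\bm\beta}_{\mathcal R}$ is part of the $\bm u^\perp$ component,
\[
\epsilon=\frac{\|\hat{\bm\beta}_{\mathcal R}\|_2}{\|\hat{\bm\beta}_{\mathcal K}\|_2}=O\!\left(\frac{(|\Delta|+1)/\delta^{2}}{|\Delta|/\delta}\right)=O(1/\delta)\to0,
\]
using $|\Delta|\ge c_\Delta$.

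\textbf{Main obstacle.} The delicate step is Step 2's bookkeeping of the cross term $A_{\mathcal R\mathcal K}$, which is only $O(1/\delta)$ rather than $O(\delta^{-2})$. It must be shown to multiply only a $\bm u$-coefficient of size $O(|\Delta|/\delta)$, not something of order $\|\bm b\|/\lambda=O(\delta)$. This is why I solve in the $(\bm u,\bm u^\perp)$ basis with the Schur complement instead of bounding $(A+\lambda\bm I_d)^{-1}$ crudely. A secondary care point is that the $\pi$-weighted means $\bar{\bm z}_\pi$ and $\bar y_\pi$ are close to the midpoint of the two core clusters. This needs the equal core sizes $|J_0|=|J_1|$ and $\pi_i=1+O(1/\delta)$ on the cores.
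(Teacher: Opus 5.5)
Your proof is correct and gives the same rate $\epsilon=O(1/\delta)$, but it uses a different decomposition from the paper.

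\textbf{The paper's route.} The paper works in the coordinate split $\mathcal K\oplus\mathcal R$. From $\hat{\bm\beta}_{\mathcal R}=M_{\mathcal{RR}}^{-1}(\bm b_{\mathcal R}-M_{\mathcal{KR}}^\intercal\hat{\bm\beta}_{\mathcal K})$ it gets $\epsilon\le\lambda^{-1}\bigl(\|\bm b_{\mathcal R}\|_2/\|\hat{\bm\beta}_{\mathcal K}\|_2+\|M_{\mathcal{KR}}\|_2\bigr)$. It then needs only one-sided estimates: $\|\bm b_{\mathcal R}\|_2=O((\Delta+1)\delta^{-2})$, $\|M_{\mathcal{KR}}\|_2=O(\delta^{-1})$, and $\|\hat{\bm\beta}_{\mathcal K}\|_2=\Omega(\Delta/\delta)$. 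The last comes from $|\bm u_{\mathcal K}^\intercal\bm b_{\mathcal K}|=\Omega(\delta\Delta)$ together with $\|S\|_2\le\|M_{\mathcal{KK}}\|_2=O(\delta^2)$.

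\textbf{Your route.} You rotate to $\mathrm{span}(\bm u)\oplus\bm u^\perp$, where $A$ is essentially rank one, and compute the leading-order solution $\hat{\bm\beta}=(\Delta/\delta)\bm u+O((\Delta+1)\delta^{-2})$. This proves more than the theorem: $\hat{\bm\beta}/\|\hat{\bm\beta}\|_2\to\bm u$, so the surrogate also recovers the relative sizes of the shift within $\mathcal K$, not just block sparsity. The paper's route is cheaper because it never needs the exact leading constant.

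\textbf{On your ``main obstacle''.} It is not really an obstacle. In ratio form the coupling contributes $\|M_{\mathcal{KR}}\|_2\|\hat{\bm\beta}_{\mathcal K}\|_2/(\lambda\|\hat{\bm\beta}_{\mathcal K}\|_2)=\|M_{\mathcal{KR}}\|_2/\lambda=O(1/\delta)$, whatever the size of $\hat{\bm\beta}_{\mathcal K}$. So no upper bound on the $\bm u$-coefficient is needed, and the coordinate split works directly. Likewise, equal core sizes are inessential: unequal masses only change the factor $w_0w_1/(w_0+w_1)$, which is common to $\bm u^\intercal A\bm u$ and $\bm u^\intercal\bm b$ and cancels in the $\bm u$-coefficient.

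\textbf{A small bookkeeping slip.} The claim $\|E_A\|_2=O(1/\delta)$ fails for the $\bm u\bm u^\intercal$ entry. Core weights are only $1-O(1/\delta)$, so $w_0w_1/(w_0+w_1)=\tfrac{m}{2}(1+O(1/\delta))$, and that entry carries an $O(\delta)$ error. This is harmless, since you only use it up to a factor $1+o(1)$. It disappears entirely if you write the leading term as $\frac{w_0w_1}{w_0+w_1}\delta^2\bm u\bm u^\intercal$ with $w_s=\sum_{i\in J_s}\pi_i$.
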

	
	\subsection{Notation and Setup}
	
	Throughout the proof, all asymptotic notation is with respect to $\delta \to \infty$ with $n$ fixed. The notation \(O(\cdot)\), \(o(\cdot)\), and \(\Omega(\cdot)\) has its standard meaning with respect to this limit. That is, for \(f(\delta)\) and \(g(\delta)>0\)
	\begin{itemize}
		\item $f = O(g)$: there exists $C > 0$ and $\delta_0 > 0$ such that $|f(\delta)| \le Cg(\delta)$ for all $\delta \ge \delta_0$;
		\item $f = o(g)$: $f(\delta)/g(\delta) \to 0$ as $\delta \to \infty$;
		\item $f = \Omega(g)$: there exists $c > 0$ and $\delta_0 > 0$ such that $f(\delta) \ge cg(\delta)$ for all $\delta \ge \delta_0$.
	\end{itemize}

	We denote the synthetic sample set as $\mathcal{Z}^{(\delta)}=\{\bm{z}_i^{(\delta)}\}_{i=1}^n$ and the corresponding matrix $Z$,  and index sets $I_0^{(\delta)}$ and $I_1^{(\delta)}$ with $|I_0^{(\delta)}|=|I_1^{(\delta)}|=n/2$ containing samples around $\bm{t}_{\mathrm{ref}}$ and $\bm{t}_{j^*}$, respectively. Under Assumption \ref{ass:samdata}, we denote core index sets $J^{(\delta)}_0\subset I^{(\delta)}_0$ and $J^{(\delta)}_1\subset I^{(\delta)}_1$ with $|J^{(\delta)}_0|=|J^{(\delta)}_1|\geq 1$. For the non-core points, we define the non-core index set as
	\[
	T^{(\delta)} 
	= \bigl( I_0^{(\delta)} \setminus J_0^{(\delta)} \bigr) \cup \bigl(I_1^{(\delta)} \setminus J_1^{(\delta)} \bigr). 
	\] 
	We consider a setting that a signal is triggered by a shift, denoted by
	$\bm{t}_{j^*} = \bm{t}_{\mathrm{ref}} + \delta \bm{u}$,
	where $\bm{u}$ is a unit vector and $\delta := \|\bm{t}_{j^*} - \bm{t}_{\mathrm{ref}}\|_2$.
	Let $\mathcal{K} = \{\ell : {u}_\ell \neq 0\}$ denote the set of shifted features with $|\mathcal{K}| = k$, and $\mathcal{R}=\{1,\dots,d\}\setminus\mathcal{K}$, we split the feature space
	\[
	\mathbb{R}^d = \mathbb{R}^\mathcal{K} \oplus \mathbb{R}^\mathcal{R},
	\]
	where $\mathbb{R}^\mathcal{K}$ is the subspace of features leading $\bm{t}_{j^*}$ to a signal and $\mathbb{R}^\mathcal{R}$ is its unshifted complement of $\bm{t}_{j^*}$.
	Thus, for a vector $\bm{b}\in\mathbb{R}^d$, we write $\bm{b}_{\mathcal{K}}$ and $\bm{b}_{\mathcal{R}}$ for its sub-vectors indexed by $\mathcal{K}$ and $\mathcal{R}$, respectively.
	
	Similarly, for a matrix $M\in\mathbb{R}^{d \times d}$, we write $M_{\mathcal{KK}}, M_{\mathcal{KR}}, M_{\mathcal{RK}}$, and $M_{\mathcal{RR}}$ for the corresponding block sub-matrices.
	Adapting the split, we write the linear system \eqref{eq:fullsys} as
	\begin{equation*}
		\begin{aligned}
			M \hat{ \bm{ \beta}} & = \bm{b},\\
			\begin{bmatrix}
				M_{\mathcal{KK}} & M_{\mathcal{KR}}\\
				M_{\mathcal{RK}} & M_{\mathcal{RR}}
			\end{bmatrix}
			\begin{bmatrix}
				\hat{ \bm{ \beta}}_{\mathcal{K}} \\
				\hat{ \bm{ \beta}}_{\mathcal{R}}
			\end{bmatrix}
			& = 
			\begin{bmatrix}
				\bm{b}_{\mathcal{K}}\\
				\bm{b}_{\mathcal{R}}
			\end{bmatrix},
		\end{aligned}
	\end{equation*}
	where $M := A + \lambda \bm{I} = (Z^{ \intercal} \Pi  Z - \frac{Z^{ \intercal} \Pi \bm{1} {Z^{ \intercal} \Pi \bm{1}}^{\intercal}}{\sum_{i}^{n} \pi_i}) +\lambda \bm{I} $, $M_{\mathcal{KK}} \in \mathbb{R}^{k \times k}$, $M_{\mathcal{RR}} \in \mathbb{R}^{(d-k) \times (d-k)}$,  $M_{\mathcal{KR}} \in \mathbb{R}^{ k \times (d-k) }$, and $\bm{b} = Z^{\intercal} \Pi \bm{y} -\frac{1}{\sum_{i}^{n} \pi_i}{Z^{ \intercal} \Pi \bm{1}}\bm{1}^{\intercal} \Pi \bm{y}$.
	Therefore, we obtain 
	\[
	\epsilon = \frac{\| \hat{\bm{\beta}}_{\mathcal{R}} \|_2}{\| \hat{\bm{\beta}}_{\mathcal{K}} \|_2}, 
	\; \text{and} \;
	\begin{aligned}
		\begin{bmatrix}
			\hat{ \bm{ \beta}}_{\mathcal{K}} \\
			\hat{ \bm{ \beta}}_{\mathcal{R}}
		\end{bmatrix}
		& =
		\begin{bmatrix}
			M_{\mathcal{KK}} & M_{\mathcal{KR}}\\
			M_{\mathcal{RK}} & M_{\mathcal{RR}}
		\end{bmatrix}^{-1}
		\begin{bmatrix}
			\bm{b}_{\mathcal{K}}\\
			\bm{b}_{\mathcal{R}}
		\end{bmatrix}
	\end{aligned}
	\]
	
	Let \(s_\pi := \sum_{i=1}^{n} \pi_{i}\) denote the sum of the weights. We write
	\[
	s_0 := \sum_{i\in I_0^{(\delta)}}\pi_i, \qquad 
	s_1 := \sum_{i\in I_1^{(\delta)}}\pi_i.
	\]
	
	The weighted feature mean \(\bm{\mu}\) and weighted output mean \( \mu_y \) are defined as
	\[
	\bm{\mu} = \frac{(Z^{(\delta)})^\intercal \Pi \bm{1}}{s_{\pi}},
	\qquad
	\mu_{y} = \frac{\sum_{i=1}^n\pi_{i} y_{i}^{(\delta)}}{s_{\pi}}.
	\]
	
	We can write
	\[\bm{b} = {(Z^{(\delta)})}^{\intercal} \Pi \bm{y} -\frac{1}{\sum_{i}^{n} \pi_i}{{(Z^{(\delta)})}^{ \intercal} \Pi \bm{1}}\bm{1}^{\intercal} \Pi \bm{y} 
	= \sum_{i=1}^{n} \pi_i (\bm{z}_i^{(\delta)} - \bm{\mu} )({y}^{(\delta)}_i - \mu_y) .\]
	
	For the norm in the proof, $\|\cdot\|_2$ denotes two different objects depending on context.
	For a vector $\bm{x} \in \mathbb{R}^p$, $\|\bm{x}\|_2 := (\sum_{j=1}^p x_j^2)^{1/2}$ denotes the Euclidean norm.
	For a matrix $A \in \mathbb{R}^{p \times q}$, $\|A\|_2 := \sup_{\|\bm{x}\|_2=1}\|A\bm{x}\|_2$ denotes the spectral norm (largest singular value).
	For a symmetric positive semidefinite matrix $A$, $\|A\|_2$ equals its largest eigenvalue.
	We write $A \succeq B$ to mean $A - B$ is positive semidefinite.

	\subsection{Bounding $\epsilon$ by $\delta$}\label{sec:appx_eB}
	\subsubsection{Schur-Complement Reduction}
	To prove Theorem~\ref{th:faith}, we first need to get the expressions of \(\hat{ \bm{ \beta}}_{\mathcal{K}}\) and \(\hat{ \bm{ \beta}}_{\mathcal{R}}\) by the Schur complement.
	Specifically, given a symmetric SPD matrix $M$, 
	\[
	M = \begin{bmatrix}
		P 			  & Q \\
		Q^{\intercal} & R
	\end{bmatrix},
	\]
	we can write the inversion as
	\[
	M^{-1} = \begin{bmatrix}
		(P - QR^{-1}Q^{\intercal})^{-1} & -(P - QR^{-1}Q^{\intercal})^{-1}QR^{-1} \\
		-R^{-1}Q^{\intercal}(P - QR^{-1}Q^{\intercal})^{-1} & 
		R^{-1} + R^{-1}Q^{\intercal}(P - Q R^{-1}Q^{\intercal})^{-1}QR^{-1}
	\end{bmatrix}.
	\]
	Then, we can write the estimator 
	\[
	\begin{aligned}
		\begin{bmatrix}
			\hat{ \bm{ \beta}}_{\mathcal{K}} \\
			\hat{ \bm{ \beta}}_{\mathcal{R}}
		\end{bmatrix}
		& =
		\begin{bmatrix}
			M_{\mathcal{KK}} & M_{\mathcal{KR}}\\
			M_{\mathcal{RK}} & M_{\mathcal{RR}}
		\end{bmatrix}^{-1}
		\begin{bmatrix}
			\bm{b}_{\mathcal{K}}\\
			\bm{b}_{\mathcal{R}}
		\end{bmatrix}\\
		& = 
		\begin{bmatrix}
			S^{-1} & -S^{-1} M_{\mathcal{KR}} M_{\mathcal{RR}}^{-1} \\
			- M_{\mathcal{RR}}^{-1} M_{\mathcal{KR}}^{\intercal} S^{-1} & 
			M_{\mathcal{RR}}^{-1} + M_{\mathcal{RR}}^{-1} M_{\mathcal{KR}}^{\intercal} S^{-1} M_{\mathcal{KR}} M_{\mathcal{RR}}^{-1}
		\end{bmatrix}
		\begin{bmatrix}
			\bm{b}_{\mathcal{K}}\\
			\bm{b}_{\mathcal{R}}
		\end{bmatrix}
	\end{aligned}
	\]
	where $S = M_{\mathcal{KK}} - M_{\mathcal{KR}} M_{\mathcal{RR}}^{-1} M_{\mathcal{KR}}^{\intercal}$ is the Schur complement of $M_{\mathcal{RR}}$ in $M$.
	Thus, 
	\[
	\begin{aligned}
		& \hat{ \bm{ \beta}}_{\mathcal{K}} = S^{-1} \bm{b}_{\mathcal{K}} - S^{-1} M_{\mathcal{KR}} M^{-1}_{\mathcal{RR}} \bm{b}_{\mathcal{R}} =  S^{-1}\left(\bm{b}_{\mathcal{K}} - M_{\mathcal{KR}} M^{-1}_{\mathcal{RR}} \bm{b}_{\mathcal{R}}  \right) \\
		&  \hat{ \bm{ \beta}}_{\mathcal{R}} = - M_{\mathcal{RR}}^{-1} M_{\mathcal{KR}}^{\intercal} S^{-1} \bm{b}_{\mathcal{K}} +( M_{\mathcal{RR}}^{-1} + M_{\mathcal{RR}}^{-1} M_{\mathcal{KR}}^{\intercal} S^{-1} M_{\mathcal{KR}} M_{\mathcal{RR}}^{-1}) \bm{b}_{\mathcal{R}} 
		= M_{\mathcal{RR}}^{-1} \left( \bm{b}_{\mathcal{R}} - M_{\mathcal{KR}}^{\intercal} \hat{ \bm{ \beta}}_{\mathcal{K}} \right)
	\end{aligned}
	\]
	
	Thus, we can write $\epsilon$ as,
	\[
	\begin{aligned}
		\epsilon & = \frac{\| \hat{\bm{\beta}}_{\mathcal{R}} \|_2}{\| \hat{\bm{\beta}}_{\mathcal{K}} \|_2} \\
		& = \frac{\| (M_{\mathcal{RR}})^{-1} \left( \bm{b}_{\mathcal{R}} - (M_{\mathcal{KR}})^{\intercal} \hat{ \bm{ \beta}}_{\mathcal{K}} \right) \|_2}
		{\| \hat{ \bm{ \beta}}_{\mathcal{K}} \|_2} \\
		& \leq \frac{ \| (M_{\mathcal{RR}})^{-1} \|_2 (\| \bm{b}_{\mathcal{R}} \|_2 + \| M_{\mathcal{KR}}  \|_2 \| \hat{ \bm{ \beta}}_{\mathcal{K}}  \|_2 )}{\| \hat{ \bm{ \beta}}_{\mathcal{K}} \|_2}\\
		& = \| (M_{\mathcal{RR}})^{-1} \|_2 \left( \frac{\| \bm{b}_{\mathcal{R}} \|_2}{\| \hat{\bm{\beta}}_{\mathcal{K}}\|_2 }  + \| M_{\mathcal{KR}} \|_2 \right).
	\end{aligned}
	\]
	Since $M_{\mathcal{RR}} \succeq \lambda \bm{I}$, it follows that,
	\[
	\epsilon \leq
	\frac{1}{\lambda} \left( \frac{\|\bm{b}_{\mathcal{R}}\|_2}{\|\hat{\bm{\beta}}_{\mathcal K}\|_2} + \|M_{\mathcal{KR}}\|_2 \right).
	\]
	Accordingly, the proof is divided into three parts. First, we show that $\|\bm{b}_{\mathcal{R}}\|_2$ is small under the BREAD weighting. Second, we show $\|M_{\mathcal{KR}}\|_2$ is asymptotically small. Third, we show the lower bound
	on $\|\hat{\bm{\beta}}_{\mathcal K}\|_2$. These imply that $\epsilon \rightarrow 0,$ as $\delta\rightarrow\infty$.

	\subsubsection{Preliminary Bounds}
	To bound $\epsilon$ by $\delta$, we need the following lemmas that repeatedly used throughout.
	\begin{lemma}[BREAD weight bounds] \label{lem:BREAD_weights}
		Under Assumption~\ref{ass:samdata}, there exist constants \(c_B>0\), \(K_B<\infty\), and \(C_0>0\), independent of \(\delta\), such that for all sufficiently large \(\delta\):
		\begin{enumerate}
			\item for every \(i\in T^{(\delta)}:=(I_0^{(\delta)}\setminus J_0^{(\delta)})\cup (I_1^{(\delta)}\setminus J_1^{(\delta)})\),
			\[
			\pi_i\le K_B e^{-c_B\delta}.
			\]
			\item for every \(i\in J_0^{(\delta)}\cup J_1^{(\delta)}\),
			\[
			e^{-C_0/\delta}\le \pi_i \le 1;
			\]
		\end{enumerate}
		Then
		\[
		s_\pi =\sum_{i\in J_0^{(\delta)}\cup J_1^{(\delta)}}\pi_i + O(e^{-c_B\delta}) = |J_0^{(\delta)}|+ |J_1^{(\delta)}| + O(\delta^{-1}) + O(e^{-c_B\delta}),
		\]
		and $1/s_\pi=O(1)$.
	\end{lemma}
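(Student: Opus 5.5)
The plan is to compute the distance $d_{\bm{t}_{j^*},\bm{t}_{\mathrm{ref}}}(\bm{z}_i)$ in \eqref{eq:dist} directly for each type of sampled point and then feed it into the kernel \eqref{eq:weight}. The triangle inequality, together with $\|\bm{t}_{j^*}-\bm{t}_{\mathrm{ref}}\|_2=\delta$ from \eqref{eq:meanshf}, controls the distance to the \emph{other} centre. The bounds of Assumption~\ref{ass:samdata} control the distance to the \emph{own} centre. Throughout, $\sigma$ and $n$ are fixed, and we take $\delta\ge\delta_0$ with $\delta_0$ large enough that $\delta-\rho\ge\delta/2$ and $c/\delta_0^2\le\delta_0$.

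\textbf{Non-core points.} Take $i\in I_0^{(\delta)}\setminus J_0^{(\delta)}$. Assumption~\ref{ass:samdata} gives
\[
\|\bm{z}_i-\bm{t}_{\mathrm{ref}}\|_2\ge\varepsilon,
\qquad
\|\bm{z}_i-\bm{t}_{j^*}\|_2\ge\delta-\|\bm{z}_i-\bm{t}_{\mathrm{ref}}\|_2\ge\delta-\rho\ge\delta/2 .
\]
Hence $d^2(\bm{z}_i)\ge\varepsilon\delta/2$, and therefore $\pi_i\le\exp(-\varepsilon\delta/(2\sigma^2))$. The case $i\in I_1^{(\delta)}\setminus J_1^{(\delta)}$ is the same argument with the roles of the two centres exchanged. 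This proves item 1 with $c_B=\varepsilon/(2\sigma^2)$ and $K_B=1$.

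\textbf{Core points.} The upper bound $\pi_i\le 1$ holds trivially because $d^2\ge 0$. For the lower bound, take $i\in J_0^{(\delta)}$. Assumption~\ref{ass:samdata} gives
\[
\|\bm{z}_i-\bm{t}_{\mathrm{ref}}\|_2\le c/\delta^2,
\qquad
\|\bm{z}_i-\bm{t}_{j^*}\|_2\le\delta+c/\delta^2\le 2\delta .
\]
So $d^2(\bm{z}_i)\le 2c/\delta$, and $\pi_i\ge e^{-2c/(\sigma^2\delta)}$. This proves item 2 with $C_0=2c/\sigma^2$; the case $i\in J_1^{(\delta)}$ is symmetric. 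This step is the only point where the $\delta^{-2}$ rate in the assumption really matters. It must exactly compensate the factor $\delta$ coming from the distance to the far centre, and I expect this to be the key observation rather than an obstacle.

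\textbf{Sum of weights.} Split $s_\pi$ into the core and non-core contributions. There are at most $n$ non-core indices, with $n$ fixed, so they contribute at most $nK_Be^{-c_B\delta}=O(e^{-c_B\delta})$. Each core weight satisfies $1-\pi_i\le 1-e^{-C_0/\delta}\le C_0/\delta$, so the core sum equals $|J_0^{(\delta)}|+|J_1^{(\delta)}|+O(\delta^{-1})$. Finally, $s_\pi\ge\sum_{i\in J_0\cup J_1}\pi_i\ge 2e^{-C_0/\delta}\ge 2e^{-C_0/\delta_0}$, using $|J_0^{(\delta)}|=|J_1^{(\delta)}|\ge1$. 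This lower bound is a positive constant, so $1/s_\pi=O(1)$.
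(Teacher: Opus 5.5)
Your proposal is correct and follows essentially the same route as the paper: the triangle inequality with $\|\bm{t}_{j^*}-\bm{t}_{\mathrm{ref}}\|_2=\delta$ bounds the distance to the far centre, and Assumption~\ref{ass:samdata} bounds the distance to the near centre. Their product then goes directly into the kernel, and you only make the constants explicit ($c_B=\varepsilon/(2\sigma^2)$, $K_B=1$, $C_0=2c/\sigma^2$). Your lower bound $s_\pi\ge 2e^{-C_0/\delta_0}$ is in fact slightly more careful than the paper's claim $s_\pi\ge 2$, which is not literally guaranteed because core weights can fall just below $1$; the conclusion $1/s_\pi=O(1)$ is the same either way.
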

	\begin{proof}
		Recall the BREAD weights
		\[
		\pi_i = \exp\!\left(-\frac{\|\bm{z}_i^{(\delta)}-\bm{t}_{j^*}\|_2\|\bm{z}_i^{(\delta)}-\bm{t}_{\mathrm{ref}}\|_2}{\sigma^2}\right).
		\]
		
		(i) For $i \in I_1^{(\delta)}\setminus J_1^{(\delta)}$, Assumption~\ref{ass:samdata} gives
		\[\varepsilon\leq\|\bm{z}_i^{(\delta)}-\bm{t}_{j^*}\|_2\leq\rho.
		\]
		Since \(\bm{t}_{j^*} = \bm{t}_{\mathrm{ref}}+\delta \bm{u}\),
		\[
		\bm{z}_i^{(\delta)} - \bm{t}_{\mathrm{ref}} = \delta \bm{u} + (\bm{z}_i^{(\delta)} - \bm{t}_{j^*})
		\]
		Hence, by triangle inequality,
		\[\|\bm{z}_i^{(\delta)}-\bm{t}_{\mathrm{ref}}\|_2  
		=\|\delta \bm{u} + (\bm{z}_i^{(\delta)} - \bm{t}_{j^*})\|_2 
		\geq \delta -  \| \bm{z}_i^{(\delta)} - \bm{t}_{j^*}\|_2 \geq \delta-\rho.
		\]
		
		Hence, for all $\delta>\rho$,
		\[
		\pi_i\le \exp\!\left(-\frac{\varepsilon(\delta-\rho)}{\sigma^2}\right).
		\]
		
		For $i \in I_0^{(\delta)}\setminus J_0^{(\delta)}$, similarly,
		\[
		\|\bm{z}_i^{(\delta)}-\bm{t}_{\mathrm{ref}}\|_2\geq \varepsilon,
		\qquad
		\|\bm{z}_i^{(\delta)}-\bm{t}_{j^*}\|_2 \geq \delta - \rho
		\]
		
		Thus, the same bound holds. Hence there exist $K_B < \infty$ and $c_B>0$, such that
		\[
		\pi_i\le K_B e^{-c_B\delta}
		\qquad \forall i\in T^{(\delta)}.
		\]
		
		(ii) For $i\in J_0^{(\delta)}$, then $\|\bm{z}_i^{(\delta)}-\bm{t}_{\mathrm{ref}}\|_2\le c/\delta^2$, it holds
		$
		\|\bm{z}_i^{(\delta)}-\bm{t}_{j^*}\|_2=\|\delta \bm{u}-(\bm{z}_i^{(\delta)}-\bm{t}_{\mathrm{ref}})\|_2\leq \delta+\frac{c}{\delta^2}.
		$
		Therefore
		\[
		\|\bm{z}_i^{(\delta)}-\bm{t}_{\mathrm{ref}}\|_2\|\bm{z}_i^{(\delta)}-\bm{t}_{j^*}\|_2
		\leq \frac{c}{\delta}+\frac{c^2}{\delta^4},
		\]
		Then,
		\[
		\pi_i \geq \exp\left(-\frac{c/\delta + c^2/\delta^4}{\sigma^2}\right),
		\qquad i\in J_0^{(\delta)}.
		\]
		Hence,
		\[
		1 - \pi_i =O(\delta^{-1})
		\]
		The same estimate holds for $i\in J_1^{(\delta)}$. In particular, core weights are bounded away from $0$
		uniformly for large $\delta$.
		Since \(s_\pi = \sum_{i=1}^n \pi_i\),
		\(s_\pi = \sum_{i\in J_0^{(\delta)}\cup J_1^{(\delta)}}\pi_i + O(e^{-c_B\delta})\). Since \(\pi_i = 1 + O(\delta^{-1})\) and \(n\) is fixed, 
		we obtain
		\[
		s_\pi = |J_0^{(\delta)}|+ |J_1^{(\delta)}| + O(\delta^{-1}) + O(e^{-c_B\delta}).
		\]
		Since $|J_0^{(\delta)}|+|J_1^{(\delta)}|\ge 2$, we have $s_\pi \ge 2$ for large $\delta$,
		so $1/ s_\pi = O(1)$. \qedhere
	\end{proof}
	
	\begin{lemma}[Bounds of weighted sum ratio]\label{lem:weightedsum_r}
		Recall
		\[
		s_0=\sum_{i\in I_0^{(\delta)}} \pi_i, \qquad s_1=\sum_{i\in I_1^{(\delta)}} \pi_i, \qquad s_\pi=s_0+s_1.
		\]
		In particular, there exists a constant \(c_0>0\), independent of \(\delta\), such that
		\begin{equation*}
			\frac{s_0 s_1}{(s_\pi)^2}\ge c_0.
		\end{equation*}
	\end{lemma}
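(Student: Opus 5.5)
The plan is to read off the group sums $s_0$ and $s_1$ directly from Lemma~\ref{lem:BREAD_weights}, which already splits every weight into a core part and a non-core part. Each group contributes at least one core point with weight close to one, so each $s_s$ is bounded below by a positive constant. Both $s_0$ and $s_1$ are also bounded above, because every weight is at most one. Their product over $s_\pi^2$ is then bounded away from zero.

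\emph{Step 1 (lower bounds on each group).} Fix $s\in\{0,1\}$. Since $\pi_i>0$ for all $i$, we can drop the non-core terms:
\[
s_s=\sum_{i\in I_s^{(\delta)}}\pi_i\ \ge\ \sum_{i\in J_s^{(\delta)}}\pi_i .
\]
By part (ii) of Lemma~\ref{lem:BREAD_weights}, each core weight satisfies $\pi_i\ge e^{-C_0/\delta}$. Assumption~\ref{ass:samdata} gives $|J_s^{(\delta)}|\ge 1$, so
\[
s_s\ \ge\ |J_s^{(\delta)}|\,e^{-C_0/\delta}\ \ge\ e^{-C_0/\delta}.
\]
For all $\delta\ge \delta_0$, with $\delta_0$ large enough that the lemma applies, this gives $s_s\ge e^{-C_0/\delta_0}=:a>0$.

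\emph{Step 2 (upper bound on the total).} Every weight satisfies $\pi_i\le 1$, because the exponent in \eqref{eq:weight} is nonpositive. Hence $s_\pi\le n$, where $n$ is fixed and independent of $\delta$. Alternatively, the expansion of $s_\pi$ in Lemma~\ref{lem:BREAD_weights} gives the sharper bound $s_\pi\le |J_0^{(\delta)}|+|J_1^{(\delta)}|+1$ for large $\delta$.

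\emph{Step 3 (conclusion).} Combining the two steps,
\[
\frac{s_0 s_1}{s_\pi^2}\ \ge\ \frac{a^2}{n^2}=:c_0>0,
\]
and this constant does not depend on $\delta$. The bound is stated for sufficiently large $\delta$, which is the regime used throughout the asymptotic argument. If the bound is wanted for every $\delta$, note that $s_0,s_1>0$ always, but a uniform constant over small $\delta$ is not needed for the theorem.

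The only point needing care is that the lower bound uses core points only, and the core sets must be nonempty in \emph{each} group. This is guaranteed by the condition $|J_0^{(\delta)}|=|J_1^{(\delta)}|\ge1$ in Assumption~\ref{ass:samdata}. Without it, one group's total weight could decay like $e^{-c_B\delta}$ and the ratio would collapse to zero.
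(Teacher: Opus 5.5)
Your proof is correct and follows the paper's argument. You lower-bound each of $s_0$ and $s_1$ by a single core weight $e^{-C_0/\delta}$, using Lemma~\ref{lem:BREAD_weights}(ii) and the nonemptiness of both core sets, and you upper-bound $s_\pi \le n$ from $\pi_i \le 1$. Your explicit uniform constant $c_0 = e^{-2C_0/\delta_0}/n^2$ for $\delta \ge \delta_0$ is slightly cleaner than the paper's closing step, which passes to the limit $1/n^2$.
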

	\begin{proof}
		By Lemma~\ref{lem:BREAD_weights}(ii), each core set contains at least one point and each core weight is bounded below by \(e^{-C_0/\delta}\). Hence, for all sufficiently large \(\delta\),
		\[
		s_0\geq {e^{-C_0/\delta}}, \qquad s_1\geq e^{-C_0/\delta}, \qquad s_\pi \le n.
		\]
		Hence,
		\[
		\frac{s_0 s_1}{(s_\pi)^2}\geq \frac{e^{-2C_0/\delta}}{n^2} \rightarrow \frac{1}{n^2} \ge 0,
		\]
		Taking $c_0$ for all large $\delta$ gives the results. \qedhere
	\end{proof}
	
	\begin{lemma}[Pointwise bounds] \label{lem:pt_bounds}
		Under Assumption~\ref{ass:samdata}, for all sufficiently large $\delta$, 
		\begin{enumerate}
			\item[(i)] $\mathcal{R}$-block, core points.\\
			For $i\in J_0^{(\delta)}\cup J_1^{(\delta)}$,
			\[
			\|\bm{z}_{i,\mathcal{R}}^{(\delta)}-\bm{t}_{\mathrm{ref},\mathcal{R}}\|_2=O(\delta^{-2}).
			\]
			\item[(ii)] $\mathcal{R}$-block, tail points.\\
			For $i\in T^{(\delta)}$,
			\[
			\|\bm{z}_{i,\mathcal{R}}^{(\delta)}-\bm{t}_{\mathrm{ref},\mathcal{R}}\|_2=O(1).
			\]
			
		\end{enumerate}
	\end{lemma}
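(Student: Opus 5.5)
The plan is to work directly from Assumption~\ref{ass:samdata} together with the mean-shift representation \eqref{eq:meanshf}, projecting every displacement onto the unshifted block $\mathcal{R}$. The key structural fact is that $\bm{u}_{\mathcal{R}}=\bm{0}$ by the definition of $\mathcal{K}$ and $\mathcal{R}$. Hence
\[
\bm{t}_{j^*,\mathcal{R}} = \bm{t}_{\mathrm{ref},\mathcal{R}} + \delta\,\bm{u}_{\mathcal{R}} = \bm{t}_{\mathrm{ref},\mathcal{R}},
\]
so on the $\mathcal{R}$ coordinates the two centres coincide. The large shift $\delta$ is therefore invisible in $\mathcal{R}$. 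I will also use that the Euclidean norm of a sub-vector never exceeds that of the full vector: $\|\bm{x}_{\mathcal{R}}\|_2\le\|\bm{x}\|_2$ for every $\bm{x}\in\mathbb{R}^d$.

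For part (i), I treat the two core sets separately.
\begin{itemize}
\item For $i\in J_0^{(\delta)}$, restriction to $\mathcal{R}$ together with the core bound of Assumption~\ref{ass:samdata} gives
\[
\|\bm{z}_{i,\mathcal{R}}^{(\delta)}-\bm{t}_{\mathrm{ref},\mathcal{R}}\|_2 \le \|\bm{z}_i^{(\delta)}-\bm{t}_{\mathrm{ref}}\|_2 \le c/\delta^2 .
\]
\item For $i\in J_1^{(\delta)}$, I first replace $\bm{t}_{\mathrm{ref},\mathcal{R}}$ by $\bm{t}_{j^*,\mathcal{R}}$ using the identity above. Then
\[
\|\bm{z}_{i,\mathcal{R}}^{(\delta)}-\bm{t}_{\mathrm{ref},\mathcal{R}}\|_2 = \|\bm{z}_{i,\mathcal{R}}^{(\delta)}-\bm{t}_{j^*,\mathcal{R}}\|_2 \le \|\bm{z}_i^{(\delta)}-\bm{t}_{j^*}\|_2 \le c/\delta^2 .
\]
\end{itemize}
Both cases give $O(\delta^{-2})$ with the explicit constant $c$, which is uniform in $\delta$.

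Part (ii) follows the same template with the non-core bounds in place of the core bounds.
\begin{itemize}
\item For $i\in I_0^{(\delta)}\setminus J_0^{(\delta)}$, the bound is $\|\bm{z}_{i,\mathcal{R}}^{(\delta)}-\bm{t}_{\mathrm{ref},\mathcal{R}}\|_2\le\|\bm{z}_i^{(\delta)}-\bm{t}_{\mathrm{ref}}\|_2\le\rho$.
\item For $i\in I_1^{(\delta)}\setminus J_1^{(\delta)}$, the identity $\bm{t}_{j^*,\mathcal{R}}=\bm{t}_{\mathrm{ref},\mathcal{R}}$ gives the bound $\|\bm{z}_i^{(\delta)}-\bm{t}_{j^*}\|_2\le\rho$.
\end{itemize}
Hence the quantity is $O(1)$, with constant $\rho$ independent of $\delta$.

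The argument is elementary, and "for all sufficiently large $\delta$" actually holds for every $\delta>0$. The only step requiring care is the anomaly-side points, in $J_1^{(\delta)}$ and $I_1^{(\delta)}\setminus J_1^{(\delta)}$. A naive triangle inequality through the full vector $\bm{t}_{j^*}-\bm{t}_{\mathrm{ref}}$ would introduce a term of size $\delta$ and destroy both bounds. This is avoided by projecting onto $\mathcal{R}$ before comparing centres, so that the term $\delta\,\bm{u}_{\mathcal{R}}$ vanishes identically.
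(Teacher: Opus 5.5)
Your proof is correct and follows essentially the same route as the paper: both use $\bm{u}_{\mathcal{R}}=\bm{0}$ to identify $\bm{t}_{j^*,\mathcal{R}}$ with $\bm{t}_{\mathrm{ref},\mathcal{R}}$, and then read off the bounds $c/\delta^2$ and $\rho$ directly from Assumption~\ref{ass:samdata}. You additionally make explicit the sub-vector norm inequality and the handling of the anomaly-side points, which the paper leaves implicit as ``the same follows.''
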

	\begin{proof}
		\begin{enumerate}
			\item[(i)] For $i\in J_0^{(\delta)} \cup J_1^{(\delta)},$ since $\bm{u}_{\mathcal{R}}=0$, $\bm{t}_{j^*,\mathcal{R}}=\bm{t}_{\mathrm{ref},\mathcal{R}}$. For $i\in J_0^{(\delta)}$, Assumption~\ref{ass:samdata} gives
			$\|\bm{z}_i^{(\delta)}-\bm{t}_{\mathrm{ref}}\|_2 \le c/\delta^2$. For $i\in J_1^{(\delta)}$, the same follows. 
			
			Thus,
			\[\left\|  \bm{z}_{i,\mathcal{R}}^{(\delta)} - \bm{t}_{\mathrm{ref},\mathcal{R}} \right\|_2 =O(\delta^{-2}).\]
			
			\item[(ii)] For $i\in T^{(\delta)},$ Assumption~\ref{ass:samdata} gives 
			$ \varepsilon \le \|\bm{z}_i^{(\delta)}-\bm{t}_{\mathrm{ref}} \|_2 \le \rho$, for $i\in I^{(\delta)}_0\setminus J^{(\delta)}_0,$ and $\varepsilon \le \|\bm{z}_i^{(\delta)}-\bm{t}_{j^*} \|_2 \le \rho$, for $i\in I^{(\delta)}_1\setminus J^{(\delta)}_1.$
			
			Thus, the $\mathcal{R}$-block is bounded by $\rho = O(1)$. \qedhere
		\end{enumerate}
	\end{proof}
	
	\begin{lemma}[Bounds for pointwise weighted mean]\label{lem:z_wm}
		Under Assumption~\ref{ass:samdata},
		\begin{enumerate}
			\item[(i)] $\mathcal{K}$-block, all points.\\
			For all $i\in\{1,\dots,n\}$,
			\[
			\|\bm{z}_{i,\mathcal{K}}^{(\delta)}-\bm{\mu}_{\mathcal{K}}\|_2=O(\delta).
			\]
			\item[(ii)]$\mathcal{R}$-block, all points.\\
			\[
			\bm{\mu}_{\mathcal{R}}
			=\bm{t}_{\mathrm{ref},\mathcal{R}}+O(\delta^{-2})+O(K_Be^{-c_B\delta}).
			\]
			Furthermore, for $i\in J_0^{(\delta)}\cup J_1^{(\delta)}$,
			\[
			\|\bm{z}_{i,\mathcal{R}}^{(\delta)}-\bm{\mu}_{\mathcal{R}}\|_2=O(\delta^{-2})+O(e^{-c_B\delta}),
			\]
			and for $i\in T^{(\delta)}$,
			\[
			\|\bm{z}_{i,\mathcal{R}}^{(\delta)}-\bm{\mu}_{\mathcal{R}}\|_2=O(1).
			\]
		\end{enumerate}
	\end{lemma}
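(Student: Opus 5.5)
The statement to prove is Lemma~\ref{lem:z_wm}. Both parts follow from one decomposition of the weighted mean. Write $\bm{\mu}=s_\pi^{-1}\sum_{i=1}^n \pi_i \bm{z}_i^{(\delta)}$ and subtract $\bm{t}_{\mathrm{ref}}$ to get
\[
\bm{\mu}-\bm{t}_{\mathrm{ref}}=\frac{1}{s_\pi}\sum_{i=1}^n \pi_i\bigl(\bm{z}_i^{(\delta)}-\bm{t}_{\mathrm{ref}}\bigr).
\]
We then split the sum over $J_0^{(\delta)}\cup J_1^{(\delta)}$ and over $T^{(\delta)}$. Throughout we use $\pi_i\le 1$, $1/s_\pi=O(1)$ (Lemma~\ref{lem:BREAD_weights}), and the fact that $n$ is fixed.

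For part (i), we bound the distance of each sample from $\bm{t}_{\mathrm{ref}}$. Every $\bm{z}_i^{(\delta)}$ lies within $\max(\rho,c/\delta^2)$ of either $\bm{t}_{\mathrm{ref}}$ or $\bm{t}_{j^*}=\bm{t}_{\mathrm{ref}}+\delta\bm{u}$. Hence $\|\bm{z}_i^{(\delta)}-\bm{t}_{\mathrm{ref}}\|_2\le \delta+\rho+c$ for large $\delta$.

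The weighted mean $\bm{\mu}$ is a convex combination of the $\bm{z}_i^{(\delta)}$, so the same bound holds for $\|\bm{\mu}-\bm{t}_{\mathrm{ref}}\|_2$. The triangle inequality then gives $\|\bm{z}_i^{(\delta)}-\bm{\mu}\|_2\le 2(\delta+\rho+c)=O(\delta)$. Restricting to the $\mathcal{K}$ coordinates can only decrease the norm, which proves (i). No weight estimates are needed here beyond convexity.

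For part (ii), we use $\bm{u}_{\mathcal{R}}=0$, so $\bm{t}_{j^*,\mathcal{R}}=\bm{t}_{\mathrm{ref},\mathcal{R}}$. Restricting the decomposition to $\mathcal{R}$, the core terms contribute
\[
s_\pi^{-1}\sum_{i\in J_0\cup J_1}\pi_i\,O(\delta^{-2})=O(\delta^{-2})
\]
by Lemma~\ref{lem:pt_bounds}(i). The tail terms contribute
\[
s_\pi^{-1}\sum_{i\in T}\pi_i\,O(1)\le O(1)\cdot n K_B e^{-c_B\delta}
\]
by Lemma~\ref{lem:pt_bounds}(ii) and Lemma~\ref{lem:BREAD_weights}(i). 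Together these give $\bm{\mu}_{\mathcal{R}}=\bm{t}_{\mathrm{ref},\mathcal{R}}+O(\delta^{-2})+O(K_Be^{-c_B\delta})$.

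The two pointwise claims then follow from the triangle inequality
\[
\|\bm{z}_{i,\mathcal{R}}^{(\delta)}-\bm{\mu}_{\mathcal{R}}\|_2\le \|\bm{z}_{i,\mathcal{R}}^{(\delta)}-\bm{t}_{\mathrm{ref},\mathcal{R}}\|_2+\|\bm{\mu}_{\mathcal{R}}-\bm{t}_{\mathrm{ref},\mathcal{R}}\|_2.
\]
For core points, this gives $O(\delta^{-2})+O(e^{-c_B\delta})$. For tail points, the first term is $O(1)$ by Lemma~\ref{lem:pt_bounds}(ii), so the total is $O(1)$.

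The only real care point is ensuring the tail contribution in $\bm{\mu}_{\mathcal{R}}$ is exponentially small rather than $O(1)$. This is exactly where the BREAD kernel enters, through the tail-weight bound of Lemma~\ref{lem:BREAD_weights}(i) together with $1/s_\pi=O(1)$. The remaining steps are routine.
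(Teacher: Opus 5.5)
Your proposal is correct and follows essentially the same route as the paper. Both use the convex-combination argument for the $\mathcal{K}$-block, the decomposition $\bm{\mu}_{\mathcal{R}}-\bm{t}_{\mathrm{ref},\mathcal{R}}=s_\pi^{-1}\sum_i \pi_i(\bm{z}_{i,\mathcal{R}}^{(\delta)}-\bm{t}_{\mathrm{ref},\mathcal{R}})$, and a core/tail split via Lemma~\ref{lem:pt_bounds} and the tail-weight bound of Lemma~\ref{lem:BREAD_weights}; the only difference is that you write out the final triangle-inequality step for the pointwise $\mathcal{R}$-block bounds, which the paper leaves implicit.
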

	\begin{proof}
		\begin{enumerate}		
			\item[(i)] For $\bm{z}_{i,\mathcal{K}}^{(\delta)}$, Assumption~\ref{ass:samdata} gives either $\bm{z}_{i,\mathcal{K}}^{(\delta)} = \bm{t}_{\mathrm{ref},\mathcal{K}} + O(1)$ or $\bm{z}_{i,\mathcal{K}}^{(\delta)} = \bm{t}_{\mathrm{ref},\mathcal{K}} + \delta \bm{u}_{\mathcal{K}}+O(1)$.
			Thus $\bm{z}_{i,\mathcal{K}}^{(\delta)}$ lies at distance $O(\delta)$ from $\bm{t}_{\mathrm{ref},\mathcal{K}}$. Since $\bm{\mu}_{\mathcal{K}}$ is a weighted average of $\bm{z}_{i,\mathcal{K}}^{(\delta)}$'s, it also satisfies $\bm{\mu}_{\mathcal{K}} = \bm{t}_{\mathrm{ref},\mathcal{K}} + O(\delta)$.
			Hence, 
			\begin{equation*}
				\|\bm{z}_{i,\mathcal{K}}^{(\delta)} - \bm{\mu}_{\mathcal{K}}\|_2 = O(\delta). 
			\end{equation*}
			\item[(ii)]
			Write 
			\[\bm{\mu}_{\mathcal{R}} - \bm{t}_{\mathrm{ref},\mathcal{R}} = \frac{1}{s_\pi} \sum_{i}^{n} \pi_i \left( \bm{z}_{i,\mathcal{R}}^{(\delta)} - \bm{t}_{\mathrm{ref},\mathcal{R}} \right). \]
			For $i\in J_0^{(\delta)}\cup J_1^{(\delta)}$, by Lemmas~\ref{lem:BREAD_weights} and~\ref{lem:pt_bounds}(i)
			\[\left\| \bm{\mu}_{\mathcal{R}} - \bm{t}_{\mathrm{ref},\mathcal{R}}\right\|_2 = \left\| \frac{1}{s_\pi} \sum_{i}^{n} \pi_i \left( \bm{z}_{i,\mathcal{R}}^{(\delta)} - \bm{t}_{\mathrm{ref},\mathcal{R}} \right) \right\|_2 = O(\delta^{-2}) \sum_{i\in J_0^{(\delta)} \cup J_1^{(\delta)}} \pi_i = O(\delta^{-2}). \]
			Similarly, for $i\in T^{(\delta)}$, Lemmas~\ref{lem:BREAD_weights} and~\ref{lem:pt_bounds}(ii) give 
			\[
			\left\| \bm{\mu}_{\mathcal{R}} - \bm{t}_{\mathrm{ref},\mathcal{R}}\right\|_2 
			\le \frac{\rho}{s_\pi} \sum_{i \in T^{(\delta)} } \pi_i = O(e^{-c_B\delta}). \qedhere
			\]
		\end{enumerate}
	\end{proof}
	
	\begin{lemma}[Bounds for weighted outputs]\label{lem:y_bound}
		Under Assumption~\ref{ass:out}, 
		\begin{enumerate}
			\item[(i)]\( \max_i|y^{(\delta)}_i-\mu_y|\le\Delta(\delta)+2B_y=O(\Delta(\delta)+1).
			\)
			\item[(ii)]Moreover, 
			\[
			\mu_y=\bar{y}_0^{(\delta)}+\frac{s_1}{s_\pi}\Delta(\delta)
			+O(\delta^{-1})+O\bigl((\Delta(\delta)+1)e^{-c_B\delta}\bigr).
			\]
		\end{enumerate}
	\end{lemma}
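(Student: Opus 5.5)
For part (i), I would write every deviation $y_i^{(\delta)}-\mu_y$ as a weighted average of pairwise differences, using that $\mu_y$ is a convex combination of the outputs:
\[
y_i^{(\delta)}-\mu_y=\frac{1}{s_\pi}\sum_{j=1}^n \pi_j\,\bigl(y_i^{(\delta)}-y_j^{(\delta)}\bigr).
\]
Hence $|y_i^{(\delta)}-\mu_y|\le \max_{i,j}|y_i^{(\delta)}-y_j^{(\delta)}|$. To bound the pairwise differences, take $i\in I_s^{(\delta)}$ and $j\in I_{s'}^{(\delta)}$ and insert the group means:
\[
y_i^{(\delta)}-y_j^{(\delta)}=\bigl(y_i^{(\delta)}-\bar y_s^{(\delta)}\bigr)+\bigl(\bar y_s^{(\delta)}-\bar y_{s'}^{(\delta)}\bigr)+\bigl(\bar y_{s'}^{(\delta)}-y_j^{(\delta)}\bigr).
\]
By Assumption~\ref{ass:out}, the first and last terms are each at most $B_y$ in absolute value. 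The middle term is either $0$ or $\pm\Delta(\delta)$, and $\Delta(\delta)\ge c_\Delta>0$. This gives $\Delta(\delta)+2B_y$, which is $O(\Delta(\delta)+1)$ since $B_y$ is a constant.

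For part (ii), I would split $s_\pi\mu_y$ by group and write each output as its group mean plus a deviation:
\[
s_\pi\mu_y=\sum_{s\in\{0,1\}}\Bigl(s_s\,\bar y_s^{(\delta)}+\sum_{i\in I_s^{(\delta)}}\pi_i\bigl(y_i^{(\delta)}-\bar y_s^{(\delta)}\bigr)\Bigr).
\]
Dividing by $s_\pi$ and using $s_0+s_1=s_\pi$ together with $\bar y_1^{(\delta)}=\bar y_0^{(\delta)}+\Delta(\delta)$, the main part becomes
\[
\frac{s_0\bar y_0^{(\delta)}+s_1\bar y_1^{(\delta)}}{s_\pi}=\bar y_0^{(\delta)}+\frac{s_1}{s_\pi}\Delta(\delta).
\]
The remainder $\frac{1}{s_\pi}\sum_s\sum_{i\in I_s}\pi_i(y_i^{(\delta)}-\bar y_s^{(\delta)})$ is then split into core and non-core indices.
\begin{itemize}
\item \emph{Core points.} For $i\in J_s^{(\delta)}$, Assumption~\ref{ass:out} gives $|y_i^{(\delta)}-\bar y_s^{(\delta)}|\le c_y/\delta$. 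Together with $\pi_i\le 1$, $n$ fixed and $1/s_\pi=O(1)$ from Lemma~\ref{lem:BREAD_weights}, the core contribution is $O(\delta^{-1})$.
\item \emph{Non-core points.} For $i\in T^{(\delta)}$, Lemma~\ref{lem:BREAD_weights}(i) gives $\pi_i\le K_Be^{-c_B\delta}$, and $|y_i^{(\delta)}-\bar y_s^{(\delta)}|\le B_y$. Their contribution is therefore $O(e^{-c_B\delta})$, which is contained in the stated $O\bigl((\Delta(\delta)+1)e^{-c_B\delta}\bigr)$.
\end{itemize}

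There is no real obstacle. The only care needed is bookkeeping:
\begin{itemize}
\item $\Delta(\delta)$ is not assumed bounded above, so every error term must carry it explicitly rather than absorb it into constants.
\item The step where the group-mean terms reduce exactly to $\bar y_0^{(\delta)}+(s_1/s_\pi)\Delta(\delta)$ has to be done exactly rather than asymptotically, because that is what lets the centring identity hold without error.
\item If one instead bounds the non-core deviations via part (i), they are $O(\Delta(\delta)+1)$, which produces exactly the stated $(\Delta(\delta)+1)e^{-c_B\delta}$ form.
\end{itemize}
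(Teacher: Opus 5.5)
Your proof is correct and follows the paper's route. Part (i) is the same convex-combination argument: you phrase it via pairwise differences, while the paper places all $y_i^{(\delta)}$ and $\mu_y$ in an interval of length $\Delta(\delta)+2B_y$. Part (ii) is the same core/tail split of the weighted sum, using $1/s_\pi=O(1)$ and $\pi_i\le K_Be^{-c_B\delta}$ on $T^{(\delta)}$. Your exact group-mean decomposition is slightly cleaner than the paper's and gives a tail term $O(e^{-c_B\delta})$ without the $\Delta(\delta)$ factor, which is contained in the stated bound.
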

	\begin{proof}
		\begin{enumerate}
			\item[(i)] Under Assumption~\ref{ass:out}, 
			for $s\in\{0,1\}$,
			\[
			| y^{(\delta)}_i - \bar{y}^{(\delta)}_s | \leq B_y, \qquad i\in I^{(\delta)}_s,
			\]
			Hence, every $y^{(\delta)}_i$ lies in 
			\[[\bar{y}^{(\delta)}_0- B_y,\bar{y}^{(\delta)}_1 +B_y],\]
			the end points are up to swapping if needed.
			Since $\mu_y$ is a convex combination of the $y^{(\delta)}_i$'s, it also lies in this interval.
			Thus, 
			\[|y^{(\delta)}_i - \mu_y|\leq |\bar{y}^{(\delta)}_0- \bar{y}^{(\delta)}_1| +2B_y  = \Delta(\delta) + 2 B_y,\]
			and \[ \max|y^{(\delta)}_i - \mu_y| \leq \Delta(\delta) + 2 B_y.\] 
			\item[(ii)] For $i\in J_0^{(\delta)}$, Assumption~\ref{ass:out} gives $y^{(\delta)}_i=\bar{y}_0^{(\delta)}+O(\delta^{-1})$,
			and for $i\in J_1^{(\delta)}$, $y^{(\delta)}_i=\bar{y}_1^{(\delta)}+O(\delta^{-1})=\bar{y}_0^{(\delta)}+\Delta(\delta)+O(\delta^{-1})$.
			
			For $i\in T^{(\delta)}$, $|y^{(\delta)}_i-\bar{y}_s^{(\delta)}|\le B_y$ and $\pi_i=O(e^{-c_B\delta})$
			by Lemma~\ref{lem:BREAD_weights}(i).  
			
			Splitting the weighted sum and using $1/s_\pi=O(1)$,
			\[
			\mu_y=\bar{y}_0^{(\delta)}+\frac{s_1}{s_\pi}\Delta(\delta)
			+O(\delta^{-1})+O\bigl((\Delta(\delta)+1)e^{-c_B\delta}\bigr). \qedhere
			\]
		\end{enumerate}
		
	\end{proof}
	
	\subsubsection{Bounding \({b}_{\mathcal{R}}\)}
	We show that the $\mathcal{R}$-block of $\bm{b}$ is asymptotically negligible. Intuitively, this follows because the $\mathcal{R}$-features of the core points fluctuate only at order $O(\delta^{-2})$, while the non-core points are exponentially small weighted.
	\begin{lemma}[Upper bound on $\bm{b}_{\mathcal{R}}$]\label{lem:vR_B}
		Under Assumptions~\ref{ass:samdata} and~\ref{ass:out},
		\[
		\|\bm{b}_{\mathcal{R}}\|_2 = O\big((\Delta(\delta)+1)\delta^{-2}\big)+O\big((\Delta(\delta)+1)e^{-c_B\delta}\big), \qquad \delta\to\infty.
		\]
	\end{lemma}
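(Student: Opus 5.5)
We start from the centred representation already derived in the setup,
\[
\bm{b}_{\mathcal{R}}=\sum_{i=1}^{n}\pi_i\bigl(\bm{z}_{i,\mathcal{R}}^{(\delta)}-\bm{\mu}_{\mathcal{R}}\bigr)\bigl(y_i^{(\delta)}-\mu_y\bigr),
\]
which is the $\mathcal{R}$-rows of $\bm{b}=\sum_i\pi_i(\bm{z}_i^{(\delta)}-\bm{\mu})(y_i^{(\delta)}-\mu_y)$. By the triangle inequality we split the sum over the core indices $J_0^{(\delta)}\cup J_1^{(\delta)}$ and the tail indices $T^{(\delta)}$. Each term is bounded by $\pi_i\,\|\bm{z}_{i,\mathcal{R}}^{(\delta)}-\bm{\mu}_{\mathcal{R}}\|_2\,|y_i^{(\delta)}-\mu_y|$. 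The factor $|y_i^{(\delta)}-\mu_y|$ is controlled uniformly in $i$ by Lemma~\ref{lem:y_bound}(i), which gives $|y_i^{(\delta)}-\mu_y|\le\Delta(\delta)+2B_y=O(\Delta(\delta)+1)$.

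For the core terms, we use $\pi_i\le 1$ from Lemma~\ref{lem:BREAD_weights}(ii). Lemma~\ref{lem:z_wm}(ii) gives $\|\bm{z}_{i,\mathcal{R}}^{(\delta)}-\bm{\mu}_{\mathcal{R}}\|_2=O(\delta^{-2})+O(e^{-c_B\delta})$. This in turn rests on Lemma~\ref{lem:pt_bounds}(i) together with $\bm{t}_{j^*,\mathcal{R}}=\bm{t}_{\mathrm{ref},\mathcal{R}}$, which holds because $\bm{u}_{\mathcal{R}}=0$. Since $n$ is fixed, there are at most $n$ core terms, so their total contribution is $O\big((\Delta(\delta)+1)(\delta^{-2}+e^{-c_B\delta})\big)$.

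For the tail terms $i\in T^{(\delta)}$, Lemma~\ref{lem:BREAD_weights}(i) gives $\pi_i\le K_Be^{-c_B\delta}$. Lemma~\ref{lem:z_wm}(ii) gives $\|\bm{z}_{i,\mathcal{R}}^{(\delta)}-\bm{\mu}_{\mathcal{R}}\|_2=O(1)$, coming from the bound $\rho$ of Assumption~\ref{ass:samdata} plus the bounded deviation of $\bm{\mu}_{\mathcal{R}}$. With at most $n$ such terms, the tail contribution is $O\big((\Delta(\delta)+1)e^{-c_B\delta}\big)$. Adding the two contributions gives the claimed bound.

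None of these steps is delicate; the main point to check is the uniformity of the constants. First, the bound on $|y_i^{(\delta)}-\mu_y|$ must hold for all $i$, including tail points. Second, the $O(\cdot)$ constants in Lemma~\ref{lem:z_wm}(ii) must not depend on $\delta$; this uses $1/s_\pi=O(1)$ from Lemma~\ref{lem:BREAD_weights}. Note that $\Delta(\delta)$ may be unbounded, which is why it is kept explicitly as a factor rather than absorbed into a constant. We do not need the finer centring information from Assumption~\ref{ass:out}, namely $y_i^{(\delta)}=\bar y_s^{(\delta)}+O(\delta^{-1})$ on cores, because the smallness here comes entirely from the $\mathcal{R}$-coordinates.
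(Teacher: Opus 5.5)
Your proposal is correct and follows essentially the same route as the paper: you use the centred representation of $\bm{b}_{\mathcal{R}}$, the uniform bound $|y_i^{(\delta)}-\mu_y|\le \Delta(\delta)+2B_y$ from Lemma~\ref{lem:y_bound}(i), and a core/tail split based on Lemma~\ref{lem:z_wm}(ii) and the weight bounds of Lemma~\ref{lem:BREAD_weights}. The only difference is cosmetic: the paper factors out $\max_i|y_i^{(\delta)}-\mu_y|$ before splitting the sum, whereas you bound it term by term, which is equivalent.
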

	\begin{proof}
		Since
		\[\bm{b} = {(Z^{(\delta)})}^{\intercal} \Pi \bm{y} -\frac{1}{\sum_{i}^{n} \pi_i}{{(Z^{(\delta)})}^{ \intercal} \Pi \bm{1}}\bm{1}^{\intercal} \Pi \bm{y} 
		= \sum_{i=1}^{n} \pi_i (\bm{z}_i^{(\delta)} - \bm{\mu} )({y}^{(\delta)}_i - \mu_y),\]
		we have, 
		\[\| \bm{b}_{\mathcal{R}}\|_2 
		\le \max|{y}^{(\delta)}_i - \mu_y|\sum_{ i=1}^{n} \pi_i \|\bm{z}_{i,\mathcal{R}}^{(\delta)} - \bm{\mu}_{\mathcal{R}} \|_2.\]
		We bound \(\| \bm{b}_{\mathcal{R}}\|_2\) by splitting the sum into the tail set \(T^{(\delta)}\) and the two core sets
		\(J_0^{(\delta)}\cup J_1^{(\delta)}\). Lemma~\ref{lem:z_wm}(ii) and triangle inequality give, 
		\begin{enumerate}
			\item[(i)] for $i\in J_0^{(\delta)}\cup J_1^{(\delta)}$, 
			\[
			\max|{y}^{(\delta)}_i - \mu_y|\sum_{ i\in J_0^{(\delta)}\cup J_1^{(\delta)}} \pi_i \|\bm{z}_{i,\mathcal{R}}^{(\delta)} - \bm{\mu}_{\mathcal{R}}\|_2
			\leq \max|{y}^{(\delta)}_i - \mu_y|\sum_{ i\in J_0^{(\delta)}\cup J_1^{(\delta)}} \pi_i (O(\delta^{-2}) + O(e^{-c_B\delta})).
			\]
			\item[(ii)] for \(i\in T^{(\delta)}\),
			$\max|{y}^{(\delta)}_i - \mu_y| \sum_{i\in T^{(\delta)}} \pi_i = O(\max|{y}^{(\delta)}_i - \mu_y| e^{-c_B\delta})
			$
		\end{enumerate}
		
		By Lemma~\ref{lem:y_bound}, $\max_i|y^{(\delta)}_i-\mu_y|=O(\Delta(\delta)+1).$
		Therefore, 
		\[
		\| \bm{b}_{\mathcal{R}}\|_2 
		\leq
		O \big((\Delta(\delta)+1)e^{-c_B\delta}\big) + O \big((\Delta(\delta)+1)\delta^{-2}\big) \qedhere
		\]
	\end{proof}
	The preceding lemmas show that, under the BREAD kernel, the unshifted block $\bm{b}_{\mathcal{R}}$ is asymptotically negligible with order of $\delta^{-2}$ and up to exponentially small tail terms.
	
	\subsubsection{Bounding $M_{\mathcal{KR}}$}
	This section bounds the cross-block coupling $M_{\mathcal{KR}}$. This shows that, asymptotically, the BREAD linear system becomes nearly block diagonal.
	\begin{lemma}[Upper bound on $M_{\mathcal{KR}}$]\label{lem:MRK_B}
		Under Assumption~\ref{ass:samdata},
		\[
		\|M_{\mathcal{KR}}\|_2
		=
		\|M_{\mathcal{RK}}\|_2
		=
		O(\delta^{-1})+O(e^{-c_B\delta}),
		\qquad \delta\to\infty.
		\]
	\end{lemma}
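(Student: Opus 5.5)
The plan is to write $M_{\mathcal{KR}}$ as an explicit weighted cross-covariance and bound it term by term, using Lemmas~\ref{lem:BREAD_weights} and~\ref{lem:z_wm}.

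First, note that $\lambda\bm{I}_d$ is diagonal, so it contributes nothing off the diagonal blocks. Hence $M_{\mathcal{KR}}=A_{\mathcal{KR}}$.

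Next, rewrite $A$ in centred form, in the same way $\bm{b}$ was rewritten above. With $\bm{\mu}=(Z^{(\delta)})^\intercal\Pi\bm{1}/s_\pi$, expanding $\sum_i\pi_i(\bm{z}_i^{(\delta)}-\bm{\mu})(\bm{z}_i^{(\delta)}-\bm{\mu})^\intercal$ gives $Z^\intercal\Pi Z-s_\pi\bm{\mu}\bm{\mu}^\intercal$. This equals $A$, so
\[
A=\sum_{i=1}^n \pi_i(\bm{z}_i^{(\delta)}-\bm{\mu})(\bm{z}_i^{(\delta)}-\bm{\mu})^\intercal .
\]
Taking the $\mathcal{KR}$ block gives $M_{\mathcal{KR}}=\sum_i\pi_i(\bm{z}_{i,\mathcal{K}}^{(\delta)}-\bm{\mu}_{\mathcal{K}})(\bm{z}_{i,\mathcal{R}}^{(\delta)}-\bm{\mu}_{\mathcal{R}})^\intercal$. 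The spectral norm of a rank-one matrix $\bm{x}\bm{y}^\intercal$ is $\|\bm{x}\|_2\|\bm{y}\|_2$, so by the triangle inequality
\[
\|M_{\mathcal{KR}}\|_2\le\sum_{i=1}^n\pi_i\,\|\bm{z}_{i,\mathcal{K}}^{(\delta)}-\bm{\mu}_{\mathcal{K}}\|_2\,\|\bm{z}_{i,\mathcal{R}}^{(\delta)}-\bm{\mu}_{\mathcal{R}}\|_2 .
\]
Symmetry of $M$ gives $M_{\mathcal{RK}}=M_{\mathcal{KR}}^\intercal$, so the two norms coincide.

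Then split the sum into the core indices $J_0^{(\delta)}\cup J_1^{(\delta)}$ and the tail $T^{(\delta)}$.

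On the core, $\pi_i\le1$. Lemma~\ref{lem:z_wm}(i) gives a $\mathcal{K}$-factor of $O(\delta)$, and Lemma~\ref{lem:z_wm}(ii) gives an $\mathcal{R}$-factor of $O(\delta^{-2})+O(e^{-c_B\delta})$. Since $n$ is fixed, the core contribution is $O(\delta^{-1})+O(\delta e^{-c_B\delta})$. This is exactly where the $\delta^{-1}$ rate comes from: the $\delta^{-2}$ closeness in Assumption~\ref{ass:samdata} beats the $O(\delta)$ spread in the shifted block.

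On the tail, Lemma~\ref{lem:BREAD_weights}(i) gives $\pi_i\le K_Be^{-c_B\delta}$. The $\mathcal{K}$-factor is $O(\delta)$ and the $\mathcal{R}$-factor is $O(1)$ by Lemma~\ref{lem:z_wm}. So the tail contribution is $O(\delta e^{-c_B\delta})$.

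The only delicate point is bookkeeping the stray factor $\delta$ multiplying the exponential. I would handle it by noting that $\delta e^{-c_B\delta}\le e^{-(c_B/2)\delta}$ for large $\delta$, and then renaming $c_B/2$ as $c_B$; Lemma~\ref{lem:BREAD_weights} only asserts the existence of some $c_B>0$, so this is legitimate. Alternatively, one can simply observe that $\delta e^{-c_B\delta}=o(\delta^{-1})$, so it is absorbed into the $O(\delta^{-1})$ term. Either way this yields $\|M_{\mathcal{KR}}\|_2=O(\delta^{-1})+O(e^{-c_B\delta})$. Finally, I would double-check that the $\mathcal{K}$-block bound in Lemma~\ref{lem:z_wm}(i) is genuinely uniform over all $i$, which holds because every sample lies within $\rho+\delta$ of $\bm{t}_{\mathrm{ref}}$.
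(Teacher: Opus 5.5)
Your proof is correct and follows the paper's argument essentially line by line. Like the paper, you identify $M_{\mathcal{KR}}$ with the centred weighted cross-covariance $A_{\mathcal{KR}}$, bound it by the triangle inequality using rank-one spectral norms, and split the sum into core and tail using Lemmas~\ref{lem:BREAD_weights} and~\ref{lem:z_wm}. You are slightly more careful in two places: you verify the centred form of $A$, and you absorb the stray $\delta e^{-c_B\delta}$ term into $O(\delta^{-1})$, whereas the paper ends at $O(\delta^{-1})+O(\delta e^{-c_B'\delta})$ with a smaller constant $c_B'$.
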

	\begin{proof}
		Recall 
		$M = A +\lambda \bm{I}$ and
		$M_{\mathcal{R}\mathcal{K}} = A_{\mathcal{R}\mathcal{K}} = \sum_{i=1}^n \pi_i (\bm{z}_{i,\mathcal{R}}^{(\delta)} - \bm{\mu}_{\mathcal{R}} ) (\bm{z}_{i,\mathcal{K}}^{(\delta)} - \bm{\mu}_{\mathcal{K}} )^{\intercal}.$
		\[
		\|M_{\mathcal{KR}}\|_2
		=
		\|M_{\mathcal{RK}}\|_2
		=
		\left\|
		\sum_{i=1}^n \pi_i (\bm{z}_{i,\mathcal{R}}^{(\delta)} - \bm{\mu}_{\mathcal{R}} ) (\bm{z}_{i,\mathcal{K}}^{(\delta)} - \bm{\mu}_{\mathcal{K}} )^{\intercal} \right\|_2
		\leq 
		\sum_{i=1}^n \pi_i \|\bm{z}_{i,\mathcal{R}}^{(\delta)} - \bm{\mu}_{\mathcal{R}}\|_2 \|\bm{z}_{i,\mathcal{K}}^{(\delta)} - \bm{\mu}_{\mathcal{K}}\|_2
		\]
		
		By Lemma~\ref{lem:z_wm}(i), for all $i\in\{1,\dots,n\}$,
		\(\|\bm{z}_{i,\mathcal{K}}^{(\delta)}-\bm{\mu}_{\mathcal{K}}\|_2=O(\delta).\)
		
		Then, we split the sum into tail and core.
		
		For $i\in T^{(\delta)}$, by Lemmas~\ref{lem:z_wm}(i), $\|\bm{z}_{i,\mathcal{R}}^{(\delta)}-\bm{\mu}_{\mathcal{R}}\|_2=O(1)$ and by~\ref{lem:BREAD_weights}(i), $\pi_i\le K_B e^{-c_B\delta}$.
		Thus,
		\[
		\sum_{i\in T^{(\delta)}} \pi_i \|\bm{z}_{i,\mathcal{R}}^{(\delta)} - \bm{\mu}_{\mathcal{R}}\|_2 \|\bm{z}_{i,\mathcal{K}}^{(\delta)} - \bm{\mu}_{\mathcal{K}}\|_2
		= O(\delta e^{-c_B\delta}) = O(e^{-c^\prime_B\delta}),
		\]
		for some $0<c^\prime_B<c_B.$
		
		For $i\in J_0^{(\delta)}\cup J_1^{(\delta)}$, by Lemmas~\ref{lem:z_wm}(ii), $\|\bm{z}_{i,\mathcal{R}}^{(\delta)}-\bm{\mu}_{\mathcal{R}}\|_2=O(\delta^{-2})+O(e^{-c_B\delta}),$ and by~\ref{lem:BREAD_weights}(ii), $\pi_i \le 1$. Combined with \(\|\bm{z}_{i,\mathcal{K}}^{(\delta)}-\bm{\mu}_{\mathcal{K}}\|_2=O(\delta)\), it holds
		\[
		\sum_{i\in J_0^{(\delta)}\cup J_1^{(\delta)}} \pi_i \|\bm{z}_{i,\mathcal{R}}^{(\delta)} - \bm{\mu}_{\mathcal{R}}\|_2 \|\bm{z}_{i,\mathcal{K}}^{(\delta)} - \bm{\mu}_{\mathcal{K}}\|_2
		=
		O\left(\delta^{-1}\right) + O\bigl(\delta e^{-c^\prime_B\delta}\bigr).
		\]
		
		Hence,
		\begin{align*}
			\|M_{\mathcal{KR}}\|_2
			=
			\|M_{\mathcal{RK}}\|_2
			= 
			\left\|
			\sum_{i=1}^n \pi_i (\bm{z}_{i,\mathcal{R}}^{(\delta)} - \bm{\mu}_{\mathcal{R}} ) (\bm{z}_{i,\mathcal{K}}^{(\delta)} - \bm{\mu}_{\mathcal{K}} )^{\intercal} \right\|_2 
			& \leq  
			\sum_{i=1}^n \pi_i \|\bm{z}_{i,\mathcal{R}}^{(\delta)} - \bm{\mu}_{\mathcal{R}}\|_2 \|\bm{z}_{i,\mathcal{K}}^{(\delta)} - \bm{\mu}_{\mathcal{K}}\|_2 \\
			& =
			O\left(\delta^{-1}\right) + O\bigl(\delta e^{-c^\prime_B\delta}\bigr) \qedhere
		\end{align*} 
	\end{proof}
	Thus, the BREAD matrix $M$ is asymptotically block diagonal.
	
	\subsubsection{Lower bound on \(\hat{\bm{\beta}}_{\mathcal{K}}\)}
	It remains to show that \(\|\hat{\bm{\beta}}_{\mathcal{K}}\|_2\) does not vanish too fast. Write $\hat{\bm{\beta}}_{\mathcal{K}} = (S)^{-1} \left(\bm{b}_{\mathcal{K}} - M_{\mathcal{KR}} (M_{\mathcal{RR}})^{-1} \bm{b}_{\mathcal{R}}\right)$. The preceding Lemmas~\ref{lem:vR_B}, and~\ref{lem:MRK_B} imply that term $\left(M_{\mathcal{KR}} (M_{\mathcal{RR}})^{-1} \bm{b}_{\mathcal{R}}\right)$ is negligible. Thus, we first prove that the \(\bm{b}_{\mathcal{K}}\) has the lower bound with order $\delta\Delta(\delta)$, then combine this with the upper bound on $\|S\|_2$.
	\begin{lemma}[Lower bound on $\|\bm{b}_{\mathcal{K}}\|_2$]\label{lem:vK_B}
		Under Assumptions~\ref{ass:samdata} and~\ref{ass:out},
		\[
		\|\bm{b}_{\mathcal{K}}\|_2
		=
		\Omega(\delta\Delta(\delta)),
		\qquad \delta\to\infty.
		\]
		Since \(\Delta(\delta)\ge c_\Delta>0\), this implies
		\[
		\|\bm{b}_{\mathcal{K}}\|_2=\Omega(\delta).
		\]
	\end{lemma}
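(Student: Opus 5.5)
We lower-bound $\|\bm{b}_{\mathcal{K}}\|_2$ by its component along the shift direction. Write
\[
\bm{b}_{\mathcal{K}}=\sum_{i=1}^n \pi_i(\bm{z}^{(\delta)}_{i,\mathcal{K}}-\bm{\mu}_{\mathcal{K}})(y^{(\delta)}_i-\mu_y).
\]
Since $\bm{u}$ is a unit vector supported on $\mathcal{K}$, we have $\|\bm{b}_{\mathcal{K}}\|_2\ge |\bm{u}_{\mathcal{K}}^\intercal \bm{b}_{\mathcal{K}}|$. It therefore suffices to show that this scalar projection equals a positive constant times $\delta\Delta(\delta)$ plus lower-order terms.

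\textbf{Step 1: the tail.} The points in $T^{(\delta)}$ contribute $O(\delta(\Delta(\delta)+1)e^{-c_B\delta})$. This uses Lemma~\ref{lem:BREAD_weights}(i), Lemma~\ref{lem:z_wm}(i) and Lemma~\ref{lem:y_bound}(i), and the result is $o(\delta\Delta(\delta))$ because $\Delta(\delta)\ge c_\Delta$.

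\textbf{Step 2: the core projections.} Next we compute $\bm{u}_{\mathcal{K}}^\intercal(\bm{z}_{i,\mathcal{K}}-\bm{\mu}_{\mathcal{K}})$ on the core points. First sharpen Lemma~\ref{lem:z_wm}(i): exactly as in Lemma~\ref{lem:y_bound}(ii),
\[
\bm{\mu}_{\mathcal{K}}=\bm{t}_{\mathrm{ref},\mathcal{K}}+\tfrac{s_1}{s_\pi}\delta\bm{u}_{\mathcal{K}}+O(\delta^{-2})+O(\delta e^{-c_B\delta}).
\]
Hence:
\begin{itemize}
\item for $i\in J_0^{(\delta)}$, $\bm{u}_{\mathcal{K}}^\intercal(\bm{z}_{i,\mathcal{K}}-\bm{\mu}_{\mathcal{K}})=-\tfrac{s_1}{s_\pi}\delta+O(\delta^{-2})+O(\delta e^{-c_B\delta})$;
\item for $i\in J_1^{(\delta)}$, it equals $\tfrac{s_0}{s_\pi}\delta+O(\delta^{-2})+O(\delta e^{-c_B\delta})$.
\end{itemize}
Similarly, Lemma~\ref{lem:y_bound}(ii) together with Assumption~\ref{ass:out} gives:
\begin{itemize}
\item for $i\in J_0^{(\delta)}$, $y_i-\mu_y=-\tfrac{s_1}{s_\pi}\Delta(\delta)+O(\delta^{-1})+O((\Delta+1)e^{-c_B\delta})$;
\item for $i\in J_1^{(\delta)}$, $y_i-\mu_y=\tfrac{s_0}{s_\pi}\Delta(\delta)+O(\delta^{-1})+O((\Delta+1)e^{-c_B\delta})$.
\end{itemize}

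\textbf{Step 3: the main term.} Multiplying the core expressions and summing with weights, the leading parts in both core groups have the same (positive) sign. Using $\sum_{J_s}\pi_i = s_s+O(e^{-c_B\delta})$, the leading term is
\[
\delta\Delta(\delta)\Bigl(s_0\tfrac{s_1^2}{s_\pi^2}+s_1\tfrac{s_0^2}{s_\pi^2}\Bigr)=\delta\Delta(\delta)\tfrac{s_0s_1}{s_\pi}.
\]
Since $s_\pi\ge 1$ for large $\delta$, Lemma~\ref{lem:weightedsum_r} bounds this below by $c_0\,\delta\Delta(\delta)$. The cross terms are $O(\delta\cdot\delta^{-1})=O(1)$, $O(\Delta\delta^{-2})$ and exponentially small terms, all of which are $o(\delta\Delta(\delta))$. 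This gives $\|\bm{b}_{\mathcal{K}}\|_2\ge \tfrac{c_0}{2}\delta\Delta(\delta)$ for large $\delta$, and the corollary $\Omega(\delta)$ follows from $\Delta\ge c_\Delta$.

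The main obstacle is Step 2 combined with the sign argument in Step 3. Lemma~\ref{lem:z_wm}(i) only gives an $O(\delta)$ upper bound, which is not enough. We need the precise leading-order form of $\bm{\mu}_{\mathcal{K}}$ in terms of $s_1/s_\pi$, and we must check that the core contributions do not cancel. The $O(\delta^{-1})$ error in $y_i$ multiplied by the $O(\delta)$ displacement yields an $O(1)$ term. This is harmless only because the main term grows like $\delta$.
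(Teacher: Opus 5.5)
Your proposal is correct and follows essentially the same route as the paper: project $\bm{b}_{\mathcal{K}}$ onto $\bm{u}_{\mathcal{K}}$, derive the leading-order forms of the projected weighted mean (the paper's $\mu_{\bm{u}}$) and of $\mu_y$ with the factor $s_1/s_\pi$, check that both core groups contribute same-sign terms of order $\delta\Delta(\delta)$, and discard the $O(1)$, $O(\Delta(\delta)\delta^{-2})$ and exponentially weighted tail contributions. The only cosmetic difference is in the final bound on the leading coefficient: you collapse it to $s_0s_1/s_\pi$ and bound it below by $c_0$ using $s_\pi\ge 1$, whereas the paper leaves it as $\frac{s_1^2}{s_\pi^2}\sum_{i\in J_0^{(\delta)}}\pi_i+\frac{s_0^2}{s_\pi^2}\sum_{i\in J_1^{(\delta)}}\pi_i$ and bounds it below using the same two lemmas.
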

	\begin{proof}
		Recall $\bm{b}_{\mathcal{K}} = \sum_{i=1}^{n} \pi_i (\bm{z}_{i,\mathcal{K}}^{(\delta)} - \bm{\mu}_{\mathcal{K}})(y^{(\delta)}_{i} - \mu_y).$
		
		To find the lower bound on $\|\bm{b}_{\mathcal{K}}\|_2$, since $\|\bm{u}_{\mathcal{K}}\|_2 = 1$, we consider 
		\[
		\|\bm{b}_{\mathcal{K}}\|_2 \geq |\bm{u}_{\mathcal{K}}^{\intercal} \bm{b}_{\mathcal{K}}| = \left| \sum^{n}_{i=1} \pi_i (\bm{u}_{\mathcal{K}}^{\intercal} \bm{z}_{i,\mathcal{K}}^{(\delta)} - \mu_{\bm{u}})(y^{(\delta)}_{i} - \mu_y)\right|,
		\]
		where $\mu_{\bm{u}} = \frac{1}{s_\pi}\sum_{ i=1}^n \pi_{i} \bm{u}_{\mathcal{K}}^{\intercal} \bm{z}_{i,\mathcal{K}}^{(\delta)}.$

		We first estimate \({\mu}_{\bm{u}}\). Consider \(i\in J_0^{(\delta)}\), 
		\[ 
		\bm{u}_{\mathcal{K}}^{\intercal} \bm{z}_{i,\mathcal{K}}^{(\delta)} = \bm{u}_{\mathcal{K}}^{\intercal} \bm{t}_{\mathrm{ref},\mathcal{K}} + O(\delta^{-2}),
		\]
		while for \(i\in J_1^{(\delta)}\), since \(\bm{t}_{j^*}=\bm{t}_{\mathrm{ref}}+\delta \bm{u}\) and \(\|\bm{u}_{\mathcal{K}}\|_2=1\),
		\[ 
		\bm{u}_{\mathcal{K}}^{\intercal} \bm{z}_{i,\mathcal{K}}^{(\delta)} = \bm{u}_{\mathcal{K}}^{\intercal} \bm{t}_{\mathrm{ref},\mathcal{K}} + \delta + O(\delta^{-2}).
		\]
		For \(i\in T^{(\delta)}\), Assumption~\ref{ass:samdata} implies \(\bm{u}_{\mathcal{K}}^{\intercal} \bm{z}_{i,\mathcal{K}}^{(\delta)}=O(\delta)\), and Lemma~\ref{lem:BREAD_weights} gives \(\pi_i=O(e^{-c_B\delta})\). 
		
		Combining all,
		\[
		{\mu}_{\bm{u}} = \bm{u}_{\mathcal{K}}^{\intercal} \bm{t}_{\mathrm{ref},\mathcal{K}} + \frac{s_1}{s_\pi} \delta + O(\delta^{-2}) + O(\delta e^{-c_B\delta}).
		\]
		
		Then, we consider the $\|\bm{b}_{\mathcal{K}}\|_2$ for core points.
		
		For \(i\in J_0^{(\delta)}\),
		\[
		\bm{u}_{\mathcal{K}}^{\intercal} \bm{z}_{i,\mathcal{K}}^{(\delta)}- {\mu}_{\bm{u}}=-\frac{s_1}{s_\pi}\delta+O(\delta^{-2})+O(\delta e^{-c_B\delta}),
		\]
		and by Lemma~\ref{lem:y_bound}(ii),
		\[
		y^{(\delta)}_i-\mu_y=-\frac{s_1}{s_\pi}\Delta(\delta)+O(\delta^{-1})+O\bigl((\Delta(\delta)+1)e^{-c_B\delta}\bigr).
		\]
		Hence,
		\begin{align*}
			\bigl(\bm{u}_{\mathcal{K}}^{\intercal} \bm{z}_{i,\mathcal{K}}^{(\delta)} -  {\mu}_{\bm{u}}\bigr)\bigl(y^{(\delta)}_i-\mu_y\bigr)
			&=
			\frac{s_1^2}{s_\pi^2}\delta\Delta(\delta)
			+ O(1)
			+ O\!\bigl(\Delta(\delta)\delta^{-2}\bigr)
			+ O\!\bigl((\Delta(\delta)+1)\delta e^{-c_B\delta}\bigr).
		\end{align*}
		
		For \(i\in J_1^{(\delta)}\), similarly,
		\[
		\bm{u}_{\mathcal{K}}^{\intercal} \bm{z}_{i,\mathcal{K}}^{(\delta)}- {\mu}_{\bm{u}}=\frac{s_0}{s_\pi}\delta+O(\delta^{-2})+O(\delta e^{-c_B\delta}),
		\]
		\[y^{(\delta)}_i-\mu_y=\frac{s_0}{s_\pi}\Delta(\delta)+O(\delta^{-1})+O\bigl((\Delta(\delta)+1)e^{-c_B\delta}\bigr),
		\]
		and hence
		\begin{align*}
			\bigl(\bm{u}_{\mathcal{K}}^{\intercal} \bm{z}_{i,\mathcal{K}}^{(\delta)} -  {\mu}_{\bm{u}} \bigr)\bigl(y^{(\delta)}_i-\mu_y\bigr)
			&=
			\frac{s_0^2}{s_\pi^2}\delta\Delta(\delta)
			+ O(1)
			+ O\!\bigl(\Delta(\delta)\delta^{-2}\bigr)
			+ O\!\bigl((\Delta(\delta)+1)\delta e^{-c_B\delta}\bigr).
		\end{align*}
		
		Since Lemma~\ref{lem:BREAD_weights}(ii) gives \(\pi_i=1+O(\delta^{-1})\) on the core sets, summing over \(J_0^{(\delta)}\cup J_1^{(\delta)}\) yields
		\begin{align*}
			\sum_{i\in J_0^{(\delta)}\cup J_1^{(\delta)}} \pi_i
			\bigl(\bm{u}_{\mathcal{K}}^{\intercal} \bm{z}_{i,\mathcal{K}}^{(\delta)} -  {\mu}_{\bm{u}}\bigr)
			\bigl(y^{(\delta)}_i-\mu_y\bigr) & =
			\left[
			\frac{s_1^2}{s_\pi^2}\sum_{i\in J_0^{(\delta)}} \pi_i
			+
			\frac{s_0^2}{s_\pi^2}\sum_{i\in J_1^{(\delta)}} \pi_i
			\right]\delta\Delta(\delta) \\
			&\qquad\quad
			+ O(1)
			+ O\!\bigl(\Delta(\delta)\delta^{-2}\bigr)
			+ O\!\bigl((\Delta(\delta)+1)\delta e^{-c_B\delta}\bigr).
		\end{align*}
		
		By Lemma~\ref{lem:BREAD_weights}(ii) and~\ref{lem:weightedsum_r}, 
		since both $J_0$ and $J_1$ contribute terms of the same sign in \(\delta\Delta(\delta)\),
		the sum is bounded below by a positive multiple of \(\delta\Delta(\delta)\). 
		
		Therefore,
		\begin{align}
			\sum_{i\in J_0^{(\delta)}\cup J_1^{(\delta)}} \pi_i\bigl(\bm{u}_{\mathcal{K}}^{\intercal} \bm{z}_{i,\mathcal{K}}^{(\delta)} - {\mu}_{\bm{u}}\bigr) \bigl(y^{(\delta)}_i-\mu_y\bigr) = \Omega \bigl(\delta\Delta(\delta)\bigr).
			\label{eq:core_main_term}
		\end{align}
		
		Finally, we bound the tail part. 
		
		For \(i\in T^{(\delta)}\), by Lemma~\ref{lem:BREAD_weights}, \(\pi_i=O(e^{-c_B\delta})\), by Lemma~\ref{lem:y_bound}, $y^{(\delta)}_i-\mu_y = O(\Delta(\delta)+1)$, and Assumption~\ref{ass:samdata} implies \(\bm{u}_{\mathcal{K}}^{\intercal} \bm{z}_{i,\mathcal{K}}^{(\delta)} =O(\delta)\), and the bounds above yield \(\bm{u}_{\mathcal{K}}^{\intercal} \bm{z}_{i,\mathcal{K}}^{(\delta)} - {\mu}_{\bm{u}} = O(\delta).\)
		Hence
		\[
		\sum_{i\in T^{(\delta)}} \pi_i
		\left|
		\bigl(\bm{u}_{\mathcal{K}}^{\intercal} \bm{z}_{i,\mathcal{K}}^{(\delta)} - {\mu}_{\bm{u}}\bigr)
		\bigl(y^{(\delta)}_i-\mu_y\bigr)
		\right|
		=
		O \bigl((\Delta(\delta)+1)\delta e^{-c_B\delta}\bigr)
		=
		o \bigl(\delta\Delta(\delta)\bigr),
		\]
		because \(\Delta(\delta)\geq c_\Delta>0\).
		
		Combining this with \eqref{eq:core_main_term}, we obtain
		\[
		\bigl|\bm{u}_{\mathcal{K}}^{\intercal} \bm{b}_{\mathcal{K}}\bigr|=\Omega\!\bigl(\delta\Delta(\delta)\bigr).
		\]
		Therefore,
		\[
		\|\bm{b}_{\mathcal{K}}\|_2 \geq \bigl|\bm{u}_{\mathcal{K}}^{\intercal} \bm{b}_{\mathcal{K}}\bigr| = \Omega\!\bigl(\delta\Delta(\delta)\bigr).
		\]
		Since \(\Delta(\delta)\geq c_\Delta>0\), it follows that
		\[
		\|\bm{b}_{\mathcal{K}}\|_2=\Omega(\delta). \qedhere
		\]
	\end{proof}
	
	We now control the scale of the $\mathcal{K}$ block, reflecting the fact that the dominant variation in $A$ is along the shift direction.
	\begin{lemma}[Upper bound on $\|S\|_2$]\label{lem:s}
		
		Write the Schur complement $S = M_{\mathcal{KK}} - M_{\mathcal{KR}} (M_{\mathcal{RR}})^{-1}( M_{{\mathcal{KR}}})^\intercal$, 
		\[
		\|S\|_2 = O(\delta^2).
		\]
	\end{lemma}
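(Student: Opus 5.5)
The plan is to bound $S$ from above in the Loewner order by $M_{\mathcal{KK}}$, and then bound $\|M_{\mathcal{KK}}\|_2$ directly from the weighted-covariance representation of $A$. Both $M$ and $M_{\mathcal{RR}}$ are symmetric positive definite, since $A \succeq 0$ as a weighted covariance and $\lambda>0$. Hence $M_{\mathcal{KR}}(M_{\mathcal{RR}})^{-1}M_{\mathcal{KR}}^{\intercal}\succeq 0$, which gives
\[
0 \prec S \preceq M_{\mathcal{KK}} .
\]
Here $S\succ 0$ because $S^{-1}$ is the $\mathcal{KK}$ block of $M^{-1}\succ 0$. For symmetric positive semidefinite matrices the spectral norm equals the largest eigenvalue, and this is monotone in the Loewner order. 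So $\|S\|_2\le \|M_{\mathcal{KK}}\|_2$, and it suffices to show $\|M_{\mathcal{KK}}\|_2=O(\delta^2)$.

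Next I would write, analogously to the expression for $M_{\mathcal{RK}}$ used in Lemma~\ref{lem:MRK_B},
\[
M_{\mathcal{KK}} = \lambda \bm{I}_k + \sum_{i=1}^n \pi_i \bigl(\bm{z}_{i,\mathcal{K}}^{(\delta)}-\bm{\mu}_{\mathcal{K}}\bigr)\bigl(\bm{z}_{i,\mathcal{K}}^{(\delta)}-\bm{\mu}_{\mathcal{K}}\bigr)^{\intercal}.
\]
This identity holds because $Z^{\intercal}\Pi Z - s_\pi \bm{\mu}\bm{\mu}^{\intercal}=\sum_i \pi_i(\bm{z}_i-\bm{\mu})(\bm{z}_i-\bm{\mu})^{\intercal}$. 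Applying the triangle inequality and $\|\bm{v}\bm{v}^{\intercal}\|_2=\|\bm{v}\|_2^2$ gives
\[
\|M_{\mathcal{KK}}\|_2\le \lambda+\sum_{i=1}^n \pi_i\|\bm{z}_{i,\mathcal{K}}^{(\delta)}-\bm{\mu}_{\mathcal{K}}\|_2^2 .
\]
Lemma~\ref{lem:z_wm}(i) gives $\|\bm{z}_{i,\mathcal{K}}^{(\delta)}-\bm{\mu}_{\mathcal{K}}\|_2=O(\delta)$ for every $i$, and $0<\pi_i\le 1$ with $n$ fixed. The sum is therefore at most $n\cdot O(\delta^2)$, so $\|M_{\mathcal{KK}}\|_2=O(\delta^2)$ and hence $\|S\|_2=O(\delta^2)$. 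If a sharper constant is wanted, the sum can be split into core and tail parts: the tail contributes $O(\delta^2 e^{-c_B\delta})$ by Lemma~\ref{lem:BREAD_weights}(i), and the core contributes $\bigl(s_0s_1/s_\pi\bigr)\delta^2+O(\delta^{-1})$. This split is not needed for the stated bound.

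The main obstacle is justifying the reduction $\|S\|_2\le\|M_{\mathcal{KK}}\|_2$ cleanly. It needs the sign of the subtracted term, which requires $M_{\mathcal{RR}}^{-1}\succeq0$, and the monotonicity of the spectral norm on positive semidefinite matrices. Both are standard. An alternative that avoids the Loewner argument is the crude bound $\|S\|_2\le\|M_{\mathcal{KK}}\|_2+\|M_{\mathcal{KR}}\|_2^2/\lambda$. Lemma~\ref{lem:MRK_B} makes the second term $O(\delta^{-2})$, so either route gives the result.
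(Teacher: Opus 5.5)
Your proposal is correct and follows essentially the same route as the paper: the Loewner sandwich $0 \preceq S \preceq M_{\mathcal{KK}}$ reduces the claim to bounding $\|M_{\mathcal{KK}}\|_2 \le \lambda + \sum_{i} \pi_i \|\bm{z}_{i,\mathcal{K}}^{(\delta)}-\bm{\mu}_{\mathcal{K}}\|_2^2$, which is $O(\delta^2)$ by Lemma~\ref{lem:z_wm}(i), $\pi_i \le 1$, and fixed $n$. Your extras are valid but not needed: the argument that $S \succ 0$ because $S^{-1}=(M^{-1})_{\mathcal{KK}}$, and the alternative bound $\|M_{\mathcal{KK}}\|_2+\|M_{\mathcal{KR}}\|_2^2/\lambda$ via Lemma~\ref{lem:MRK_B}.
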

	\begin{proof}
		Since \(M\) is symmetric positive definite, its Schur complement \(S = M_{\mathcal{KK}} - M_{\mathcal{KR}} (M_{\mathcal{RR}})^{-1} M_{\mathcal{RK}}\) is also symmetric positive definite. Moreover, $M_{\mathcal{KR}} (M_{\mathcal{RR}})^{-1} M_{\mathcal{RK}} \succeq 0.$ 
		Hence,
		\[
		0 \preceq S \preceq M_{\mathcal{KK}}.
		\]
		Therefore,
		\[\|S\|_2 \le \|M_{\mathcal{KK}}\|_2.\]
		
		Since $M_{\mathcal{KK}} = \lambda \bm{I}_{\mathcal{KK}} + A_{\mathcal{KK}},$
		where $A_{\mathcal{KK}} = \sum_{i=1}^{n} \pi_i (\bm{z}_{i,\mathcal{K}}^{(\delta)} - \bm{\mu}_{\mathcal{K}})(\bm{z}_{i,\mathcal{K}}^{(\delta)} - \bm{\mu}_{\mathcal{K}})^{\intercal}.$
		By the triangle inequality, 
		\[
		\|A_{\mathcal{KK}}\|_2 \leq \sum_{i=1}^{n} \pi_i \|\bm{z}_{i,\mathcal{K}}^{(\delta)} - \bm{\mu}_{\mathcal{K}}\|_2^2.
		\]
		By Lemma~\ref{lem:z_wm}(i),
		\[
		\|A_{\mathcal{KK}}\|_2 \leq \sum_{i=1}^{n} \pi_i O(\delta^2).
		\]
		Combined with $\pi_i\leq1$ and fixed $n$,
		\[
		\|S\|_2 \le \|M_{\mathcal{KK}}\|_2 = O(\delta^2). \qedhere
		\]
	\end{proof}
	\begin{lemma}[Lower bound on the BREAD coefficients on $\mathcal{K}$]\label{lem:betaK_B}
		Under Assumptions~\ref{ass:samdata} and~\ref{ass:out},
		\[
		\|\hat{\bm{\beta}}_{\mathcal{K}}\|_2
		=
		\Omega\!\left(\frac{\Delta(\delta)}{\delta}\right),
		\qquad \delta\to\infty.
		\]
		In particular,
		\[
		\|\hat{\bm{\beta}}_{\mathcal{K}}\|_2=\Omega(\delta^{-1}).
		\]
	\end{lemma}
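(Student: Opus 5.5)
The plan is to work directly from the Schur-complement expression
\[
\hat{\bm{\beta}}_{\mathcal{K}} = S^{-1}\bigl(\bm{b}_{\mathcal{K}} - M_{\mathcal{KR}} M_{\mathcal{RR}}^{-1}\bm{b}_{\mathcal{R}}\bigr)
\]
and bound numerator and denominator separately. Since $S$ is symmetric positive definite, for any vector $\bm{v}$ we have $\|S^{-1}\bm{v}\|_2 \ge \|\bm{v}\|_2/\|S\|_2$, because the smallest eigenvalue of $S^{-1}$ equals $1/\lambda_{\max}(S)$. Thus it suffices to show two things:
\[
\|\bm{b}_{\mathcal{K}} - M_{\mathcal{KR}} M_{\mathcal{RR}}^{-1}\bm{b}_{\mathcal{R}}\|_2 = \Omega(\delta\Delta(\delta)),
\qquad
\|S\|_2 = O(\delta^2).
\]
The second is exactly Lemma~\ref{lem:s}, so the work is in the first.

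For the numerator, I will use the reverse triangle inequality:
\[
\|\bm{b}_{\mathcal{K}} - M_{\mathcal{KR}} M_{\mathcal{RR}}^{-1}\bm{b}_{\mathcal{R}}\|_2 \ge \|\bm{b}_{\mathcal{K}}\|_2 - \|M_{\mathcal{KR}}\|_2\,\|M_{\mathcal{RR}}^{-1}\|_2\,\|\bm{b}_{\mathcal{R}}\|_2 .
\]
The first term is $\Omega(\delta\Delta(\delta))$ by Lemma~\ref{lem:vK_B}. For the correction term, $M_{\mathcal{RR}}\succeq\lambda\bm{I}$ gives $\|M_{\mathcal{RR}}^{-1}\|_2\le 1/\lambda$. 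Lemma~\ref{lem:MRK_B} gives $\|M_{\mathcal{KR}}\|_2 = O(\delta^{-1})$, absorbing the exponentially small piece. Lemma~\ref{lem:vR_B} gives $\|\bm{b}_{\mathcal{R}}\|_2 = O((\Delta(\delta)+1)\delta^{-2})$. Their product is $O((\Delta(\delta)+1)\delta^{-3})$. Because $\Delta(\delta)\ge c_\Delta>0$, we have $\Delta(\delta)+1\le(1+1/c_\Delta)\Delta(\delta)$, so this is $o(\delta\Delta(\delta))$. Hence for all sufficiently large $\delta$ the numerator is at least half the lower bound from Lemma~\ref{lem:vK_B}, i.e.\ $\Omega(\delta\Delta(\delta))$.

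Combining the two bounds gives
\[
\|\hat{\bm{\beta}}_{\mathcal{K}}\|_2 \ge \frac{\Omega(\delta\Delta(\delta))}{O(\delta^2)} = \Omega\bigl(\Delta(\delta)/\delta\bigr).
\]
Using $\Delta(\delta)\ge c_\Delta$ then yields the particular case $\Omega(\delta^{-1})$.

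The main obstacle is uniformity of constants. The $\Omega$ constant in Lemma~\ref{lem:vK_B} comes from $c_0$ in Lemma~\ref{lem:weightedsum_r} and the core weights, and the $O$ constant in Lemma~\ref{lem:s} comes from Lemma~\ref{lem:z_wm}(i). I will check that both are independent of $\delta$, so that the threshold $\delta_0$ beyond which the correction is at most half the main term can be chosen uniformly. I will also note that $\Delta(\delta)$ may grow with $\delta$. The relative-error argument above uses only $\Delta\ge c_\Delta$ and never an upper bound on $\Delta$, so growth causes no problem.
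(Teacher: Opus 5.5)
Your proposal is correct and follows the paper's proof essentially step for step. Both arguments use the Schur-complement expression for $\hat{\bm{\beta}}_{\mathcal{K}}$, the bound $\|\bm{v}\|_2\le\|S\|_2\,\|S^{-1}\bm{v}\|_2$, the reverse triangle inequality with $\|M_{\mathcal{RR}}^{-1}\|_2\le 1/\lambda$, and Lemmas~\ref{lem:vR_B}, \ref{lem:MRK_B}, \ref{lem:vK_B} and~\ref{lem:s}. Your explicit use of $\Delta(\delta)+1\le(1+1/c_\Delta)\Delta(\delta)$ to show the correction term is $o(\delta\Delta(\delta))$ simply makes precise the step the paper dismisses as ``negligible.''
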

	\begin{proof}
		Recall
		\[
		\hat{\bm{\beta}}_{\mathcal{K}}
		= (S)^{-1} \left(\bm{b}_{\mathcal{K}} - M_{\mathcal{KR}}(M_{\mathcal{RR}})^{-1}\bm{b}_{\mathcal{R}}\right).
		\]
		
		Thus, 
		\[
		S \hat{\bm{\beta}}_{\mathcal{K}} = \left(\bm{b}_{\mathcal{K}} - M_{\mathcal{KR}}(M_{\mathcal{RR}})^{-1}\bm{b}_{\mathcal{R}}\right).
		\]
		Then, we have,
		\[
		\|\left(\bm{b}_{\mathcal{K}} - M_{\mathcal{KR}}(M_{\mathcal{RR}})^{-1}\bm{b}_{\mathcal{R}}\right)\|_2 \leq
		\|S\|_2 \|\hat{\bm{\beta}}_{\mathcal{K}}\|_2.
		\]
		Hence,
		\[
		\|\hat{\bm{\beta}}_{\mathcal{K}}\|_2
		\ge \frac{1}{\|S\|_2}\Big(\|\bm{b}_{\mathcal{K}}\|_2
		- \|M_{\mathcal{KR}}\|_2 \|(M_{\mathcal{RR}})^{-1}\|_2 \|\bm{b}_{\mathcal{R}}\|_2 \Big).
		\]
		Since \(A\succeq 0\) and \(\lambda>0\), we have \(M\succeq \lambda \bm{I}\), hence
		\begin{equation}\label{eq:MRRinv}
			\|(M_{\mathcal{RR}})^{-1}\|_2 \le \frac{1}{\lambda}.
		\end{equation}
		
		Using Lemmas~\ref{lem:vR_B}, and~\ref{lem:MRK_B},  
		$\|M_{\mathcal{KR}}\|_2 = O(\delta^{-1})+O(e^{-c_B\delta})$, and $\|\bm{b}_{\mathcal{R}}\|_2
		=
		O \big((\Delta(\delta)+1)\delta^{-2}\big)
		+
		O \big((\Delta(\delta)+1)e^{-c_B\delta}\big).$
		Thus, the second term in parentheses is $O \big((\Delta(\delta)+1)\delta^{-3}\big) + O \big((\Delta(\delta)+1)e^{-c_B\delta}\big)$, hence negligible.
		
		Hence, by Lemmas~\ref{lem:vK_B} and~\ref{lem:s},
		\[
		\|\hat{\bm{\beta}}_{\mathcal{K}}\|_2
		\ge \frac{1}{\|S\|_2}\Big(\|\bm{b}_{\mathcal{K}}\|_2 - \|M_{\mathcal{KR}}\|_2\|(M_{\mathcal{RR}})^{-1}\|_2\|\bm{b}_{\mathcal{R}}\|_2\Big)
		\geq
		\frac{1}{O(\delta^2)}\big(\Omega(\delta \Delta(\delta))\big) = \Omega(\frac{\Delta(\delta)}{\delta}),\]
		
		Since, $\Delta(\delta)\geq c_\Delta >0$, in particular $\|\hat{\bm{\beta}}_{\mathcal{K}}\|_2 = \Omega(\delta^{-1}).$
	\end{proof}
	
	\subsubsection{Proof of Theorem~\ref{th:faith}}
	We now combine the previous bounds to obtain the asymptotic sparsity of \(\hat{\bm{\beta}}\).
	\begin{theorem}[Theorem~{3.3}, restated]
		Suppose the shift setup in (5) holds, the synthetic data matrix $Z^{(\delta)}$ satisfies Assumption~\ref{ass:samdata}, and the corresponding outputs, ${y}_i^{(\delta)} = f(\bm{z}_i^{(\delta)})$, satisfy Assumption~\ref{ass:out}.
		Let $\hat{\bm{\beta}}$ denote the estimator in (4) and define $\epsilon  = \dfrac{\| \hat{\bm{\beta}}_{\mathcal{R}} \|_{2} }{\| \hat{\bm{ \beta }}_{ \mathcal{K} } \|_{2}}$.
		Then,
		\[{\epsilon} \rightarrow 0, \qquad \text{as} \, \delta \rightarrow \infty.\]
	\end{theorem}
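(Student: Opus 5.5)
The plan is to combine the three ingredients already assembled in this appendix with the Schur-complement bound derived at the start of Section~\ref{sec:appx_eB}. That reduction gives
\[
\epsilon \le \frac{1}{\lambda}\left( \frac{\|\bm{b}_{\mathcal{R}}\|_2}{\|\hat{\bm{\beta}}_{\mathcal K}\|_2} + \|M_{\mathcal{KR}}\|_2 \right),
\]
using $M_{\mathcal{RR}}\succeq\lambda\bm{I}$ and the identity $\hat{\bm{\beta}}_{\mathcal{R}}=M_{\mathcal{RR}}^{-1}(\bm{b}_{\mathcal{R}}-M_{\mathcal{KR}}^{\intercal}\hat{\bm{\beta}}_{\mathcal{K}})$. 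It therefore suffices to show that both terms in the parentheses vanish as $\delta\to\infty$. Since $\lambda>0$ is fixed, the prefactor $1/\lambda$ plays no role.

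For the second term, Lemma~\ref{lem:MRK_B} directly gives $\|M_{\mathcal{KR}}\|_2=O(\delta^{-1})+O(\delta e^{-c'_B\delta})$, and this tends to $0$.

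For the first term, I combine Lemma~\ref{lem:vR_B} with Lemma~\ref{lem:betaK_B}. The numerator satisfies $\|\bm{b}_{\mathcal{R}}\|_2=O((\Delta(\delta)+1)\delta^{-2})+O((\Delta(\delta)+1)e^{-c_B\delta})$. The denominator satisfies $\|\hat{\bm{\beta}}_{\mathcal K}\|_2\ge C\,\Delta(\delta)/\delta$ for large $\delta$. Hence
\[
\frac{\|\bm{b}_{\mathcal{R}}\|_2}{\|\hat{\bm{\beta}}_{\mathcal K}\|_2} = O\!\left(\frac{\Delta(\delta)+1}{\Delta(\delta)}\,\delta^{-1}\right) + O\!\left(\frac{\Delta(\delta)+1}{\Delta(\delta)}\,\delta e^{-c_B\delta}\right).
\]
The key observation is that $\Delta(\delta)\ge c_\Delta>0$ from Assumption~\ref{ass:out}. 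This gives $(\Delta(\delta)+1)/\Delta(\delta)\le 1+1/c_\Delta$ uniformly in $\delta$, so the ratio is $O(\delta^{-1})$. The argument does not need $\Delta(\delta)$ to be bounded above, because $\Delta$ cancels between numerator and denominator.

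The main obstacle is making sure this cancellation is legitimate. The lower bound in Lemma~\ref{lem:betaK_B} must scale exactly like $\Delta(\delta)/\delta$, and not merely like $\delta^{-1}$, whenever $\Delta$ grows. Otherwise a growing $\Delta$ would break the estimate. I would recheck Lemma~\ref{lem:vK_B} and its use inside Lemma~\ref{lem:betaK_B} on two points:
\begin{itemize}
\item The correction terms $O(1)$ and $O(\Delta\delta^{-2})$ must be $o(\delta\Delta(\delta))$. This holds because $\Delta\ge c_\Delta$.
\item The subtracted term $\|M_{\mathcal{KR}}\|_2\|M_{\mathcal{RR}}^{-1}\|_2\|\bm{b}_{\mathcal{R}}\|_2=O((\Delta+1)\delta^{-3})$ must be $o(\delta\Delta)$, which it is.
\end{itemize}
With both points confirmed, $\epsilon=O(\delta^{-1})\to0$, which proves the claim and also yields an explicit rate.
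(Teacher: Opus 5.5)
Your proposal is correct and follows the paper's proof essentially step for step. It uses the same Schur-complement bound $\epsilon \le \lambda^{-1}\bigl(\|\bm{b}_{\mathcal{R}}\|_2/\|\hat{\bm{\beta}}_{\mathcal{K}}\|_2 + \|M_{\mathcal{KR}}\|_2\bigr)$, feeds in Lemmas~\ref{lem:vR_B}, \ref{lem:MRK_B} and~\ref{lem:betaK_B}, and, as the paper does, keeps $(\Delta(\delta)+1)/\Delta(\delta)$ bounded via $\Delta(\delta)\ge c_\Delta$ to obtain $\epsilon = O(\delta^{-1})$ up to exponentially small terms.
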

	\begin{proof}
		Recall
		\[
		\epsilon = \frac{\|\hat{\bm{\beta}}_{\mathcal{R}}\|_2}{\|\hat{\bm{\beta}}_{\mathcal{K}}\|_2}
		\le \frac{1}{\lambda}\left( \frac{\|\bm{b}_{\mathcal{R}}\|_2 }{\|\hat{\bm{\beta}}_{\mathcal{K}}\|_2} + \|M_{\mathcal{KR}}\|_2 \right).
		\]
		By Lemma~\ref{lem:vR_B},
		\(
		\|\bm{b}_{\mathcal{R}}\|_2 = O\!\bigl((\Delta(\delta)+1)\delta^{-2}\bigr) + O\!\bigl((\Delta(\delta)+1)e^{-c_B\delta}\bigr).
		\)
		By Lemma~\ref{lem:betaK_B},
		\(
		\|\hat{\bm{\beta}}_{\mathcal{K}}\|_2 = \Omega\!\left(\frac{\Delta(\delta)}{\delta}\right).
		\)
		Therefore,
		\begin{align*}
			\frac{\|\bm{b}_{\mathcal{R}}\|_2}{\|\hat{\bm{\beta}}_{\mathcal{K}}\|_2} = O\!\left(\frac{\Delta(\delta)+1}{\Delta(\delta)}\delta^{-1}\right)+O\!\left(\frac{\Delta(\delta)+1}{\Delta(\delta)}\delta e^{-c_B\delta}\right).
		\end{align*}
		Since \(\Delta(\delta)\ge c_\Delta>0\), we have
		\[
		\frac{\Delta(\delta)+1}{\Delta(\delta)}=O(1).
		\]
		Hence,
		\[
		\frac{\|\bm{b}_{\mathcal{R}}\|_2}{\|\hat{\bm{\beta}}_{\mathcal{K}}\|_2} = O(\delta^{-1})+O(\delta e^{-c_B\delta}).
		\]
		By Lemma~\ref{lem:MRK_B}, $\|M_{\mathcal{KR}}\|_2 = O(\delta^{-1})+O(e^{-c_B\delta}).$ 
		Combining all,
		\[
		\epsilon(\delta) \le \frac{1}{\lambda} \Bigl( O(\delta^{-1})+O(\delta e^{-c_B\delta}) + O(\delta^{-1})+O(e^{-c_B\delta}) \Bigr).
		\]
		Therefore,
		\[
		\epsilon(\delta) = O(\delta^{-1})+O(\delta e^{-c_B\delta}),
		\] 
		for some fixed \(0<c^\prime_B<c_B\).
		Thus,
		\[
		\epsilon(\delta)= O(\delta^{-1})+O(e^{-c_B'\delta}).
		\]
		In particular, \(\epsilon(\delta)\to 0\) as \(\delta\to\infty\).
	\end{proof}

	\section{Experiment Details}
	\subsection{Computation infrastructure}
	Experiments were run on a single compute node with an 192 cores AMD Genoa CPU, 336 GB RAM, and Red Hat Enterprise Linux 9.8 (Plow). We used Python 3.11.5, NumPy 1.26.4, scikit-learn 1.5.2, and SciPy 1.12.0.
	\subsection{Simulation Data Setup}\label{sec:simsetup}
	Specifically, we simulate $d=500$ features over $n=2000$ time points according to
	\begin{equation*}\label{eq:simdata}
		Y_t = \mu_0 + \phi Y_{t-1} 
		+ \left( a \cos\left(\frac{2\pi (t+\delta)}{T}\right)
		+ b \sin\left(\frac{2\pi (t+\delta)}{T}\right) \right)
		+ \varepsilon_t ,
	\end{equation*}
	where $Y_t \in \mathbb{R}^d$, $\mu_0 \in \mathbb{R}^d$ denotes the mean baseline vector, and $\phi$ is the AR(1) coefficient. The error term satisfies $\varepsilon_t \sim \mathcal{N}_d(0,\Sigma_0)$, where $\Sigma_0$ is the $d\times d$ in-control variance--covariance matrix. The final term is a first-order Fourier component that induces seasonality with period $T$, amplitude parameters $a$ and $b$, and phase shift $\delta$.
	\begin{figure}[h]
		\centering
		\includegraphics[width=0.25\columnwidth]{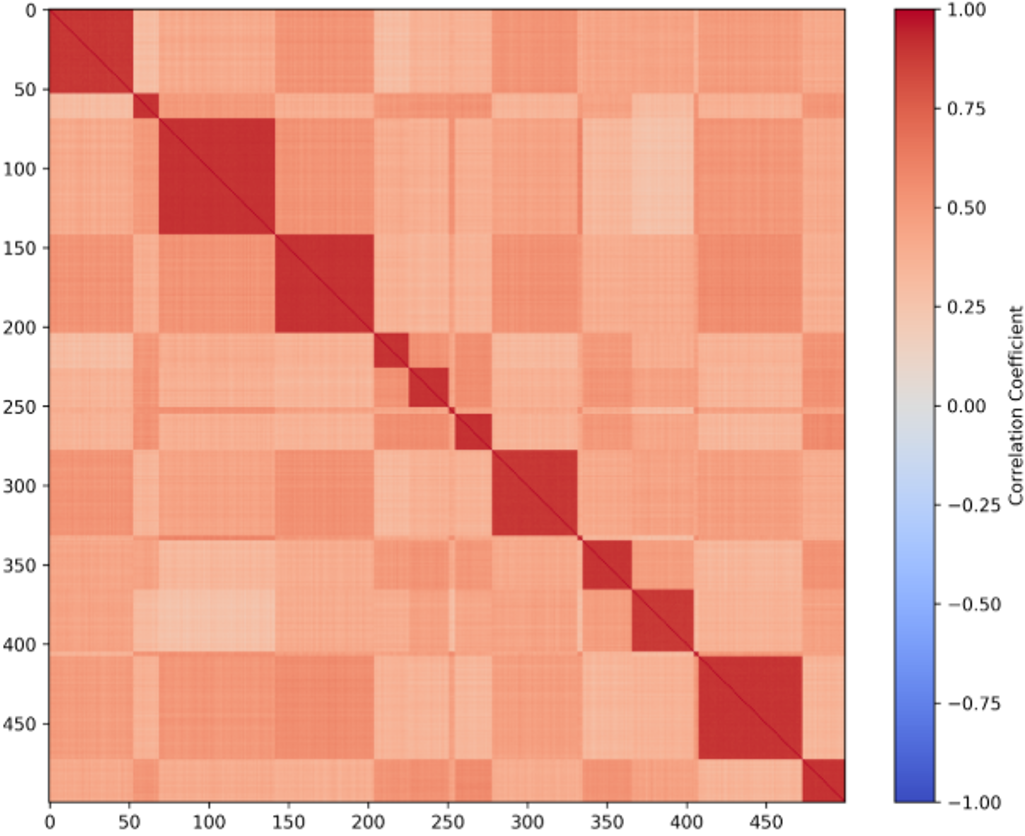}
		\caption{\textbf{Heatmap of the correlation matrix among the $d=500$ simulated features in Phase I.} 
			Variables are ordered by correlation blocks, yielding a clear block-diagonal pattern with strong within-block dependence ($>0.8$) and comparatively weaker between-block correlations (lighter off-diagonal regions, $<0.5$). The colour bar encodes Pearson correlation coefficients on $[-1,1]$, with diagonal elements equal to 1. There are 15 blocks in total with sizes of 53, 16, 73, 62, 22, 25, 4, 23, 54, 3, 31, 39, 3, 65 and 27, respectively.}
		\label{fig:cor_matrix}
	\end{figure}
	
	The covariance structure $\Sigma_0$ is constructed to exhibit block-wise dependence (i.e., correlation blocks) to reflect clustered feature relationships. Variables are ordered by correlation blocks, yielding a clear block-diagonal pattern with strong within-block dependence ($>0.8$) and comparatively weaker between-block correlations (lighter off-diagonal regions, $<0.5$). There are 15 blocks in total with sizes of 53, 16, 73, 62, 22, 25, 4, 23, 54, 3, 31, 39, 3, 65 and 27, respectively.
	An example of a correlation matrix is shown in Figure~\ref{fig:cor_matrix}.
\subsection{Monitoring Method Performance} \label{sec:monit_perf}
The monitoring performance of the AI-based methods is measured by false alarm probability (FAP), conditional expected delay (CED) and recall. 

FAP assesses Phase I performance, and all models are set to achieve a comparable FAP level. Under a pre-defined FAP, CED measures how quickly the method can detect the change, while recall rate denotes the fraction of signals detected in the total Phase II data. Both summarize Phase II detection performance, and lower FAP and CED indicate better performance.

For all scenarios we use, the FAP of the monitoring model is controlled as around $0.01$, and the averaged actual FAP over 1000 runs is about $0.0115$. CED and recall are reported in Table~\ref{table:ced&recall}.
\begin{table}[!ht]
	\centering
	\caption{Monitoring performance}
		\resizebox{\textwidth}{!}{
	\begin{tabular}{c|c|ccccccccccc}
		\hline
		Scenario & Metric & \multicolumn{11}{c}{Shift} \\
		\cline{3-13}
		& 
		& $-5$ & $-4$ & $-3$ & $-2$ & $-1$ & $0$ & $1$ & $2$ & $3$ & $4$ & $5$ \\
		\hline
		
		\multirow{2}{*}{S1}
		& CED   & 0.173 & 1.412 & 7.520 & 20.849 & 41.195 & 52.707 & 45.227 & 21.569 & 7.471 & 2.432 & 0.554 \\
		& Recall & 0.757 & 0.450 & 0.190 & 0.066  & 0.031  & 0.024  & 0.029  & 0.068  & 0.209 & 0.497 & 0.799 \\
		\hline
		
		\multirow{2}{*}{S2}
		& CED   & 11.524 & 17.593 & 25.196 & 36.243 & 46.105 & 52.707 & 47.773 & 39.409 & 29.002 & 22.353 & 17.081 \\
		& recal & 0.139  & 0.094  & 0.060  & 0.039  & 0.027  & 0.024  & 0.027  & 0.039  & 0.060  & 0.092  & 0.136 \\
		\hline
		
		\multirow{2}{*}{S3}
		& CED   & 0.015 & 0.267 & 2.343 & 10.401 & 32.264 & 52.707 & 38.090 & 15.011 & 7.047 & 3.217 & 1.007 \\
		& recal & 0.791 & 0.540 & 0.283 & 0.112  & 0.040  & 0.024  & 0.038  & 0.108  & 0.285 & 0.555 & 0.805 \\
		\hline
	\end{tabular}}
	\label{table:ced&recall}
\end{table}

\subsection{Hyperparameters}\label{sec:hyperparams}
Table~\ref{tab:hyperparams} summarises all hyperparameters used in the simulation study. All parameters are held fixed across methods unless otherwise stated. BREAD and LIME share the same ridge penalty $\lambda$, kernel width $\sigma$, and sample size $n$ to ensure a controlled comparison.

\begin{table}[h]
	\centering
	\caption{Hyperparameter settings.}
		\resizebox{\textwidth}{!}{
	\begin{tabular}{ccc}
		\hline
		Model & Parameter & Value\\
		\hline
		\multirow{5}{*}{{Surrogate model (shared by BREAD and LIME)}}
		& Ridge penalty              & $\lambda = 1$ 						   	\\
		& Surrogate model            & Weighted ridge regression               \\
		& Kernel type                & Gaussian                                \\
		& Kernel width               & $\sigma = 2.5$		    				\\
		& Sample size                & $n = 6000$                              \\
		\hline
		\multirow{4}{*}{{Anomaly detection model}}
		& Model type                 & LSTM-based $T^2$ control chart          \\
		& LSTM layers                & 3                  					\\
		& LSTM hidden dimension      & 48                  					\\
		& Training window length     & 168                						\\
		\hline
	\end{tabular}}
	\label{tab:hyperparams}
\end{table}

\subsection{Computational cost}\label{sec:compcost}
Table~\ref{tab:compcost} reports the average time per explanation, measured over the faithfulness experiments (1000 runs, Scenario 1, shift$=-5$). BREAD and LIME are comparable in runtime, as both require the same number of model evaluations with equivalent surrogate fitting cost. The naive method is approximately four times slower than the both, as it constructs a $502$ coalitions, including full and empty, and solves a $(502\times 501)$ weighted least-squares system, following a KernelSHAP-style leave-one-out strategy, which dominates the runtime at high dimension.
\begin{table}[!ht]
	\centering
	\caption{Run time}
	\begin{tabular}{c|ccc}
		\hline
		Model & BREAD	& LIME	& Naive	\\
		\hline
		Average Run time (std) & 0.459s (0.070)	& 0.460s (0.073) 	&1.962s (0.142)	\\
		\hline
	\end{tabular}
	\label{tab:compcost}
\end{table}

\subsection{Qualitative Results}
Figure~\ref{fig:qualitative_result} illustrates an example of diagnosing a signal by the proposed model and LIME under Scenario 1. 
In Figure~\ref{fig:qualitative_result_lime}, the feature importance is distributed nearly symmetrically around zero, which implies that the signal is driven by many features with similar contribution. By contrast, in Figure~\ref{fig:qualitative_result_blime}, there is a subset of features with notably high importance, indicating that these features contribute most to the monitoring statistic $T^2$. If we choose the top-53 features with the highest importance in this case, the proposed method is able to give the features with a shift.
\begin{figure}[!h]
	\centering
	\begin{subfigure}{0.4\columnwidth}
		\centering
		\includegraphics[width=\columnwidth]{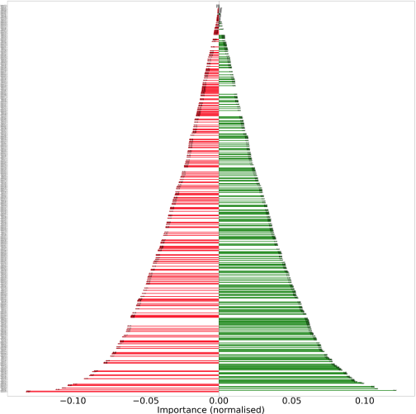}
		\caption{LIME}
		\label{fig:qualitative_result_lime}
	\end{subfigure}
	\begin{subfigure}{0.4\columnwidth}
		\centering
		\includegraphics[width=\columnwidth]{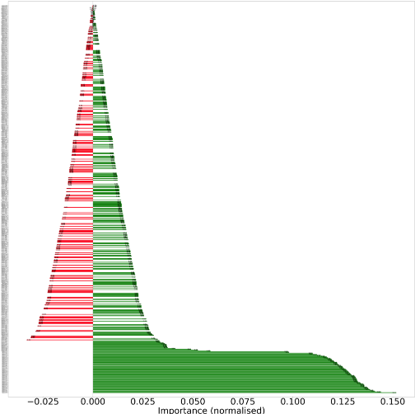}
		\caption{BREAD}
		\label{fig:qualitative_result_blime}
	\end{subfigure}
	\caption{\textbf{Signal diagnosis results.} Normalised feature importance is shown for two cases. Horizontal bars represent individual features ordered by importance. Values to the left of zero indicate negative contributions to the monitoring statistic, whereas values to the right indicate positive contributions. 
		(a) LIME. It shows an approximately symmetric distribution about zero. 
		(b) BREAD. It is positively skewed with an extended high-importance tail, overlapping with the actual shifted features (ground truth).}
	\label{fig:qualitative_result}
\end{figure} 

\subsection{Simulation Results}
The detailed simulation results are shown in Table~\ref{tab:acc}, which were also displayed in Figure~3 and~4.
\begin{table}[!h]
	\centering
	\caption{Explanation faithfulness under different scenarios and mean shift (Bold indicates the best performance).}
	\label{tab:acc}
	\setlength{\tabcolsep}{8pt}
	\resizebox{\textwidth}{!}{%
		\begin{tabular}{ccccccccc}
			\toprule
			&  & \multicolumn{4}{c}{Faithfulness (Cosine Similarity)} & \multicolumn{3}{c}{Robustness (Cosine Similarity)} \\
			\cmidrule(lr){3-6} \cmidrule(lr){7-9}
			Scenario & Shift & LIME & BREAD & Naive & Random pick & LIME & BREAD & Naive \\
			\midrule
			\multirow{11}{*}{S1}
			&$-5$ & $0.229\ (\pm 0.082)$ & $\bm{1.000}\ (\pm 0.011)$ & $0.694\ (\pm 0.111)$ & $0.106$ & $0.627\ (\pm 0.115)$ & $\bm{0.993}\ (\pm 0.036)$ & $0.698\ (\pm 0.106)$ \\
			&$-4$ & $0.201\ (\pm 0.084)$ & $\bm{0.990}\ (\pm 0.075)$ & $0.618\ (\pm 0.142)$ & $0.106$ & $0.588\ (\pm 0.128)$ & $\bm{0.958}\ (\pm 0.120)$ & $0.625\ (\pm 0.139)$ \\
			&$-3$ & $0.164\ (\pm 0.082)$ & $\bm{0.913}\ (\pm 0.219)$ & $0.492\ (\pm 0.183)$ & $0.106$ & $0.548\ (\pm 0.142)$ & $\bm{0.851}\ (\pm 0.239)$ & $0.524\ (\pm 0.173)$ \\
			&$-2$ & $0.131\ (\pm 0.077)$ & $\bm{0.663}\ (\pm 0.369)$ & $0.333\ (\pm 0.209)$ & $0.106$ & $0.490\ (\pm 0.154)$ & $\bm{0.550}\ (\pm 0.314)$ & $0.385\ (\pm 0.195)$ \\
			&$-1$ & $0.102\ (\pm 0.063)$ & $\bm{0.284}\ (\pm 0.338)$ & $0.167\ (\pm 0.171)$ & $0.106$ & $\bm{0.407}\ (\pm 0.150)$ & $0.183\ (\pm 0.170)$ & $0.245\ (\pm 0.169)$ \\
			&$0$  & $0.093\ (\pm 0.056)$ & $0.110\ (\pm 0.214)$ 	 & $0.091\ (\pm 0.118)$ & $0.106$ & $0.370\ (\pm 0.140)$ & $0.133\ (\pm 0.114)$ 	 & $0.203\ (\pm 0.142)$ \\
			&$1$  & $0.113\ (\pm 0.067)$ & $\bm{0.417}\ (\pm 0.388)$ & $0.219\ (\pm 0.197)$ & $0.106$ & $\bm{0.419}\ (\pm 0.146)$ & $0.226\ (\pm 0.183)$ & $0.264\ (\pm 0.163)$ \\
			&$2$  & $0.167\ (\pm 0.095)$ & $\bm{0.801}\ (\pm 0.338)$ & $0.452\ (\pm 0.233)$ & $0.106$ & $0.498\ (\pm 0.152)$ & $\bm{0.620}\ (\pm 0.307)$ & $0.413\ (\pm 0.191)$ \\
			&$3$  & $0.224\ (\pm 0.103)$ & $\bm{0.913}\ (\pm 0.209)$ & $0.609\ (\pm 0.198)$ & $0.106$ & $0.551\ (\pm 0.142)$ & $\bm{0.874}\ (\pm 0.223)$ & $0.552\ (\pm 0.170)$ \\
			&$4$  & $0.253\ (\pm 0.100)$ & $\bm{0.972}\ (\pm 0.088)$ & $0.679\ (\pm 0.162)$ & $0.106$ & $0.592\ (\pm 0.127)$ & $\bm{0.967}\ (\pm 0.104)$ & $0.653\ (\pm 0.134)$ \\
			&$5$  & $0.276\ (\pm 0.091)$ & $\bm{0.996}\ (\pm 0.020)$ & $0.726\ (\pm 0.125)$ & $0.106$ & $0.631\ (\pm 0.114)$ & $\bm{0.994}\ (\pm 0.030)$ & $0.724\ (\pm 0.102)$ \\
			\midrule
			
			\multirow{11}{*}{S2}
			&$-5$ & $0.152\ (\pm 0.055)$ & $\bm{0.998}\ (\pm 0.021)$ & $0.501\ (\pm 0.076)$ & $0.090$ & $0.540\ (\pm 0.131)$ & $\bm{0.989}\ (\pm 0.031)$ & $0.499\ (\pm 0.120)$ \\
			&$-4$ & $0.149\ (\pm 0.056)$ & $\bm{0.988}\ (\pm 0.069)$ & $0.461\ (\pm 0.089)$ & $0.090$ & $0.518\ (\pm 0.135)$ & $\bm{0.957}\ (\pm 0.080)$ & $0.464\ (\pm 0.132)$ \\
			&$-3$ & $0.144\ (\pm 0.057)$ & $\bm{0.917}\ (\pm 0.164)$ & $0.401\ (\pm 0.103)$ & $0.090$ & $0.489\ (\pm 0.140)$ & $\bm{0.810}\ (\pm 0.165)$ & $0.410\ (\pm 0.145)$ \\
			&$-2$ & $0.137\ (\pm 0.056)$ & $\bm{0.683}\ (\pm 0.260)$ & $0.305\ (\pm 0.124)$ & $0.090$ & $0.446\ (\pm 0.143)$ & $\bm{0.463}\ (\pm 0.184)$ & $0.327\ (\pm 0.155)$ \\
			&$-1$ & $0.129\ (\pm 0.054)$ & $\bm{0.310}\ (\pm 0.215)$ & $0.181\ (\pm 0.111)$ & $0.090$ & $\bm{0.387}\ (\pm 0.144)$ & $0.167\ (\pm 0.116)$ & $0.232\ (\pm 0.152)$ \\
			&$0$  & $0.123\ (\pm 0.053)$ & $0.089\ (\pm 0.049)$ 	 & $0.107\ (\pm 0.052)$ & $0.090$ & $0.355\ (\pm 0.142)$ & $0.115\ (\pm 0.107)$ & $0.187\ (\pm 0.138)$ \\
			&$1$  & $0.125\ (\pm 0.053)$ & $\bm{0.310}\ (\pm 0.193)$ & $0.171\ (\pm 0.095)$ & $0.090$ & $\bm{0.398}\ (\pm 0.140)$ & $0.169\ (\pm 0.111)$ & $0.238\ (\pm 0.142)$ \\
			&$2$  & $0.128\ (\pm 0.054)$ & $\bm{0.671}\ (\pm 0.241)$ & $0.283\ (\pm 0.109)$ & $0.090$ & $0.458\ (\pm 0.136)$ & $\bm{0.484}\ (\pm 0.165)$ & $0.337\ (\pm 0.144)$ \\
			&$3$  & $0.131\ (\pm 0.054)$ & $\bm{0.895}\ (\pm 0.181)$ & $0.370\ (\pm 0.099)$ & $0.090$ & $0.500\ (\pm 0.134)$ & $\bm{0.826}\ (\pm 0.140)$ & $0.413\ (\pm 0.138)$ \\
			&$4$  & $0.134\ (\pm 0.055)$ & $\bm{0.973}\ (\pm 0.095)$ & $0.429\ (\pm 0.092)$ & $0.090$ & $0.530\ (\pm 0.130)$ & $\bm{0.966}\ (\pm 0.066)$ & $0.466\ (\pm 0.127)$ \\
			&$5$  & $0.136\ (\pm 0.054)$ & $\bm{0.993}\ (\pm 0.035)$ & $0.468\ (\pm 0.088)$ & $0.090$ & $0.550\ (\pm 0.127)$ & $\bm{0.992}\ (\pm 0.024)$ & $0.501\ (\pm 0.119)$ \\
			\midrule
			\multirow{11}{*}{S3}
			&$-5$ & $0.255\ (\pm 0.049)$ & $\bm{1.000}\ (\pm 0.000)$ & $0.594\ (\pm 0.064)$ & $0.190$ & $0.674\ (\pm 0.095)$ & $\bm{0.991}\ (\pm 0.027)$ & $0.616\ (\pm 0.089)$ \\
			&$-4$ & $0.249\ (\pm 0.049)$ & $\bm{0.998}\ (\pm 0.010)$ & $0.564\ (\pm 0.069)$ & $0.190$ & $0.651\ (\pm 0.101)$ & $\bm{0.965}\ (\pm 0.070)$ & $0.592\ (\pm 0.101)$ \\
			&$-3$ & $0.242\ (\pm 0.050)$ & $\bm{0.968}\ (\pm 0.059)$ & $0.518\ (\pm 0.081)$ & $0.190$ & $0.627\ (\pm 0.108)$ & $\bm{0.892}\ (\pm 0.130)$ & $0.565\ (\pm 0.117)$ \\
			&$-2$ & $0.232\ (\pm 0.050)$ & $\bm{0.806}\ (\pm 0.185)$ & $0.437\ (\pm 0.103)$ & $0.190$ & $0.592\ (\pm 0.120)$ & $\bm{0.663}\ (\pm 0.191)$ & $0.514\ (\pm 0.142)$ \\
			&$-1$ & $0.216\ (\pm 0.052)$ & $\bm{0.443}\ (\pm 0.242)$ & $0.302\ (\pm 0.127)$ & $0.190$ & $\bm{0.515}\ (\pm 0.134)$ & $0.306\ (\pm 0.161)$ & $0.390\ (\pm 0.169)$ \\
			&$0$  & $0.203\ (\pm 0.049)$ & $0.190\ (\pm 0.151)$ 	 & $0.190\ (\pm 0.087)$ & $0.190$ & $0.437\ (\pm 0.128)$ & $0.217\ (\pm 0.122)$ & $0.283\ (\pm 0.156)$ \\
			&$1$  & $0.217\ (\pm 0.050)$ & $\bm{0.534}\ (\pm 0.244)$ & $0.324\ (\pm 0.130)$ & $0.190$ & $\bm{0.527}\ (\pm 0.128)$ & $0.336\ (\pm 0.161)$ & $0.407\ (\pm 0.155)$ \\
			&$2$  & $0.241\ (\pm 0.059)$ & $\bm{0.835}\ (\pm 0.182)$ & $0.472\ (\pm 0.116)$ & $0.190$ & $0.597\ (\pm 0.117)$ & $\bm{0.697}\ (\pm 0.172)$ & $0.523\ (\pm 0.134)$ \\
			&$3$  & $0.267\ (\pm 0.064)$ & $\bm{0.910}\ (\pm 0.148)$ & $0.546\ (\pm 0.105)$ & $0.190$ & $0.628\ (\pm 0.109)$ & $\bm{0.904}\ (\pm 0.117)$ & $0.575\ (\pm 0.115)$ \\
			&$4$  & $0.299\ (\pm 0.067)$ & $\bm{0.950}\ (\pm 0.072)$ & $0.589\ (\pm 0.083)$ & $0.190$ & $0.649\ (\pm 0.104)$ & $\bm{0.969}\ (\pm 0.064)$ & $0.602\ (\pm 0.100)$ \\
			&$5$  & $0.322\ (\pm 0.065)$ & $\bm{0.985}\ (\pm 0.026)$ & $0.615\ (\pm 0.069)$ & $0.190$ & $0.670\ (\pm 0.099)$ & $\bm{0.992}\ (\pm 0.024)$ & $0.625\ (\pm 0.088)$ \\
			\bottomrule
	\end{tabular}}
\end{table}
\newpage
\subsection{Case Study Data Visualisation} \label{sec:escdata}

\begin{figure}[!h]
	\centering
	\begin{subfigure}{0.24\columnwidth}
		\centering
		\includegraphics[width=\columnwidth]{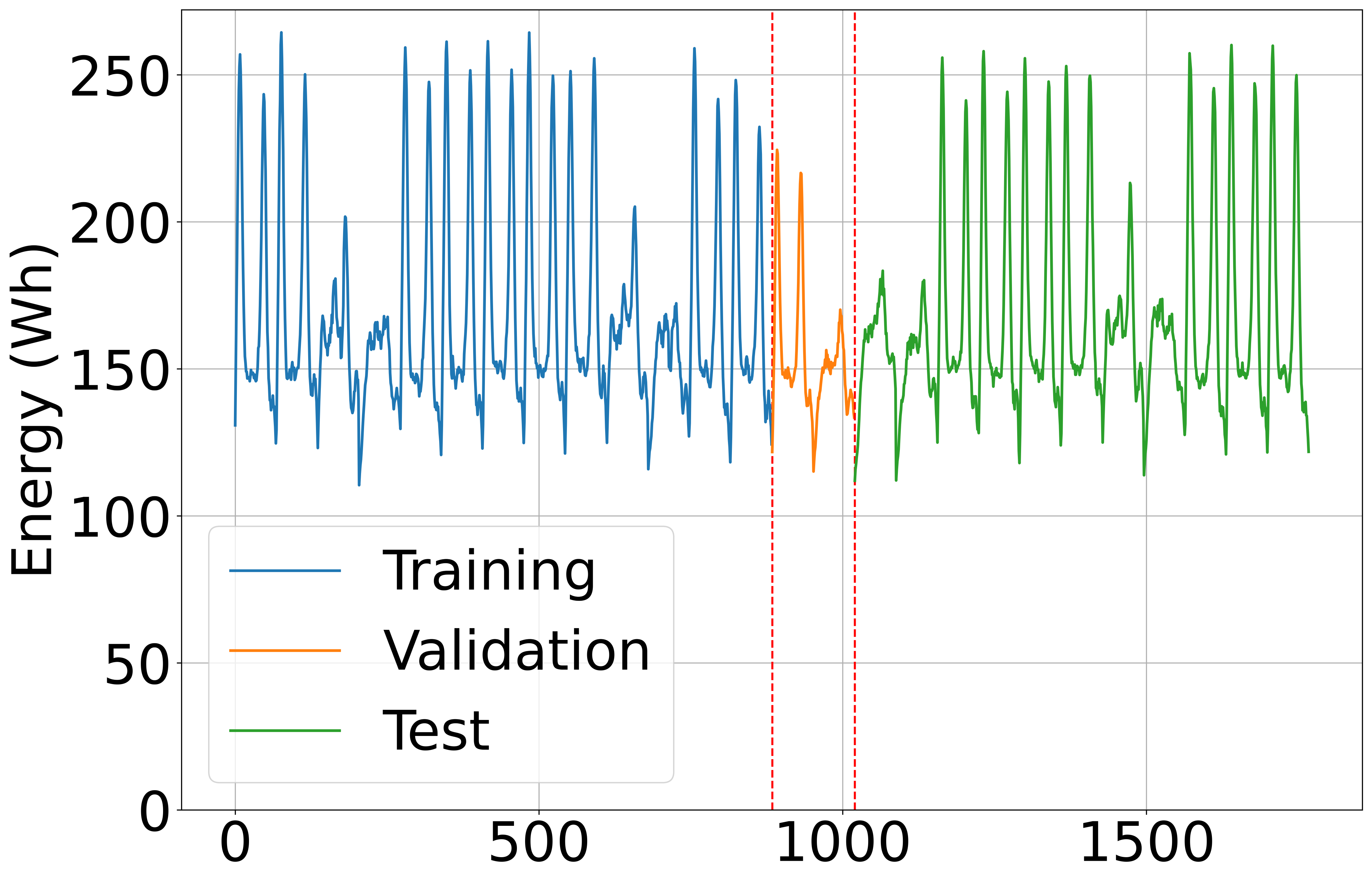}
		\caption{Escalator 00}
		\label{fig:esc_00}
	\end{subfigure}
	\hfill
	\begin{subfigure}{0.24\columnwidth}
		\centering
		\includegraphics[width=\columnwidth]{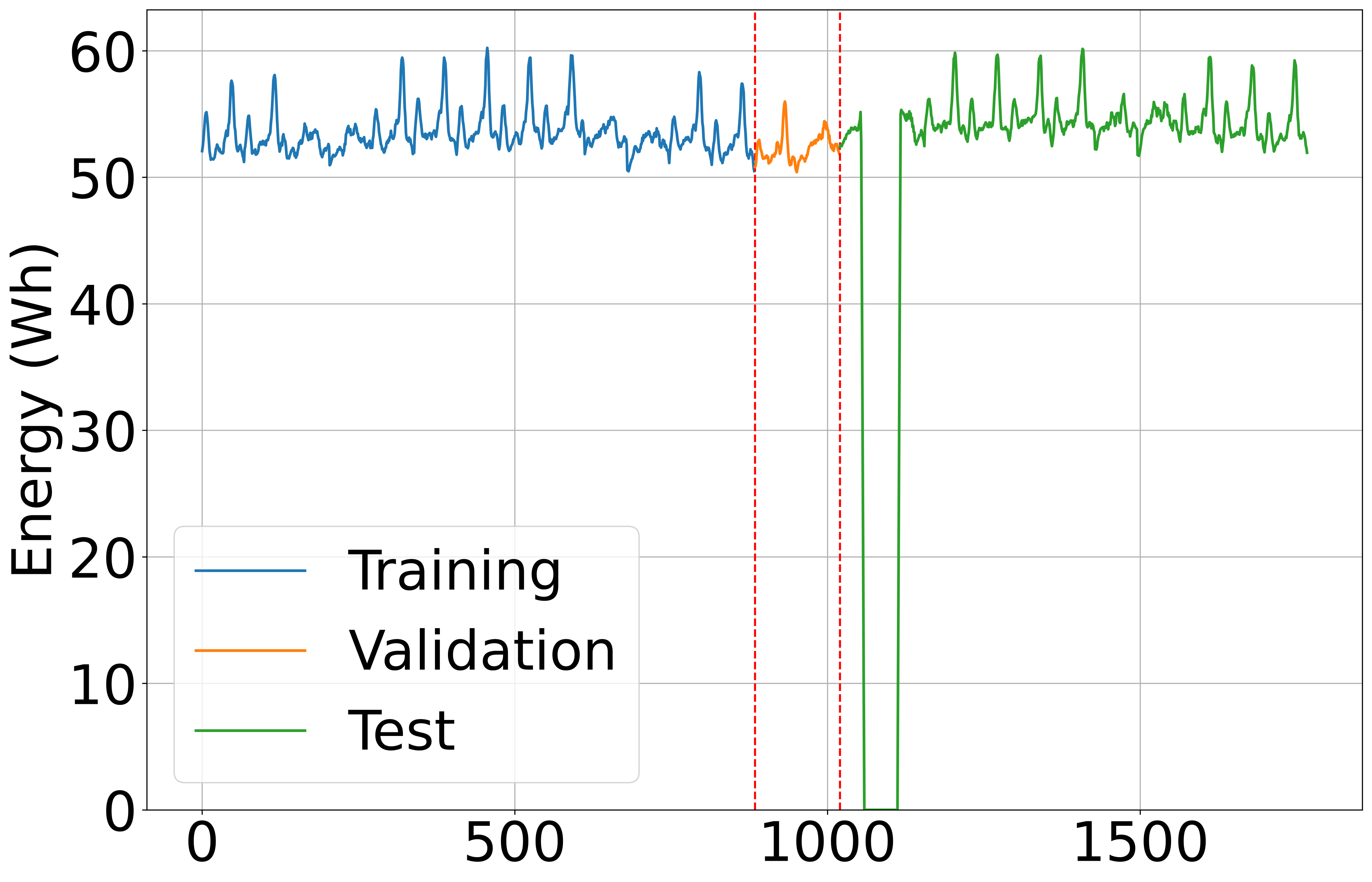}
		\caption{Escalator 08}
		\label{fig:esc_08}
	\end{subfigure}
	\hfill
	\begin{subfigure}{0.24\columnwidth}
		\centering
		\includegraphics[width=\columnwidth]{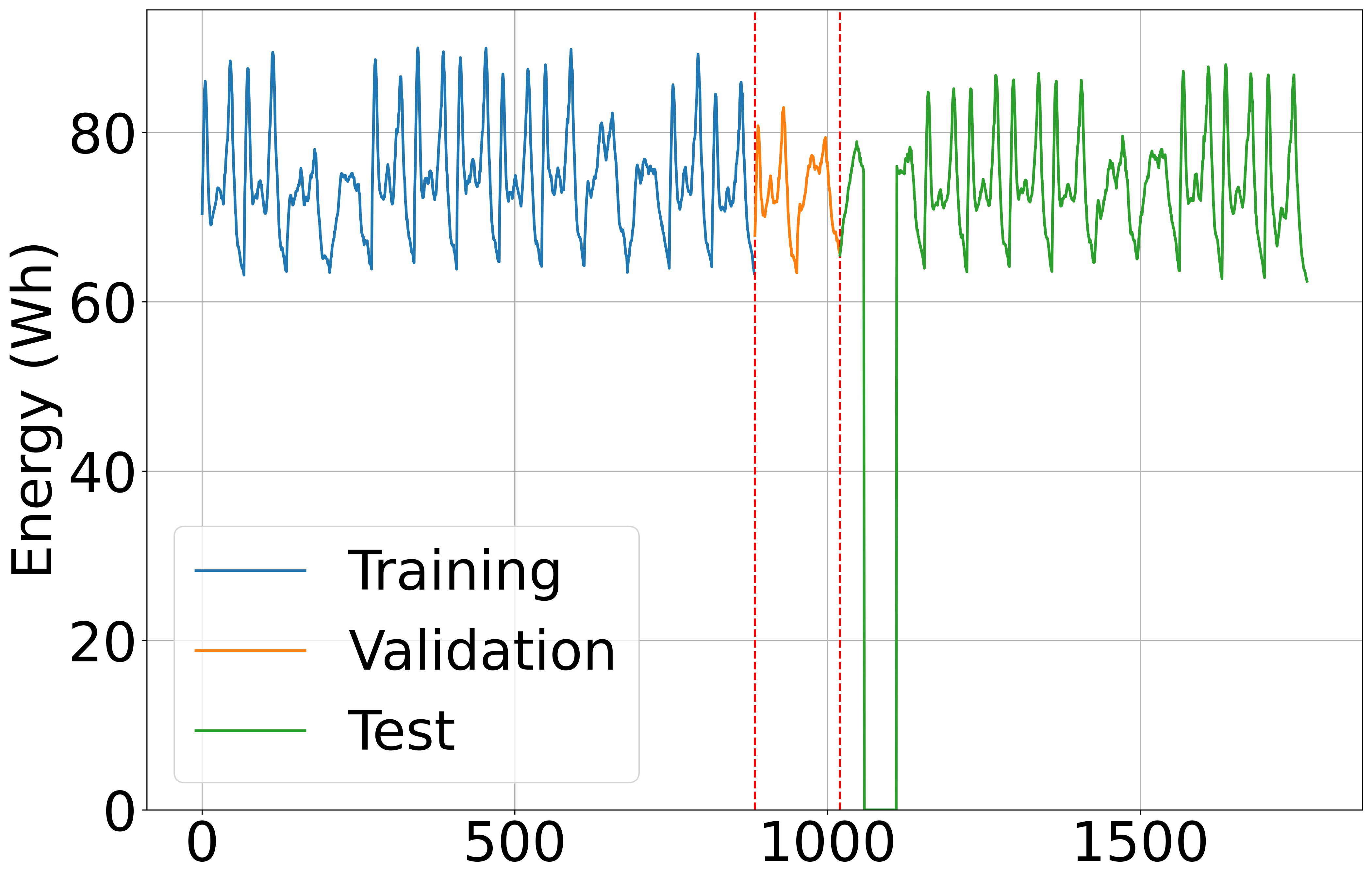}
		\caption{Escalator 13}
		\label{fig:esc_13}
	\end{subfigure}
	\hfill
	\begin{subfigure}{0.24\columnwidth}
		\centering
		\includegraphics[width=\columnwidth]{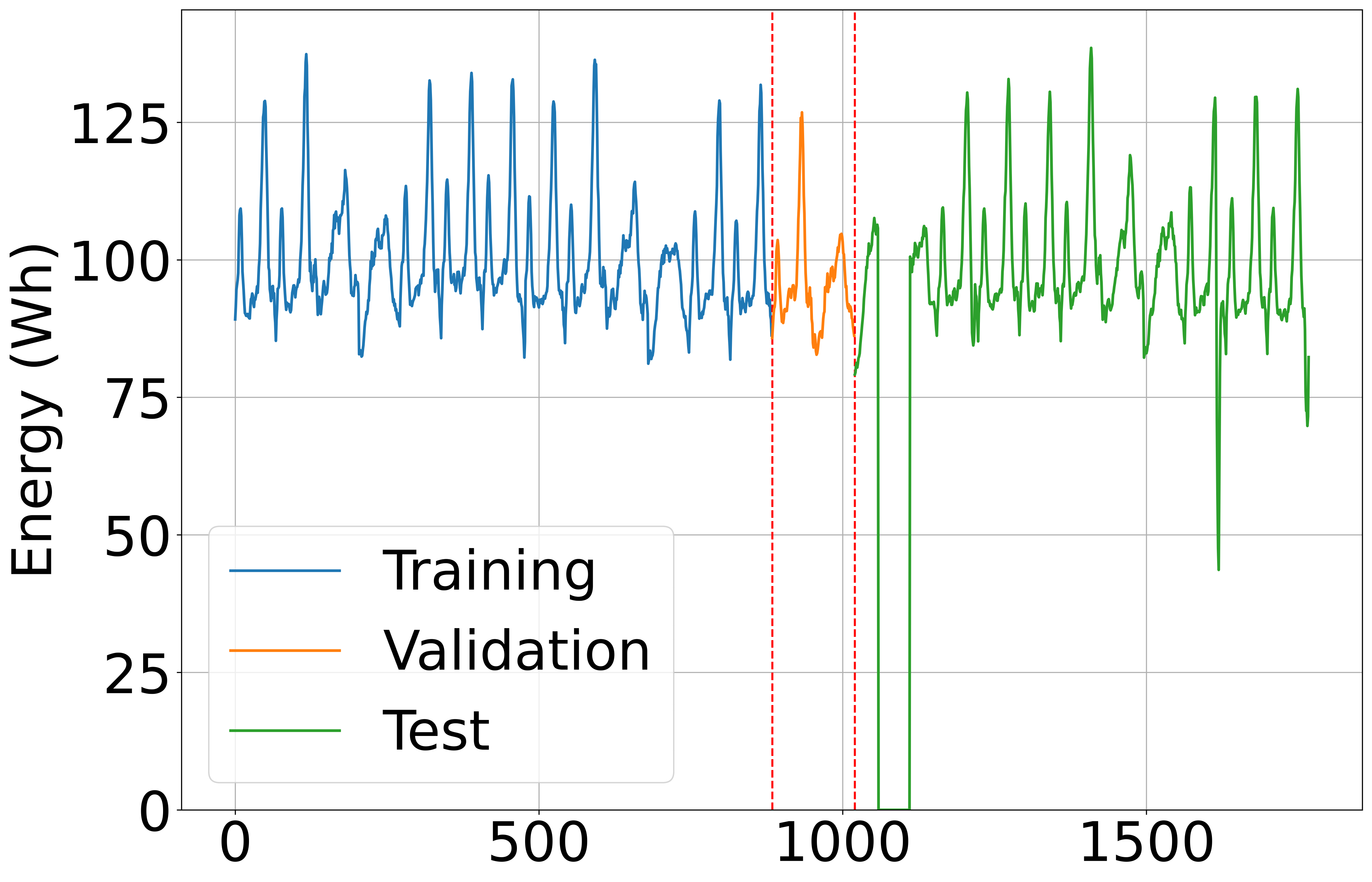}
		\caption{Escalator 14}
		\label{fig:esc_14}
	\end{subfigure}
	\caption{
		\textbf{Energy consumption data of selected monitored escalators across training, validation, and test data.}
		This figure presents the energy consumption patterns (Wh) of four escalators (Escalators 00, 08, 13, and 14) over time, segmented into training, validation, and test periods. The x-axis represents the time index, while the y-axis indicates energy consumption. Each subplot corresponds to a different escalator, highlighting distinct operational behaviours and energy profiles. 
	}
	\label{fig:data_casestudy}
\end{figure}

Here, we illustrate the energy data of Escalators 00, 08, 13, and 14. Escalator 00 represents the normal escalator behaviour, while escalators 08, 13, and 14 depict a period of non-energy reflecting abnormal operating conditions possibly caused by a fault or accidental shutdowns.
	\end{document}